\newcommand{\ds}{{\Deltab_{\Sigmab}}}
\newcommand{\mt}{\mub_{T}}
\newcommand{\hmt}{\hat\mub_{T}}
\newcommand{\St}{\Sigmab_{TT}}
\newcommand{\hSt}{\Sbb_{TT}}     
\newcommand{\Sti}{\Sigmab_{TT}^{-1}}
\newcommand{\hSti}{\Sbb_{TT}^{-1}}
\newcommand{\sbt}{\sgn(\betab_T)}
\newcommand{\bt}{{\betab_T}}
\newcommand{\snbt}{\norm{\betab_T}_{\St}^2}
\newcommand{\hsnbt}{{\hmt'\hSti\hmt}}
\newcommand{\nbt}{\norm{\betab_T}_{\St}}
\newcommand{\hmp}[1]{\hat\mub_{#1}}
\newcommand{\hSip}[1]{\Sbb_{#1}^{-1}}
\newcommand{\hSp}[1]{\Sbb_{#1}}
\newcommand{\tmp}[1]{\mub_{#1}}
\newcommand{\tSip}[1]{\Sigmab_{#1}^{-1}}
\newcommand{\tSp}[1]{\Sigmab_{#1}}
\newcommand{\nr}{\frac{n_1n_2}{n(n-2)}}
\newcommand{\vsda}{\hat\vb^{\rm SDA}}
\def\supp{\mathop{\text{supp}\kern.2ex}}
\def\argmin{\mathop{\text{\rm arg\,min}}}
\def\argmax{\mathop{\text{\rm arg\,max}}}
\let\hat\widehat
\let\tilde\widetilde
\def\ds{\displaystyle}
\def\1{{(1)}}
\def\2{{(2)}}
\long\def\comment#1{}
\numberwithin{equation}{section}
\theoremstyle{plain}
\title{\huge Optimal Feature Selection in High-Dimensional
  Discriminant Analysis}
\date{}
\author{
Mladen Kolar\thanks{Machine Learning Department, Carnegie Mellon University, Pittsburgh, PA 15217, USA; e-mail: {\tt
mladenk@cs.cmu.edu}.} ~ and ~Han Liu\thanks{Department of Operations Research and Financial Engineering, Princeton University, Princeton, NJ 08544, USA; e-mail: {\tt
hanliu@princeton.edu} Research supported by  NSF Grant
IIS--1116730.}}
\begin{document}

\maketitle

\begin{abstract}
 We consider the high-dimensional discriminant analysis problem. For
  this problem, different methods have been proposed and justified by
  establishing exact convergence rates for the classification risk, as
  well as the $\ell_2$ convergence results to the discriminative
  rule. However, sharp theoretical analysis for the variable selection
  performance of these procedures have not been established, even
  though model interpretation is of fundamental importance in
  scientific data analysis.  This paper bridges the gap by providing
  sharp sufficient conditions for consistent variable selection using
  the sparse discriminant analysis \citep{mai2012}. Through careful
  analysis, we establish rates of convergence that are significantly
  faster than the best known results and admit an optimal scaling of
  the sample size $n$, dimensionality $p$, and sparsity level $s$ in
  the high-dimensional setting.  Sufficient conditions are
  complemented by the necessary information theoretic limits on the
  variable selection problem in the context of high-dimensional
  discriminant analysis. Exploiting a numerical equivalence result,
  our method also establish the optimal results for the ROAD estimator
  \citep{fan2010road} and the sparse optimal scaling estimator
  \citep{clemmensen2011sparse}.  Furthermore, we analyze an exhaustive
  search procedure, whose performance serves as a benchmark, and show
  that it is variable selection consistent under weaker conditions.
  Extensive simulations demonstrating the sharpness of the bounds are
  also provided.
\end{abstract}

\textbf{Keywords}: {high-dimensional statistics}; {discriminant
  analysis}; {variable selection}; {optimal rates of convergence}

\section{Introduction}
\label{sec:introduction}

We consider the problem of binary classification with high-dimensional
features. More specifically, given $n$ data points, $\{(\xb_i,y_i),
i=1,...,n\}$, sampled from a joint distribution of $(\Xb,Y)\in
\mathbb{R}^{p}\times \{1,2\}$, we want to determine the class label
$y$ for a new data point $\xb \in \mathbb{R}^{p}$.

Let $p_1(\xb)$ and $p_2(\xb)$ be the density functions of $\Xb$ given
$Y=1$ (class 1) and $Y=2$ (class 2) respectively, and the prior
probabilities $\pi_1=\mathbb{P}(Y=1)$,
$\pi_2=\mathbb{P}(Y=2)$. Classical multivariate analysis theory shows
that the Bayes rule classifies a new data point $\xb$ to class $2$ if
and only if
\begin{equation}
\log\biggl( \frac{p_2(\xb)}{ p_1(\xb)}\biggr) + \log
\biggl(\frac{\pi_2}{\pi_1}\biggr) >0
\label{bayes_rule}.
\end{equation}
The Bayes rule usually serves as an oracle benchmark, since, in
practical data analysis, the class conditional densities $p_{2}(\xb)$
and $p_{1}(\xb)$ are unknown and need to be estimated from the data.

Throughout the paper, we assume that the class conditional
densities $p_{1}(\xb)$ and $p_{2}(\xb)$ are Gaussian. 
That is, we
assume that 
\begin{equation}
\label{eq:model}
\begin{aligned}
  \Xb | Y = 1 \sim \Ncal(\mub_1, \Sigmab)~~\text{and}~~
  \Xb | Y = 2 \sim \Ncal(\mub_2, \Sigmab).
\end{aligned}
\end{equation}
This assumption leads us to linear
discriminant analysis (LDA) and the Bayes rule in
\eqref{bayes_rule} becomes
\begin{equation}
  g(\xb; \mub_1,\mub_2,\Sigmab):=
  \left\{
\begin{array}{ll}
2 & \text{if }
\Bigl(\xb-(\mub_1+\mub_2)/2\Bigr)'\Sigmab^{-1}\mub
+ \log\left(\pi_{2}/\pi_{1} \right)> 0 \\
1 & \text{otherwise}
\end{array}
\right.
 \label{eq::LDAEQ}
\end{equation}
where $\mub = \mub_2 - \mub_1$.  Theoretical properties of the plug-in
rule $g(\xb; \hat\mub_1,\hat\mub_2,\hat\Sigmab)$, where
$(\hat\mub_1,\hat\mub_2,\hat\Sigmab)$ are sample estimates of
$(\mub_1,\mub_2,\Sigmab)$, have been well studied when the dimension
$p$ is low \citep{ande:1984}.

In high-dimensions, the standard plug-in rule works poorly and may
even fail completely.  For example, \citet{bickellevina:04} show that
the classical low dimensional normal-based linear discriminant
analysis is asymptotically equivalent to random guessing when the
dimension $p$ increases at a rate comparable to the sample size
$n$. To overcome this curse of dimensionality, it is common to impose
certain sparsity assumptions on the model and then estimate the
high-dimensional discriminant rule using plug-in estimators. The most
popular approach is to assume that both $\Sigmab$ and $\mub$ are
sparse. Under this assumption, \cite{shao2011lda} propose to use a
thresholding procedure to estimate $\Sigmab$ and $\mub$ and then plug
them into the Bayes rule.  In a more extreme case, \cite{tib2002,
  wang2007improved, fan2008high} assume that $\Sigmab = \Ib$ and
estimate $\mub$ using a shrinkage method. Another common approach is
to assume that $\Sigmab^{-1}$ and $\mub$ are sparse. Under this
assumption, \cite{scout} propose the scout method which estimates
$\Sigmab^{-1}$ using a shrunken estimator.  Though these plug-in
approaches are simple, they are not appropriate for conducting
variable selection in the discriminant analysis setting.  As has been
elaborated in \cite{cai2011lda} and \cite{mai2012}, for variable
selection in high-dimensional discriminant analysis, we need to
directly impose sparsity assumptions on the Bayes discriminant
direction $\betab = \Sigmab^{-1}\mub$ instead of separately on
$\Sigmab$ and $\mub$. In particular, it is assumed that $\betab =
(\betab_T', \zero')'$ for $T = \{1, \ldots, s\}$.  Their key
observation comes from the fact that the Fisher's discriminant rule
depends on $\Sigmab$ and $\mub$ only through the product
$\Sigmab^{-1}\mub$. Furthermore, in the high-dimensional setting, it
is scientifically meaningful that only a small set of variables are
relevant to classification, which is equivalent to the assumption that
$\betab$ is sparse. On a simple example of tumor classification,
\cite{mai2012} elaborate why it is scientifically more informative to
directly impose sparsity assumption on $\betab$ instead of on $\mub$
(For more details, see Section 2 of their paper).  In addition,
\cite{cai2011lda} point out that the sparsity assumption on $\betab$
is much weaker than imposing sparsity assumptions $\Sigmab^{-1}$ and
$\mub$ separately.  A number of authors have also studied
classification in this setting \citep{wu2009sparse, fan2010road,
  witten2011lda, clemmensen2011sparse, cai2011lda, mai2012}.
  
In this paper, we adopt the same assumption that $\betab$ is sparse
and focus on analyzing the SDA (Sparse Discriminant Analysis) proposed
by \citet{mai2012}. This method estimates the discriminant direction
$\betab$ (More precisely, they estimate a quantity that is
proportional to $\betab$.) and our focus will be on variable selection
consistency, that is, whether this method can recover the set $T$ with
high probability.  In a recent work, \citet{mai2012note} prove that
the SDA estimator is numerically equivalent to the ROAD estimator
\citep{fan2010road} and the sparse optimal scaling estimator
\citep{clemmensen2011sparse}. By exploiting this result, our
theoretical analysis provides a unified theoretical justification for
all these three methods.

\subsection{Main Results}
\label{sec::mainresult}

Let $n_1 = |\{i\ :\ y_i = 1\}|$ and $n_2 = n - n_1$.  The SDA
estimator is obtained by solving the following least squares
optimization problem
\begin{equation}
  \label{eq:opt_problem1}
  \min_{\vb \in \RR^{p}}\ 
  \frac{1}{2(n-2)}
  \sum_{i\in[n]}
  (z_i - \vb'(\xb_i - \bar\xb))^2 + \lambda \norm{\vb}_1,
\end{equation}
where $[n]$ denotes the set $\{1,\ldots,n\}$, $\bar \xb =
n^{-1}\sum_i\xb_i$ and the vector $\zb\in\RR^{n}$ encodes the class
labels as $z_i = n_2/n$ if $y_i=1$ and $z_i = -n_1/n$ if $y_i=2$. Here
$\lambda>0$ is a regularization parameter.

The SDA estimator in \eqref{eq:opt_problem1} uses an $\ell_1$-norm
penalty to estimate a sparse $\vb$ and avoid the curse of
dimensionality. \cite{mai2012} studied its variable selection property
under a different encoding scheme of the response $z_{i}$. However, as
we show later, different coding schemes do not affect the results.
When the regularization parameter $\lambda$ is set to zero, the SDA
estimator reduces to the classical Fisher's discriminant rule.

The main focus of the paper is to sharply characterize the variable
selection performance of the SDA estimator.  From a theoretical
perspective, unlike the high dimensional regression setting where
sharp theoretical results exist for prediction, estimation, and
variable selection consistency, most existing theories for
high-dimensional discriminant analysis are either on estimation
consistency or risk consistency, but not on variable selection
consistency \citep[see, for
example,][]{fan2010road,cai2011lda,shao2011lda}. \citet{mai2012}
provide a variable selection consistency result for the SDA estimator
in \eqref{eq:opt_problem1}. However, as we will show later, their
obtained scaling in terms of $(n,p,s)$ is not optimal. Though some
theoretical analysis of the $\ell_{1}$-norm penalized M-estimators
exists (see \cite{Wain:09a, negahban2010unified}), these techniques
are not applicable to analyze the estimator given in
\eqref{eq:opt_problem1}.  In high-dimensional discriminant analysis
the underlying statistical model is fundamentally different from that
of the regression analysis. At a high level, to establish variable
selection consistency of the SDA estimator, we characterize the
Karush-Kuhn-Tucker (KKT) conditions for the optimization problem in
\eqref{eq:opt_problem1}.  Unlike the $\ell_1$-norm penalized least
squares regression, which directly estimates the regression
coefficients, the solution to \eqref{eq:opt_problem1} is a quantity
that is only proportional to the Bayes rule's direction
$\betab=\Sti\mt$.  To analyze such scaled estimators, we need to
resort to different techniques and utilize sophisticated multivariate
analysis results to characterize the sampling distributions of the
estimated quantities. More specifically, we provide sufficient
conditions under which the SDA estimator is variable selection
consistent with a significantly improved scaling compared to that
obtained by \cite{mai2012}. In addition, we complement these
sufficient conditions with information theoretic limitations on
recovery of the feature set $T$. In particular, we provide lower
bounds on the sample size and the signal level needed to recover the
set of relevant variables by any procedure. We identify the family of
problems for which the estimator \eqref{eq:opt_problem1} is variable
selection optimal.  To provide more insights into the problem, we
analyze an exhaustive search procedure, which requires weaker
conditions to consistently select relevant variables. This estimator,
however, is not practical and serves only as a benchmark.  The
obtained variable selection consistency result also enables us to
establish risk consistency for the SDA estimator.  In addition,
\cite{mai2012note} show that the SDA estimator is numerically
equivalent to the ROAD estimator proposed by \cite{wu2009sparse,
  fan2010road} and the sparse optimal scaling estimator proposed by
\cite{clemmensen2011sparse}.  Therefore, the results provided in this
paper also apply to those estimators.  Some of the main results of
this paper are summarized below.

Let $\vsda$ denote the minimizer of \eqref{eq:opt_problem1}. We show that
if the sample size
\begin{equation}\label{eq::result1}
    n \geq C \rbr{\max_{a\in N}\Sigma_{a|T}}
           \Lambda_{\min}^{-1}(\St)
           s\log\rbr{(p-s)\log(n)},
\end{equation}
where $C$ is a fixed constant which does not scale with $n, p$ and
$s$, $\sigma_{a|T} = \sigma_{aa} - \Sigmab_{aT}\Sti\Sigmab_{Ta}$, and
$\Lambda_{\min}(\Sigmab)$ denotes the minimum eigenvalue of $\Sigmab$,
then the estimated vector $\vsda$ has the same sparsity pattern as the
true $\betab$, thus establishing variable selection consistency (or
sparsistency) for the SDA estimator.  This is the first result that
proves that consistent variable selection in the discriminant analysis
can be done under a similar theoretical scaling as variable selection
in the regression setting (in terms of $n, p$ and $s$).  To
prove~\eqref{eq::result1}, we impose conditions that $ \min_{j \in T}
|\beta_j| $ is not too small and
$\norm{\Sigmab_{NT}\Sigmab_{TT}^{-1}\sbt}_{\infty} \leq 1 - \alpha$
with $\alpha\in(0,1)$, where $N = [p] \bks T$.  The latter one is the
irrepresentable condition, which is commonly used in the $\ell_1$-norm
penalized least squares regression problem
\citep{Zou:2006,Meinshausen:06,ZY07,Wain:09a}.  Let $\beta_{\min}$ be
the magnitude of the smallest absolute value of the non-zero component
of $\betab$. Our analysis of information theoretic limitations reveals
that, whenever $n < \small{C_{1}\beta_{\min}^{-2}\log(p-s)}$, no
procedure can reliably recover the set $T$. In particular, under
certain regimes, we establish that the SDA estimator is optimal for
the purpose of variable selection. The analysis of the exhaustive
search decoder reveals a similar result. However, the exhaustive
search decoder does not need the irrepresentable condition to be
satisfied by the covariance matrix. Thorough numerical simulations are
provided to demonstrate the sharpness of our theoretical results.

In a preliminary work, \citet{kolar13road} present some variable
selection consistency results related to the ROAD estimator under the
assumption that $\pi_{1}=\pi_{2}=1/2$.  However, it is hard to
directly compare their analysis with that of \citet{mai2012} to
understand why an improved scaling is achievable, since the ROAD
estimator is the solution to a constrained optimization while the SDA
estimator is the solution to an unconstrained optimization.  This
paper analyzes the SDA estimator and is directly comparable with the
result of \citet{mai2012}. As we will discuss later, our analysis
attains better scaling due to a more careful characterization of the
sampling distributions of several scaled statistics. In contrast, the
analysis in \citet{mai2012} hinges on the sup-norm control of the
deviation of the sample mean and covariance to their population
quantities, which is not sufficient to obtain the optimal rate.  Using
the numerical equivalence between the SDA and the ROAD estimator, the
theoretical results of this paper also apply on the ROAD
estimator. In addition, we also study an exhaustive search decoder and
information theoretic limits on the variable selection in
high-dimensional discriminant analysis. Furthermore, we provide
discussions on risk consistency and approximate sparsity, which shed
light on future investigations.

The rest of this paper is organized as follows. In the rest of this
section, we introduce some more notation.  In $\S$\ref{sec:sda}, we
study sparsistency of the SDA estimator.  An information theoretic
lower bound is given in $\S$\ref{sec:lower-bound}. We characterize the
behavior of the exhaustive search procedure in
$\S$\ref{sec:exhaustive_search}. Consequences of our results are
discussed in more details in $\S$\ref{sec:consequence}.  Numerical
simulations that illustrate our theoretical findings are given in
$\S$\ref{sec:simulation}.  We conclude the paper with a discussion and
some results on the risk consistency and approximate sparsity in
$\S$\ref{sec:discussion}. Technical results and proofs are deferred to
the appendix.

\subsection{Notation}

We denote $[n]$ to be the set $\{1,\ldots,n\}$.  Let $T
\subseteq [p]$ be an index set, we denote $\betab_T$ to be the
subvector containing the entries of the vector $\betab$ indexed by the
set $T$, and $\Xb_T$ denotes the submatrix containing the columns of
$\Xb$ indexed by $T$. Similarly, we denote $\Ab_{TT}$ to be the
submatrix of $\Ab$ with rows and columns indexed by $T$. For a vector
$\ab \in \RR^n$, we denote ${\rm supp}(\ab) = \{j\ :\ a_j \neq 0\}$ to
be the support set. We also use $\norm{\ab}_q$, $q \in [1,\infty)$, to
be the $\ell_q$-norm defined as $\norm{\ab}_q = (\sum_{i\in[n]}
|a_i|^q)^{1/q}$ with the usual extensions for $q \in \{0,\infty\}$,
that is, $\norm{\ab}_0 = |{\rm supp}(\ab)|$ and $\norm{\ab}_\infty =
\max_{i\in[n]}|a_i|$.  For a matrix $\Ab \in \RR^{n \times p}$, we
denote $\opnorm{\Ab}{\infty} = \max_{i\in[n]}\sum_{j\in[p]}|a_{ij}|$
the $\ell_\infty$ operator norm.  For a symmetric positive definite
matrix $\Ab \in \RR^{p \times p}$ we denote $\Lambda_{\min}(\Ab)$ and
$\Lambda_{\max}(\Ab)$ to be the smallest and largest eigenvalues,
respectively. We also represent the quadratic form $\norm{\ab}_{\Ab}^2
= \ab'\Ab\ab$ for a symmetric positive definite matrix $\Ab$.  We
denote $\Ib_{n}$ to be the $n\times n$ identity matrix and $\one_n$ to
be the $n \times 1$ vector with all components equal to $1$. For two
sequences $\{a_n\}$ and $\{b_n\}$, we use $a_n = \Ocal(b_n)$ to denote
that $a_n < Cb_n$ for some finite positive constant $C$.  We also
denote $a_{n} = \Ocal(b_{n})$ to be $b_{n}\gtrsim a_{n}$. If $a_{n} =
\Ocal(b_{n})$ and $b_{n} = \Ocal(a_{n})$, we denote it to be
$a_{n}\asymp b_{n}$. The notation $a_n = o(b_n)$ is used to denote
that $a_nb_n^{-1}\rightarrow 0$.

\section{Sparsistency of the SDA Estimator}
\label{sec:sda}

In this section, we provide sharp sparsistency analysis for the SDA
estimator defined in \eqref{eq:opt_problem1}.  Our analysis decomposes
into two parts: (i) We first analyze the population version of the SDA
estimator in which we assume that $\Sigmab$, $\mub_{1}$, and
$\mub_{2}$ are known.  The solution to the population problem provides
us insights on the variable selection problem and allows us to write
down sufficient conditions for consistent variable selection. (ii) We
then extend the analysis from the population problem to the sample
version of the problem in \eqref{eq:opt_problem1}.  For this, we need
to replace $\Sigmab$, $\mub_{1}$, and $\mub_{2}$ by their
corresponding sample estimates $\hat{\Sigmab}$, $\hat{\mub}_{1}$, and
$\hat{\mub}_{2}$.  The statement of the main result is provided in
$\S$\ref{sec:sparsistency_sda_sample} with an outline of the proof in
$\S$\ref{sec:sparsistency_sda_sample:proof}.

\subsection{Population Version Analysis of the SDA Estimator}
\label{sec:sda_population}

We first lay out conditions that characterize the solution to the
population version of the SDA optimization problem. 

Let $\Xb_1 \in \RR^{n_1 \times p}$ be the matrix with rows containing
data points from the first class and similarly define $\Xb_2 \in
\RR^{n_2\times p}$ to be the matrix with rows containing data points
from the second class.  We denote $ \Hb_1 = \Ib_{n_1} -
n_1^{-1}\one_{n_1}\one_{n_1}'$ and $ \Hb_2 = \Ib_{n_2} -
n_2^{-1}\one_{n_2}\one_{n_2}'$ to be the centering matrices.  We
define the following quantities
\begin{gather*}
  \hat \mub_1 = n_1^{-1}\sum_{i:y_i=1}\xb_i = n_1^{-1}\Xb_1'\one_{n_1},
\quad
  \hat \mub_2 = n_2^{-1}\sum_{i:y_i=2}\xb_i = n_2^{-1}\Xb_2'\one_{n_2}, \\ 
  \hat \mub = \hat\mub_2 - \hat\mub_1, \\ 
\Sbb_1 = (n_1-1)^{-1}\Xb_1'\Hb_1\Xb_1, 
\quad
\Sbb_2 = (n_2-1)^{-1}\Xb_2'\Hb_2\Xb_2, \\
\Sbb = (n - 2)^{-1}( (n_1-1)\Sbb_1 + (n_2 - 1)\Sbb_2).
\end{gather*}
With this notation, observe that the optimization problem in
\eqref{eq:opt_problem1} can be rewritten as
\[
  \min_{\vb \in \RR^{p}}\ 
  \frac{1}{2}
\vb'
\rbr{
\Sbb + 
\frac{n_1n_2}{n(n-2)}\hat\mub\hat\mub'
}
\vb 
-
\frac{n_1n_2}{n(n-2)}\vb'\hat\mub + \lambda\norm{\vb}_1,
\]
where we have dropped terms that do not depend on $\vb$.  Therefore,
we define the population version of the SDA optimization problem as
\begin{equation}
  \label{eq:population:optimization}
  \min_{\wb}\ 
  \frac{1}{2}
\wb'
\rbr{
\Sigmab + 
\pi_1\pi_2\mub\mub'
}
\wb 
-
\pi_1\pi_2\wb'\mub + \lambda\norm{\wb}_1,
\end{equation}
Let $\hat\wb$ be the solution of \eqref{eq:population:optimization}.
We are aiming to characterize conditions under which the solution
$\hat \wb$ recovers the sparsity pattern of $\betab=\Sigmab^{-1}\mub$.
Recall that $T = \supp(\betab) = \{1,\ldots, s\}$ denotes the true
support set and $N = [p] \bks T$, under the sparsity assumption, we
have
\begin{equation}
\label{eq:beta_t_mu_n}
  \betab_T = \Sigmab_{TT}^{-1}\mub_{T}
\quad\text{and}\quad
  \mub_{N} = \Sigmab_{NT}\Sigmab_{TT}^{-1}\mub_{T}.
\end{equation}
We define $\beta_{\min}$ as 
\begin{equation}
  \label{eq:beta_min:population}
  \beta_{\min} = \min_{a \in T} |\beta_a|.
\end{equation} 
The following theorem characterizes the solution to the population
version of the SDA optimization problem in
\eqref{eq:population:optimization}.
\begin{theorem}
  \label{thm:sda_population}
  Let $\alpha \in (0, 1]$ be a constant and $\hat\wb$ be the solution to the
  problem in \eqref{eq:population:optimization}. Under the assumptions that 
    \begin{align}
 &  \norm{\Sigmab_{NT}\Sigmab_{TT}^{-1}\sbt}_{\infty} \leq 1 - \alpha,
      \label{eq:assum:irrepresentable}
\\
&  \pi_1\pi_2\frac{1+\lambda\norm{\bt}_1}{1+\pi_1\pi_2\snbt}
   \beta_{\min} > 
    \lambda \norm{\Sti\sbt}_\infty,  \label{eq:assum:beta_min}
   \end{align}  
  we have $\hat\wb = (\hat\wb_T', \zero')$ with 
  \begin{equation} \label{wbT}
    \hat\wb_T = \pi_1\pi_2 
    \frac{1+\lambda\norm{\bt}_1}
         {1+\pi_1\pi_2\snbt}
    \bt - 
    \lambda\Sigmab_{TT}^{-1}\sbt.
  \end{equation}
  Furthermore, we have $\sgn(\hat\wb_T) = \sgn(\betab_T)$.
\end{theorem}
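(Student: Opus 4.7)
The plan is to apply the standard primal--dual witness (PDW) construction tailored to this $\ell_1$-regularized quadratic. Since $\Sigmab \succ 0$, the Hessian $\Sigmab + \pi_1\pi_2\mub\mub'$ is positive definite, so the objective in \eqref{eq:population:optimization} is strictly convex and has a unique minimizer characterized by the KKT condition
\[
(\Sigmab + \pi_1\pi_2\mub\mub')\hat\wb - \pi_1\pi_2\mub + \lambda\hat\zb = \zero, \qquad \hat\zb \in \partial\|\hat\wb\|_1.
\]
I will construct a candidate pair $(\hat\wb,\hat\zb)$ by setting $\hat\wb_N = \zero$ and $\hat\zb_T = \sbt$, solve the $T$-block of the KKT system explicitly for $\hat\wb_T$, read off $\hat\zb_N$ from the $N$-block, and then verify (i) strict dual feasibility $\|\hat\zb_N\|_\infty < 1$ and (ii) the sign identity $\sgn(\hat\wb_T) = \sbt$, which justifies the choice $\hat\zb_T = \sbt$. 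Any pair satisfying KKT must equal the unique minimizer, so these items prove the theorem.

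For the $T$-block, the system reduces to $(\St + \pi_1\pi_2\mub_T\mub_T')\hat\wb_T = \pi_1\pi_2\mub_T - \lambda\sbt$. I invert the rank-one perturbation via Sherman--Morrison and simplify using the identities $\Sti\mub_T = \bt$, $\mub_T'\bt = \snbt$, and $\mub_T'\Sti\sbt = \bt'\sbt = \|\bt\|_1$, each of which follows from $\bt = \Sti\mub_T$ in \eqref{eq:beta_t_mu_n}. After a short algebraic simplification, the coefficients should collapse exactly to the closed form in \eqref{wbT}.

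For the $N$-block, I substitute this explicit $\hat\wb_T$ and exploit $\mub_N = \Sigmab_{NT}\bt$ from \eqref{eq:beta_t_mu_n}. The leading $\pi_1\pi_2\mub_N$ contributions should cancel, leaving $\hat\zb_N = \Sigmab_{NT}\Sti\sbt$, at which point strict dual feasibility is immediate from the irrepresentable condition \eqref{eq:assum:irrepresentable}, which yields $\|\hat\zb_N\|_\infty \leq 1-\alpha < 1$. For the sign claim, write $\hat\wb_T = c\,\bt - \lambda\Sti\sbt$ with $c = \pi_1\pi_2(1+\lambda\|\bt\|_1)/(1+\pi_1\pi_2\snbt) > 0$; then a coordinate-wise lower bound gives $|(\hat\wb_T)_j| \geq c\,\beta_{\min} - \lambda\|\Sti\sbt\|_\infty > 0$ by assumption \eqref{eq:assum:beta_min}, with the sign inherited from $\beta_j$.

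The main obstacle is purely algebraic bookkeeping: getting the Sherman--Morrison expansion to collapse exactly to \eqref{wbT}, and having every $\mub_N$-term in the $N$-block cancel so that $\hat\zb_N$ reduces cleanly to $\Sigmab_{NT}\Sti\sbt$. This hinges on the scalar identities $\mub_T'\bt = \snbt$ and $\mub_T'\Sti\sbt = \|\bt\|_1$ aligning with the $1+\pi_1\pi_2\snbt$ denominator and the $1+\lambda\|\bt\|_1$ numerator. The key structural input is the sparsity consequence $\mub_N = \Sigmab_{NT}\bt$, which is precisely what makes the irrepresentable quantity $\Sigmab_{NT}\Sti\sbt$ emerge in the $N$-block after cancellation.
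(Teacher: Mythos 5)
Your proposal is correct and follows essentially the same route as the paper: the paper's ``oracle'' problem \eqref{eq:oracle:population:optimization} is exactly your primal--dual witness $T$-block with the subgradient fixed to $\sbt$, its closed-form solution is obtained by the same Sherman--Morrison collapse using $\mub_T'\bt = \snbt$ and $\mub_T'\Sti\sbt = \|\bt\|_1$, and Lemma~\ref{lem:global_solution} is precisely your $N$-block cancellation reducing dual feasibility to the irrepresentable condition. The sign argument via the coordinatewise bound $c\,\beta_{\min} - \lambda\|\Sti\sbt\|_\infty > 0$ also matches the paper's use of \eqref{eq:assum:beta_min}.
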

Equations \eqref{eq:assum:irrepresentable} and
\eqref{eq:assum:beta_min} provide sufficient conditions under which
the solution to \eqref{eq:population:optimization} recovers the true
support.  The condition in \eqref{eq:assum:irrepresentable} takes the
same form as the irrepresentable condition commonly used in the
$\ell_{1}$-penalized least squares regression problem
\citep{Zou:2006,Meinshausen:06,ZY07,Wain:09a}.
Equation~\eqref{eq:assum:beta_min} specifies that the smallest
component of $\betab_T$ should not be too small compared to the
regularization parameter $\lambda$.  In particular, let
$\lambda=\lambda_0/(1+\pi_1\pi_2\snbt)$ for some $\lambda_0$. Then
\eqref{eq:assum:beta_min} suggests that $\hat\wb_T$ recovers
the true support of $\betab$ as long as $\beta_{\min} \geq
\lambda_0\norm{\Sti\sbt}_\infty$.  Equation~\eqref{wbT} provides an
explicit form for the solution $\hat\wb$, from which we see that the
SDA optimization procedure estimates a scaled version of the optimal
discriminant direction. Whenever $\lambda\neq0$, $\hat \wb$ is a
biased estimator.  However, such estimation bias does not affect the
recovery of the support set $T$ of $\betab$ when $\lambda$ is small
enough.

We present the proof of Theorem~\ref{thm:sda_population}, as the
analysis of the sample version of the SDA estimator will follow the
same lines. We start with the Karush-Kuhn-Tucker (KKT) conditions for
the optimization problem in \eqref{eq:population:optimization}:
\begin{align}
\label{eq:KKT}
  \rbr{\Sigmab + \pi_1\pi_2\mub\mub'}\hat\wb -\pi_1\pi_2\mub+ \lambda \hat\zb & = \zero
\end{align}
where  $\hat\zb \in \partial \norm{\hat\wb}_1$ is an element
of the subdifferential of $\|\cdot\|_{1}$.

Let $\hat\wb_T$ be defined in \eqref{wbT}. We need to show that there
exists a $\hat\zb$ such that the vector $\hat \wb = (\hat \wb_T',
\zero')'$, paired with $\hat\zb$, satisfies the KKT conditions and
$\sgn(\hat{\wb}_T) = \sgn(\bt)$.

The explicit form of $\hat\wb_T$ is obtained as the solution to an
oracle optimization problem, specified in
\eqref{eq:oracle:population:optimization}, where the solution is
forced to be non-zero only on the set $T$. Under the assumptions of
Theorem~\ref{thm:sda_population}, the solution $\hat\wb_T$ to the
oracle optimization problem satisfies $\sgn(\hat{\wb}_T) =
\sgn(\bt)$. We complete the proof by showing that the vector
$(\hat\wb_T', \zero')'$ satisfies the KKT conditions for the full
optimization procedure.

We define the oracle optimization problem to be 
\begin{equation}
  \label{eq:oracle:population:optimization}
  \min_{\wb_T}\ 
  \frac{1}{2}
\wb_T'
\rbr{
\St + 
\pi_1\pi_2\mt\mt'
}
\wb_T
-
\pi_1\pi_2\wb_T'\mt + \lambda\wb_T'\sbt.
\end{equation}
The solution $\tilde \wb_T$ to the oracle optimization problem
\eqref{eq:oracle:population:optimization} satisfies $\tilde\wb_T =
\hat \wb_T$ where $\hat\wb_T$ is given in \eqref{wbT}. It is
immediately clear that under the conditions of
Theorem~\ref{thm:sda_population}, $\sgn\rbr{\tilde\wb_T} = \sbt$.

The next lemma shows that the vector $(\hat\wb_T', \zero')'$ is the
solution to the optimization problem in
\eqref{eq:population:optimization} under the assumptions of
Theorem~\ref{thm:sda_population}.
\begin{lemma}
  \label{lem:global_solution}
  Under the conditions of Theorem~\ref{thm:sda_population}, we have
  that $\hat\wb = (\hat\wb_T', \zero')$ is the solution to the problem
  in \eqref{eq:population:optimization}, where $\hat\wb_T$ is defined
  in~\eqref{wbT}.  
\end{lemma}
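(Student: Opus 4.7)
The plan is to use a primal--dual witness argument: I would exhibit a pair $(\hat\wb, \hat\zb)$ with $\hat\wb = (\hat\wb_T', \zero')'$ and $\hat\wb_T$ given by \eqref{wbT}, together with a subgradient $\hat\zb \in \partial\norm{\hat\wb}_1$, satisfying the KKT system \eqref{eq:KKT}. Since \eqref{eq:population:optimization} is convex, any such certificate proves $\hat\wb$ is a global minimizer; the subgradient constraints reduce to $\hat\zb_T = \sbt$ and $\norm{\hat\zb_N}_\infty \leq 1$.

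First I would dispatch the $T$-block of \eqref{eq:KKT}. By construction, $\hat\wb_T$ is the unique minimizer of the strictly convex oracle problem \eqref{eq:oracle:population:optimization}, so the first-order condition
\[
(\St + \pi_1\pi_2\mt\mt')\hat\wb_T - \pi_1\pi_2\mt + \lambda\sbt = \zero
\]
holds automatically; assumption \eqref{eq:assum:beta_min} yields $\sgn(\hat\wb_T) = \sbt$, so the choice $\hat\zb_T = \sbt$ is a valid subgradient entry.

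Next I would define $\hat\zb_N$ via the $N$-block of \eqref{eq:KKT},
\[
\hat\zb_N = -\lambda^{-1}\bigl[\Sigmab_{NT}\hat\wb_T + \pi_1\pi_2\mub_N(\mt'\hat\wb_T - 1)\bigr],
\]
and substitute the explicit form of $\hat\wb_T$ from \eqref{wbT}. Using $\mub_N = \Sigmab_{NT}\Sti\mt$ from \eqref{eq:beta_t_mu_n}, together with the identities $\mt'\bt = \snbt$ and $\mt'\Sti\sbt = \norm{\bt}_1$, the scalar $\mt'\hat\wb_T$ simplifies and one obtains
\[
\pi_1\pi_2(\mt'\hat\wb_T - 1) = -\pi_1\pi_2\,\frac{1+\lambda\norm{\bt}_1}{1+\pi_1\pi_2\snbt},
\]
which is precisely the negative of the scalar factor multiplying $\bt$ in \eqref{wbT}. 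Multiplied against $\mub_N = \Sigmab_{NT}\Sti\mt$, this exactly cancels the $\bt$-part of $\Sigmab_{NT}\hat\wb_T$, leaving $\hat\zb_N = \Sigmab_{NT}\Sti\sbt$, and the irrepresentable condition \eqref{eq:assum:irrepresentable} then gives $\norm{\hat\zb_N}_\infty \leq 1-\alpha \leq 1$.

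The main obstacle, though more bookkeeping than genuine difficulty, is this algebraic cancellation in the $N$-block: the scalar prefactor $(1+\lambda\norm{\bt}_1)/(1+\pi_1\pi_2\snbt)$ appearing in \eqref{wbT} is engineered exactly so that, once $\mub_N = \Sigmab_{NT}\Sti\mt$ is invoked, the $\mt$-dependence of $\hat\zb_N$ vanishes and only the covariance-driven quantity $\Sigmab_{NT}\Sti\sbt$ remains---after which the irrepresentable condition seals the bound. That $\hat\wb$ is moreover \emph{the} (unique) minimizer follows from the standard primal--dual witness conclusion: strict convexity of the restricted quadratic (since $\St + \pi_1\pi_2\mt\mt' \succ 0$) combined with the strict dual feasibility $\norm{\hat\zb_N}_\infty \leq 1-\alpha < 1$, which is available because $\alpha \in (0,1]$.
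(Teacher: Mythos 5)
Your proposal is correct and follows essentially the same route as the paper: verify the KKT system \eqref{eq:KKT}, note the $T$-block holds by construction of $\hat\wb_T$ as the oracle solution, and show the $N$-block reduces---via $\mub_N = \Sigmab_{NT}\Sti\mt$ and the cancellation $\pi_1\pi_2(\mt'\hat\wb_T-1) = -\gamma$---to the irrepresentable condition \eqref{eq:assum:irrepresentable}. The paper compresses this cancellation into ``after some algebra,'' so your write-up simply supplies the details it omits; the uniqueness remark via strict dual feasibility is also consistent with the paper's treatment.
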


This completes the proof of Theorem~\ref{thm:sda_population}.

The next theorem shows that the irrepresentable condition in
\eqref{eq:assum:irrepresentable} is almost necessary for sign
consistency, even if the population quantities $\Sigmab$ and $\mub$
are known.
\begin{theorem}
  \label{thm:necessary} 
  Let $\hat\wb$ be the solution to the problem in
  \eqref{eq:population:optimization}.  If we have $\sgn(\hat\wb_T) =
  \sbt$, Then, there must be
    \begin{equation}
    \label{eq:assum:irrepresentablenecessary}
    \norm{\Sigmab_{NT}\Sigmab_{TT}^{-1}\sbt}_{\infty} \leq 1.
  \end{equation}  
\end{theorem}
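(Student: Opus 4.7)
The plan is to extract the necessity of the irrepresentable condition directly from the KKT characterization of $\hat\wb$, mirroring the calculation already used to prove Theorem~\ref{thm:sda_population} but running it in reverse. Since $\sgn(\hat\wb_T) = \sbt$ and $\hat\wb_N = \zero$, the subgradient $\hat\zb$ in the KKT system \eqref{eq:KKT} must satisfy $\hat\zb_T = \sbt$ and $\norm{\hat\zb_N}_\infty \leq 1$. The block structure of \eqref{eq:KKT}, together with the identity $\mub_N = \Sigmab_{NT}\Sigmab_{TT}^{-1}\mub_T = \Sigmab_{NT}\bt$ from \eqref{eq:beta_t_mu_n}, will make $\Sigmab_{NT}\Sigmab_{TT}^{-1}\sbt$ appear as an explicit formula for $\hat\zb_N$, at which point the subgradient bound gives the claim.

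Concretely, I would first split \eqref{eq:KKT} into the $T$-block and $N$-block. Because $\hat\wb_N=\zero$, the $T$-block reads
\begin{equation*}
  \Sigmab_{TT}\hat\wb_T + \pi_1\pi_2\mub_T(\mub_T'\hat\wb_T) - \pi_1\pi_2\mub_T + \lambda\sbt = \zero,
\end{equation*}
which solves to $\hat\wb_T = c\,\bt - \lambda\Sigmab_{TT}^{-1}\sbt$ with the scalar $c := \pi_1\pi_2(1 - \mub_T'\hat\wb_T)$. The $N$-block reads
\begin{equation*}
  \Sigmab_{NT}\hat\wb_T + \pi_1\pi_2\mub_N(\mub_T'\hat\wb_T) - \pi_1\pi_2\mub_N + \lambda\hat\zb_N = \zero,
\end{equation*}
which, after collecting terms and using $\mub_N = \Sigmab_{NT}\bt$, reduces to
\begin{equation*}
  \Sigmab_{NT}\hat\wb_T = c\,\Sigmab_{NT}\bt - \lambda\hat\zb_N.
\end{equation*}
Substituting the explicit form of $\hat\wb_T$ on the left-hand side, the $c\,\Sigmab_{NT}\bt$ terms cancel and I am left with $\lambda\hat\zb_N = \lambda\,\Sigmab_{NT}\Sigmab_{TT}^{-1}\sbt$. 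Dividing by $\lambda>0$ yields $\hat\zb_N = \Sigmab_{NT}\Sigmab_{TT}^{-1}\sbt$, and applying $\norm{\hat\zb_N}_\infty \leq 1$ proves \eqref{eq:assum:irrepresentablenecessary}.

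No genuine obstacle appears here: the whole argument is algebraic manipulation of the KKT conditions together with the relation $\mub_N = \Sigmab_{NT}\bt$, and it is essentially the converse direction of the construction that produced \eqref{wbT}. The only minor point worth spelling out is the degenerate case $\lambda=0$, for which the conclusion is trivial (any vector of infinity-norm at most one is an admissible subgradient), so the argument above may be stated under $\lambda>0$ without loss of generality.
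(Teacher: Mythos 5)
Your proposal is correct and is exactly the argument the paper has in mind: the paper gives no explicit proof, deferring to the analogous KKT-based necessity arguments of \cite{ZY07, Zou:2006}, and your block decomposition of \eqref{eq:KKT} together with the identity $\mub_N = \Sigmab_{NT}\bt$ from \eqref{eq:beta_t_mu_n} is precisely that argument carried out in this setting. The only interpretive point, which you handle correctly, is reading the hypothesis of sign consistency as including $\hat\wb_N = \zero$, which is needed to force $\hat\zb_T = \sbt$ and $\norm{\hat\zb_N}_\infty \leq 1$ in the subdifferential.
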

The proof of this theorem follows similar argument as in the
regression settings in \cite{ZY07, Zou:2006}.

\subsection{Sample Version Analysis of the SDA Estimator}
\label{sec:sparsistency_sda_sample}

In this section, we analyze the variable selection performance of the
sample version of the SDA estimator $\hat\vb = \vsda$ defined in
\eqref{eq:opt_problem1}. In particular, we will establish sufficient
conditions under which $\hat\vb$ correctly recovers the support set of
$\betab$ (i.e., we will derive conditions under which $\hat\vb =
(\hat\vb_T', \zero')'$ and $\sgn(\hat{\vb}_T) = \sbt$).  The proof
construction follows the same line of reasoning as the population
version analysis. However, proving analogous results in the sample
version of the problem is much more challenging and requires careful
analysis of the sampling distribution of the scaled functionals of
Gaussian random vectors.

The following theorem is the main result that characterizes the
variable selection consistency of the SDA estimator.
\begin{theorem}
  \label{thm:main:sda}
  We assume that the  condition in \eqref{eq:assum:irrepresentable} holds.
  Let the penalty parameter be $\lambda = \rbr{1+\pi_1\pi_2\snbt}^{-1}\lambda_0$
  with
  \begin{equation}
    \label{eq:lambda0:thm}
    \lambda_0 = K_{\lambda_0}
       \sqrt{\pi_1\pi_2
         \rbr{\max_{a\in N}\sigma_{a|T}} \rbr{1\vee\snbt}
         \frac{\log\rbr{(p-s)\log(n)}}{n}
       }       
  \end{equation}
  where $ K_{\lambda_0}$ is a sufficiently large constant.
  Suppose that $\beta_{\min} = \min_{a \in T} |\beta_a|$ satisfies 
  \begin{equation}
    \label{eq:beta_min:lemma:1}
    \begin{aligned}
    \beta_{\min} &\geq K_{\beta} 
     \Bigl(
      \sqrt{\rbr{\max_{a\in T}
        \rbr{\Sti}_{aa}}\rbr{1\vee\snbt}
        \frac{\log(s\log(n))}{n}
      }   \\
      & \qquad\qquad\qquad
      \bigvee
      \lambda_0\norm{\Sti\sbt}_\infty
     \Bigl) 
   \end{aligned}
 \end{equation}
  for a sufficiently large constant $K_{\beta}$. If 
  \begin{equation}
    \label{eq:sample_size:thm}
    n \geq K \pi_1\pi_2 \rbr{\max_{a\in N}\sigma_{a|T}}
           \Lambda_{\min}^{-1}(\St)
           s\log\rbr{(p-s)\log(n)}
  \end{equation}
  for some constant $K$, then $\hat\wb = (\hat\wb_T',\zero')'$ is the
  solution to the optimization problem in
  \eqref{eq:opt_problem1}, where
\begin{equation}
  \label{eq:solution:hat_wt}
\begin{aligned}  
  \hat \vb_T
  & = 
    \frac{n_1n_2}{n(n-2)}
    \frac{1 + \lambda\norm{\hat\bt}_1}
      {1+\frac{n_1n_2}{n(n-2)}\norm{\hat\bt}_{\hSt}^2}
    \hat\bt -
    \lambda\hSti\sgn(\hat\bt)\ \text{and} \\
  \hat\betab_T & = \hSti\hmt,
\end{aligned}
\end{equation}
with probability at least $1-\Ocal\rbr{\log^{-1}(n)}$. Furthermore,
$\sgn(\hat \vb_T) = \sbt$.
\end{theorem}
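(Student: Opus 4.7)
The plan is to mirror the primal--dual witness construction from the population proof of Theorem~\ref{thm:sda_population}. First I would solve the sample oracle subproblem obtained by restricting the optimization in~(\ref{eq:opt_problem1}) to coordinates in $T$ with fixed sign vector $\sbt$; its unique minimizer has the closed form given in~(\ref{eq:solution:hat_wt}), obtained from the Sherman--Morrison identity applied to $\hSt+\nr\hmt\hmt'$ exactly as in the derivation of~(\ref{wbT}). Padding with zeros produces the candidate $\hat\vb=(\hat\vb_T',\zero')'$, and a sample analogue of Lemma~\ref{lem:global_solution} will certify it as the unique SDA minimizer once we verify (a) $\sgn(\hat\vb_T)=\sbt$ and (b) the KKT subgradient on $N=[p]\setminus T$ has $\ell_\infty$-norm strictly less than $\lambda$.

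For the sign event (a), write $\hat\vb_T = \nr\,c_n\,\hat\bt - \lambda\hSti\sbt$ with $\hat\bt=\hSti\hmt$ and $c_n=(1+\lambda\|\hat\bt\|_1)/(1+\nr\hsnbt)$. Sign consistency reduces to showing that $\|\hat\bt-\bt\|_\infty$ and the regularization bias $\lambda\|\hSti\sbt\|_\infty$ are both of smaller order than $\beta_{\min}$, while the multiplier $c_n$ concentrates around its population value $(1+\pi_1\pi_2\snbt)^{-1}$. The coordinatewise bound on $\hat\bt-\bt$ follows from combining Gaussian concentration of $\hmt-\mt$ at rate $\sqrt{(\Sti)_{aa}\log(s\log n)/n}$ with an operator-norm concentration of $\hSt-\St$ valid once $n\gtrsim s$; the multiplier control is a Hanson--Wright-type estimate on the quadratic form $\hsnbt-\snbt$. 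The lower bound (\ref{eq:beta_min:lemma:1}) is calibrated exactly to dominate all of these pieces.

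For dual feasibility on $N$, the key identity is $\hmp{N}=\hSp{NT}\hSti\hmt+\hat\rb$, where $\hat\rb$ is the sample partial residual from regressing each coordinate in $N$ on the coordinates in $T$. Substituting this together with the closed form for $\hat\vb_T$ into the KKT equation on $N$ yields three terms: the deterministic irrepresentable piece $\Sigmab_{NT}\Sti\sbt$, bounded by $1-\alpha$ via~(\ref{eq:assum:irrepresentable}); a matrix perturbation $(\hSp{NT}\hSti-\Sigmab_{NT}\Sti)\sbt$, which shrinks below $\alpha/2$ under the sample-size requirement~(\ref{eq:sample_size:thm}) by a row-wise concentration leveraging $\Lambda_{\min}^{-1}(\St)$; and a stochastic term driven by $\hat\rb$. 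Controlling this last term is the main obstacle and the source of the sharp rate: conditional on $\Xb_T$, each coordinate $\hat r_a$ is a centered Gaussian with variance $\sigma_{a|T}/n$, so exploiting this conditional representation--rather than the coarser sup-norm bound on $\Sbb-\Sigmab$ used in~\cite{mai2012}--is what replaces $\sigma_{aa}$ by $\sigma_{a|T}$ in~(\ref{eq:lambda0:thm}) and removes an extraneous factor of $s$. A union bound over $a\in N$ combined with the sign event and the perturbation event then delivers the overall probability $1-\mathcal{O}(\log^{-1}n)$ and completes the argument.
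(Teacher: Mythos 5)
Your overall architecture is the right one and matches the paper's: a primal--dual witness built from the oracle problem on $T$ with $\norm{\vb_T}_1$ replaced by $\vb_T'\sbt$, the Sherman--Morrison closed form \eqref{eq:solution:restricted}, a sign-consistency check, a dual-feasibility check on $N$, and a union bound. Your treatment of dual feasibility is in particular the same as the paper's Lemma~\ref{lem:dual_certificate}: the conditional decomposition $\hat\mu_a = \Sigmab_{aT}\Sti\hmt + \hat\mu_{a\cdot T}$ (and its analogue for $\Sbb_{aT}$), with residuals of conditional variance proportional to $\sigma_{a|T}$, is exactly the mechanism that produces the sharp choice of $\lambda_0$ in \eqref{eq:lambda0:thm}.

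The gap is in your sign-consistency step. You propose to bound $\hat\bt - \bt$ coordinatewise by combining Gaussian concentration of $\hmt - \mt$ with operator-norm concentration of $\hSt - \St$ (valid once $n \gtrsim s$), and to control $\hsnbt$ by a Hanson--Wright bound. This studies $\hSt$ and $\hmt$ separately, and it cannot deliver the rate needed for \eqref{eq:beta_min:lemma:1}. Concretely, the term $\eb_a'(\hSti - \Sti)\mt$ is only bounded, via the spectral-norm control $\norm{\St^{-1/2}\hSt\St^{-1/2} - \Ib}_2 \lesssim \sqrt{s/n}$, by a quantity of order $\sqrt{(\Sti)_{aa}}\,\nbt\,\sqrt{s/n}$, which exceeds the target $\sqrt{(\Sti)_{aa}\rbr{1\vee\snbt}\log(s\log n)/n}$ by a factor of order $\sqrt{s/\log(s\log n)}$. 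Propagated into the sign condition, this forces $\beta_{\min} \gtrsim \sqrt{s/n}$ rather than the stated $\sqrt{\log(s\log n)/n}$ scale --- i.e., you would reprove a bound of the suboptimal \citet{mai2012} type rather than Theorem~\ref{thm:main:sda}. The paper avoids this by exact finite-sample distribution theory for the coupled quantity $\hSti\hmt$: Theorem~3.2.12 of \citet{muirhead1982aspects} gives $(n-2)\,\eb_a'\Sti\eb_a/\eb_a'\hSti\eb_a \sim \chi^2_{n-s-1}$, and the studentized difference $\frac{\eb_a'\hSti\hmt}{\eb_a'\hSti\eb_a} - \frac{\eb_a'\Sti\hmt}{\eb_a'\Sti\eb_a}$, conditional on $\hmt$, is distributed as $\sqrt{q_a/(n-s)}\; t_{n-s}$ with $q_a \le \hmt'\Sti\hmt/\eb_a'\Sti\eb_a$; this is what yields the coordinatewise $\sqrt{\log(s\log n)/n}$ deviation of Lemma~\ref{lem:deviation:beta_t}, and analogous exact $\chi^2$ and $t$ representations handle $\hmt'\hSti\hmt$, $\hmt'\hSti\sbt$ and $\hSti\sbt$. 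You need these --- or an equally sharp coordinatewise substitute --- for the sign step; the operator-norm route does not close. (A minor omission: uniqueness of the minimizer also requires the complementary-slackness argument showing that any other solution vanishes on $N$ because the dual variables there are strictly below $1$.)
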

Theorem~\ref{thm:main:sda} is a sample version of
Theorem~\ref{thm:sda_population} given in the previous
section. Compared to the population version result, in addition to the
irrepresentable condition and a lower bound on $\beta_{\min}$, we also
need the sample size $n$ to be large enough for the SDA procedure to
recover the true support set $T$ with high probability.

At the first sight, the conditions of the theorem look complicated. To
highlight the main result, we consider a case where $0 < \underbar c
\leq \Lambda_{\min}(\St)$ and $\rbr{\snbt \bigvee
  \norm{\Sti\sbt}_\infty} \leq \bar C < \infty$ for some constants
$\underbar c, \bar C$. In this case, it is sufficient that the sample
size scales as $n \asymp s \log(p-s)$ and $\beta_{\min} \gtrsim
s^{-1/2}$.  This scaling is of the same order as for the Lasso
procedure, where $n \gtrsim s\log(p-s)$ is needed for correct recovery
of the relevant variables under the same assumptions \citep[see
Theorem 3 in][]{Wain:09a}. In $\S$\ref{sec:consequence}, we provide
more detailed explanation of this theorem and complement it with the
necessary conditions given by the information theoretic limits.

Variable selection consistency of the SDA estimator was studied by
\citet{mai2012}. Let $\Cb = \Var(\Xb)$ denote the marginal covariance
matrix. Under the assumption that
$\opnorm{\Cb_{NT}\Cb_{TT}^{-1}}{\infty}$,
$\opnorm{\Cb_{TT}^{-1}}{\infty}$ and $\norm{\mub}_\infty$ are bounded,
\citet{mai2012} show that the following conditions
\[
i)\ \lim_{n\rightarrow \infty}\frac{s^{2}\log p}{n}=0,
\quad \text{and} \quad
ii)\ \beta_{\min} \gg \sqrt{\frac{s^2\log(ps)}{n}}
\]
are sufficient for consistent support recovery of $\betab$. This is
suboptimal compared to our results. Inspection of the proof given in
\citet{mai2012} reveals that their result hinges on uniform control of
the elementwise deviation of $\hat \Cb$ from $\Cb$ and $\hat \mub$
from $\mub$. These uniform deviation controls are too rough to
establish sharp results given in Theorem~\ref{thm:main:sda}. In our
proofs, we use more sophisticated multivariate analysis tools to
control the deviation of $\hat \bt$ from $\bt$, that is, we focus on
analyzing the quantity $\hSti\hmt$ but instead of studying $\hSt$ and
$\hmt$ separately.

The optimization problem in \eqref{eq:opt_problem1} uses a particular
scheme to encode class labels in the vector $\zb$, though other
choices are possible as well. For example, suppose that we choose $z_i
= z^{(1)}$ if $y_i=1$ and $z_i = z^{(2)}$ if $y_i=2$, with $z^{(1)}$
and $z^{(2)}$ such that $n_1z^{(1)} + n_2z^{(2)} = 0$. The optimality
conditions for the vector $\tilde\vb = (\tilde\vb_{\tilde T}',
\zero')'$ to be a solution to \eqref{eq:opt_problem1} with the
alternative coding are
  \begin{align}
  \label{eq:KKT:1:mod}
\rbr{
\Sbb_{\tilde T \tilde T} + 
\frac{n_1n_2}{n(n-2)}\hat\mub_{\tilde T}\hat\mub_{\tilde T}'
}
\tilde\wb_{\tilde T} = \frac{n_1z^{(1)}}{n-2}\hat\mub_{\tilde T} - \tilde\lambda\sgn(\tilde\wb_{\tilde T})
\\
\bignorm{
\rbr{
\Sbb_{\tilde N \tilde T} + \frac{n_1n_2}{n(n-2)}\hat\mub_{\tilde N}\hat\mub_{\tilde T}'
}
\tilde\wb_{\tilde T} - \frac{n_1z^{(1)}}{n-2}\hat\mub_{\tilde N}
}_{\infty} \leq \tilde\lambda.
  \label{eq:KKT:2:mod}
\end{align}
Now, choosing $\tilde\lambda = \frac{z^{(1)}n}{n_2} \lambda$, we
obtain that  $\tilde \wb_{\tilde T}$, which satisfies
\eqref{eq:KKT:1:mod} and \eqref{eq:KKT:2:mod}, is proportional to
$\hat\wb_T$ with $\tilde T = T$.  Therefore, the choice of different 
coding schemes of the response variable $z_{i}$ does not effect the result.

The proof of Theorem~\ref{thm:main:sda} is outlined in the next
subsection.

\subsection{Proof of Sparsistency of the SDA Estimator}
\label{sec:sparsistency_sda_sample:proof}

The proof of Theorem~\ref{thm:main:sda} follows the same strategy as
the proof of Theorem~\ref{thm:sda_population}.  More specifically, we
only need to show that there exists a subdifferential of
$\|\cdot\|_{1}$ such that the solution $\hat\vb$ to the optimization
problem in \eqref{eq:opt_problem1} satisfies the sample version KKT
condition with high probability.  For this, we proceed in two
steps. In the first step, we assume that the true support set $T$ is
known and solve an oracle optimization problem to get $\tilde\vb_T$
which exploits the knowledge of $T$.  In the second step, we show that
there exists a dual variable from the subdifferential of
$\|\cdot\|_{1}$ such that the vector $(\tilde\vb_T', \zero')'$, paired
with $(\tilde\vb_T', \zero')'$, satisfies the KKT conditions for the
original optimization problem given in \eqref{eq:opt_problem1}.  This
proves that $\hat\vb = (\tilde\vb_T', \zero')'$ is a global minimizer
of the problem in \eqref{eq:opt_problem1}. Finally, we show that
$\hat\vb$ is a unique solution to the optimization problem in
\eqref{eq:opt_problem1} with high probability.

Let $\hat T = \supp(\hat \vb)$ be the support of a solution $\hat \vb$
to the optimization problem in \eqref{eq:opt_problem1} and $\hat N =
[p]\bks\hat T$. Any solution to \eqref{eq:opt_problem1} needs to
satisfy the following Karush-Kuhn-Tucker (KKT) conditions
\begin{align}
  \label{eq:KKT:1}
\rbr{
\Sbb_{\hat T \hat T} + \frac{n_1n_2}{n(n-2)}\hat\mub_{\hat T}\hat\mub_{ \hat T}'
}
\hat\vb_{\hat T} = \frac{n_1n_2}{n(n-2)}\hat\mub_{\hat T} - \lambda\sgn(\hat\vb_{\hat T}),
\\
\bignorm{
\rbr{
\Sbb_{\hat N \hat T} + \frac{n_1n_2}{n(n-2)}\hat\mub_{\hat N}\hat\mub_{ \hat T}'
}
\hat\vb_{\hat T} - \frac{n_1n_2}{n(n-2)}\hat\mub_{\hat N}
}_{\infty} \leq \lambda.
  \label{eq:KKT:2}
\end{align}
We construct a solution $\hat\vb = (\hat \vb_T', \zero')'$ to
\eqref{eq:opt_problem1} and show that it is unique with
high probability.

First, we consider the following oracle optimization problem
\begin{equation}
  \label{eq:opt_constraint}
  \tilde \vb_T = 
  \arg\min_{\vb \in \RR^{s}}\ 
  \frac{1}{2(n-2)}
  \sum_{i\in[n]}
  (z_i - \vb'(\xb_{i,T} - \bar\xb_T))^2 + \lambda \vb'\sbt.
\end{equation}
The optimization problem in \eqref{eq:opt_constraint} is related to
the one in \eqref{eq:opt_problem1}, however, the solution is
calculated only over the subset $T$ and $\norm{\vb_T}_1$ is replaced
with $\vb_T'\sbt$. The solution can be computed in a closed form as
\begin{equation}
  \label{eq:solution:restricted}
\begin{aligned}  
  \tilde \vb_T
  & = 
\rbr{
\hSt + \frac{n_1n_2}{n(n-2)}\hmt\hmt'
}^{-1}
\rbr{\frac{n_1n_2}{n(n-2)}\hmt - \lambda\sbt}\\
  & = 
\rbr{
\hSti - 
\frac{n_1n_2}{n(n-2)}
\frac{\hSti\hmt\hmt'\hSti}
{1+\frac{n_1n_2}{n(n-2)}\hmt'\hSti\hmt}
}
\rbr{\frac{n_1n_2}{n(n-2)}\hmt - \lambda\sbt}\\
  & = 
    \frac{n_1n_2}{n(n-2)}
    \frac{1 + \lambda\hmt'\hSti\sbt}
      {1+\frac{n_1n_2}{n(n-2)}\hmt'\hSti\hmt}
    \hSti\hmt -
    \lambda\hSti\sbt.
\end{aligned}
\end{equation}
The solution $\tilde\vb_T$ is unique, since the matrix $\hSt$ is
positive definite with probability $1$.

The following result establishes that the solution to the auxiliary
oracle optimization problem \eqref{eq:opt_constraint} satisfies
$\sgn(\tilde\vb_T) = \sbt$ with high probability, under the conditions
of Theorem~\ref{thm:main:sda}.

\begin{lemma}
  \label{lem:convergence}
  Under the assumption that the conditions of
  Theorem~\ref{thm:main:sda} are satisfied, $\sgn(\tilde\vb_T) = \sbt$
  and $\sgn(\hat\bt) = \sbt$ with probability at least $1 -
  \Ocal\rbr{\log^{-1}(n)}$.
\end{lemma}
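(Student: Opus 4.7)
\textbf{Proof proposal for Lemma~\ref{lem:convergence}.}

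The plan is to exploit the closed form \eqref{eq:solution:restricted}, which we rewrite as
\[
  \tilde\vb_T = c_n\,\hat\bt \;-\; \lambda\,\hSti\sbt,
  \qquad
  c_n \;=\; \frac{n_1n_2}{n(n-2)}\,
     \frac{1+\lambda\,\hmt'\hSti\sbt}{1+\tfrac{n_1n_2}{n(n-2)}\,\hmt'\hSti\hmt}.
\]
Thus sign consistency of $\tilde\vb_T$ reduces to three ingredients: (i) $\sgn(\hat\bt)=\sbt$, (ii) $c_n>0$ and bounded below by a positive constant, and (iii) the ``bias'' term satisfies $\lambda\|\hSti\sbt\|_\infty < c_n\,\beta_{\min}$. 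Once (i)--(iii) hold, coordinate-wise $|c_n\hat\beta_a|\ge c_n\beta_{\min}/2$ dominates the bias on $T$, forcing $\sgn(\tilde v_{T,a})=\sgn(\beta_a)$. So the lemma reduces to (i) plus quantitative control of the scalars and of $\|\hSti\sbt\|_\infty$.

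The heart of the proof is item~(i): a sharp sup-norm bound on $\hat\bt-\bt=\hSti\hmt-\Sti\mt$. The key structural fact is that, conditional on the class labels, $\hmt$ is independent of $\hSt$, with $\hmt\sim\Ncal(\mt,(n_1^{-1}+n_2^{-1})\St)$ and $(n-2)\hSt\sim W_s(n-2,\St)$. Conditioning further on $\hSt$, the vector $\hSti\hmt$ is Gaussian with mean $\hSti\mt$ and covariance $(n_1^{-1}+n_2^{-1})\hSti\St\hSti$, whose diagonal entries $(\hSti\St\hSti)_{aa}$ concentrate around $(\Sti)_{aa}$ by Wishart submatrix theory. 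A Gaussian maximal inequality over the $s$ coordinates then yields, with probability $\ge 1-\Ocal(\log^{-1}(n))$,
\[
  \|\hSti\hmt-\hSti\mt\|_\infty \;\lesssim\; \sqrt{\rbr{\max_{a\in T}(\Sti)_{aa}}\,\frac{\log(s\log n)}{n}}.
\]
For the deterministic-bias piece $\hSti\mt-\Sti\mt = -\hSti(\hSt-\St)\bt$, Wishart concentration of quadratic forms in $\bt$ (which produces the $\snbt$ factor) gives a bound of the same order up to a $\sqrt{1\vee\snbt}$ factor, matching the $\beta_{\min}$ lower bound \eqref{eq:beta_min:lemma:1}. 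Combining the two yields $\|\hat\bt-\bt\|_\infty<\beta_{\min}$, hence $\sgn(\hat\bt)=\sbt$.

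For~(ii), Hoeffding's inequality gives $n_1n_2/(n(n-2))=\pi_1\pi_2(1+o(1))$ with high probability (using $n\gtrsim s\log((p-s)\log n)$). The quadratic form $\hmt'\hSti\hmt$ concentrates around $\snbt$ by the same conditional-Gaussian plus Wishart argument as above, and $|\lambda\,\hmt'\hSti\sbt|\le\lambda\,\|\hmt\|_{\hSti}\|\sbt\|_{\hSti}=o(1)$ under the choice of $\lambda$, so $c_n\ge \tfrac12\pi_1\pi_2/(1+\pi_1\pi_2\snbt)$. For~(iii), the identity $\hSti\sbt-\Sti\sbt=-\hSti(\hSt-\St)\Sti\sbt$ together with the same Wishart concentration gives $\|\hSti\sbt\|_\infty\le 2\|\Sti\sbt\|_\infty$ w.h.p., so under the lower bound $\beta_{\min}\ge K_\beta\lambda_0\|\Sti\sbt\|_\infty$ from \eqref{eq:beta_min:lemma:1} the bias is dominated by $c_n\beta_{\min}/2$.

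The main obstacle is the sup-norm bound in~(i). A naive approach bounding $\|\hSt-\St\|_\infty$ and $\|\hmt-\mt\|_\infty$ separately and applying the triangle inequality produces the suboptimal $\sqrt{s^2\log p/n}$ rate obtained by \citet{mai2012}. The sharper $\sqrt{\log s/n}$ rate, which is what drives the optimal scaling \eqref{eq:sample_size:thm}, requires the direct analysis of $\hSti\hmt$ as a single object via the independence of $\hmt$ and $\hSt$ and the Wishart distribution of $(n-2)\hSt$; all remaining ingredients (Hoeffding for $n_1/n$, Gaussian tails, standard Wishart quadratic-form concentration) are routine.
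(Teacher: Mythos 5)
Your overall architecture matches the paper's: write $\tilde\vb_T = \hat\gamma\,\hat\bt - \lambda\hSti\sbt$, lower-bound the scalar $\hat\gamma$, control $\|\hat\bt-\bt\|_\infty$ coordinatewise by analyzing $\hSti\hmt$ as a single object (rather than via separate sup-norm bounds on $\hSt$ and $\hmt$, which you correctly identify as the source of \citet{mai2012}'s suboptimal rate), and show the bias $\lambda\hSti\sbt$ is dominated under \eqref{eq:beta_min:lemma:1}. Your items (ii) and the purely Gaussian half of (i) --- conditioning on $\hSt$ so that $\hSti(\hmt-\mt)$ is Gaussian with covariance $\tfrac{n}{n_1n_2}\hSti\St\hSti$, whose diagonal is $(\Sti)_{aa}(1+O_P(\sqrt{s/n}))$ by operator-norm Wishart concentration --- are sound, since there only a multiplicative $(1+o(1))$ correction is needed.

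The gap is in the remaining terms, which are exactly where the sharp rate must come from. You dispose of $\hSti\mt-\Sti\mt=-\hSti(\hSt-\St)\bt$ and of $\hSti\sbt-\Sti\sbt=-\hSti(\hSt-\St)\Sti\sbt$ by invoking ``standard Wishart quadratic-form concentration.'' These are coordinatewise \emph{bilinear} forms $\eb_a'\hSti(\hSt-\St)\ub$, not quadratic forms, and the generic route (operator-norm concentration $\|\St^{-1/2}(\hSt-\St)\St^{-1/2}\|_{\rm op}=O_P(\sqrt{s/n})$, or quadratic-form bounds in $\bt$ alone) yields only $|\eb_a'\hSti(\hSt-\St)\bt|\lesssim\sqrt{(\Sti)_{aa}\snbt}\cdot\sqrt{s/n}$, i.e.\ a $\sqrt{s/n}$ rate that is lossy by a factor $\sqrt{s/\log(s\log n)}$ and would force $\beta_{\min}\gtrsim\sqrt{s/n}$, defeating the point of the lemma. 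The paper gets the dimension-free $\sqrt{\log(s\log n)/n}$ rate by an exact distributional argument: conditional on $\hmt$ (the opposite conditioning from yours), the studentized ratio $\sqrt{(n-s)/q_a}\bigl(\tfrac{\eb_a'\hSti\hmt}{\eb_a'\hSti\eb_a}-\tfrac{\eb_a'\Sti\hmt}{\eb_a'\Sti\eb_a}\bigr)$ is exactly $t_{n-s}$ (via Theorem 3 of \cite{Bodnar08properties}), combined with the $\chi^2_{n-s-1}$ representation of $\eb_a'\Sti\eb_a/\eb_a'\hSti\eb_a$ (Muirhead, Theorem 3.2.12); this is the content of Lemmas~\ref{lem:deviation:beta_t}, \ref{lem:hsti_sbt}, and \ref{lem:l1_norm}, and it is the non-routine core of the proof. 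To complete your argument you would need to supply an equivalent coordinatewise mechanism (the exact $t$-distribution, or a leave-one-out/conditional-Gaussian computation of the law of $\eb_a'\hSti(\hSt-\St)\ub$), not a generic Wishart concentration inequality.
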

The proof Lemma \ref{lem:convergence} relies on a careful
characterization of the deviation of the following quantities
$\hsnbt$, $\hmt'\hSti\sgn(\hat \bt)$, $\hSti\hmt$ and
$\hSti\sgn(\hat\bt)$ from their expected values.

Using Lemma~\ref{lem:convergence}, we have that $\tilde \vb_T$ defined
in \eqref{eq:solution:restricted} satisfies $\tilde \vb_T =
\hat\vb_T$.  Next, we show that $\hat\vb = (\tilde\vb_T',\zero')'$ is
a solution to \eqref{eq:opt_problem1} under the conditions
of Theorem~\ref{thm:main:sda}.

\begin{lemma}
  \label{lem:dual_certificate}
  Assuming that the conditions of Theorem~\ref{thm:main:sda} are
  satisfied, we have that $\hat\vb = (\tilde\vb_T',\zero')'$ is a solution
  to \eqref{eq:opt_problem1} with probability at least $1 -
  \Ocal\rbr{\log^{-1}(n)}$.
\end{lemma}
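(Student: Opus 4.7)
The plan is to verify the Karush--Kuhn--Tucker conditions \eqref{eq:KKT:1}--\eqref{eq:KKT:2} for the candidate $\hat\vb=(\tilde\vb_T',\zero')'$, with strict inequality off the support, and then to argue uniqueness. By construction of $\tilde\vb_T$ in \eqref{eq:opt_constraint} together with Lemma~\ref{lem:convergence}, the on-support condition \eqref{eq:KKT:1} holds automatically on the event $\{\sgn(\tilde\vb_T)=\sbt\}$, since $\sbt$ is then a valid subgradient of $\norm{\cdot}_1$ at $\tilde\vb_T$. What remains is strict dual feasibility of \eqref{eq:KKT:2} on $N=[p]\setminus T$ and uniqueness.

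Substituting the closed-form expression \eqref{eq:solution:restricted} of $\tilde\vb_T$ into the left-hand side of \eqref{eq:KKT:2} and simplifying via the algebraic identity
\[
 \frac{n_1n_2}{n(n-2)}\,\theta\,\hsnbt \;=\; 1+\lambda\,\hmt'\hSti\sbt-\theta,\qquad \theta:=\frac{1+\lambda\,\hmt'\hSti\sbt}{1+\frac{n_1n_2}{n(n-2)}\hsnbt},
\]
reduces that expression to
\[
 \frac{n_1n_2}{n(n-2)}\,\theta\,\bigl(\Sbb_{NT}\hSti\hmt - \hat\mub_N\bigr)-\lambda\,\Sbb_{NT}\hSti\sbt.
\]
The population identity $\mub_N=\Sigmab_{NT}\Sti\mub_T$ from \eqref{eq:beta_t_mu_n} and the irrepresentable condition \eqref{eq:assum:irrepresentable} then reduce strict feasibility to showing
\[
 \frac{n_1n_2}{n(n-2)}\,\theta\,\norm{R_1}_\infty + \lambda\,\norm{R_2}_\infty < \alpha\,\lambda
\]
with high probability, where $R_1:=\Sbb_{NT}\hSti\hmt-\hat\mub_N$ and $R_2:=(\Sbb_{NT}\hSti-\Sigmab_{NT}\Sti)\sbt$.

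To bound $R_1$ and $R_2$, I would exploit the Gaussian regression structure of $\Xb_N$ on $\Xb_T$. Gaussian conditioning gives $\hat\mub_N = \Sigmab_{NT}\Sti\,\hmt + \bar{\boldsymbol{\epsilon}}$, where $\bar{\boldsymbol{\epsilon}}$ is a zero-mean Gaussian vector with coordinate variances $(n_1^{-1}+n_2^{-1})\sigma_{a|T}$ that is independent of $(\hSt,\hmt)$; in particular $R_1=(\Sbb_{NT}\hSti-\Sigmab_{NT}\Sti)\hmt-\bar{\boldsymbol{\epsilon}}$. Conditional on $\Xb_T$, the rows of $\Sbb_{NT}\hSti-\Sigmab_{NT}\Sti$ are independent Gaussians with covariance $\sigma_{a|T}\hSti/(n-2)$, so for a fixed $\mathbf{u}\in\RR^s$ a Gaussian tail bound plus a union over $|N|=p-s$ coordinates yields
\[
 \norm{(\Sbb_{NT}\hSti-\Sigmab_{NT}\Sti)\mathbf{u}}_\infty \lesssim \sqrt{\frac{(\max_{a\in N}\sigma_{a|T})\,\norm{\mathbf{u}}_{\hSti}^2\log((p-s)\log n)}{n}}.
\]
Applying this at $\mathbf{u}=\hmt$ (with $\norm{\hmt}_{\hSti}^2\approx\snbt$ after a matrix-concentration step ensuring $\Lambda_{\min}(\hSt)\gtrsim\Lambda_{\min}(\St)$) and at $\mathbf{u}=\sbt$ (with $\norm{\sbt}_{\hSti}^2\le s/\Lambda_{\min}(\hSt)$), together with a Gaussian tail for $\norm{\bar{\boldsymbol{\epsilon}}}_\infty$, gives
\[
 \norm{R_1}_\infty,\;\norm{R_2}_\infty \lesssim \sqrt{\frac{(\max_{a\in N}\sigma_{a|T})(1\vee\snbt)\log((p-s)\log n)}{n}}
\]
on an event of probability at least $1-\Ocal(\log^{-1}(n))$. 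The choice \eqref{eq:lambda0:thm} of $\lambda_0$ and the sample-size requirement \eqref{eq:sample_size:thm} then produce the required strict inequality.

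The main obstacle is extracting these concentration bounds in the sharp form that delivers the scaling $n\asymp s\log(p-s)$; the crucial device is Gaussian conditioning, which decouples $\Sbb_{NT}\hSti$ from $\hmt$ and $\bar{\boldsymbol{\epsilon}}$ and converts coordinatewise control into per-coordinate Gaussian tails with variance proxy $\sigma_{a|T}\norm{\mathbf{u}}_{\hSti}^2/(n-2)$, whereas a crude elementwise bound on $\Sbb-\Sigmab$ and $\hmt-\mub_T$, as used in \cite{mai2012}, only yields the suboptimal $n\gtrsim s^2\log p$ scaling. Uniqueness then follows by a standard argument: the strict inequality in \eqref{eq:KKT:2} forces any optimizer to vanish on $N$, and the restriction of the objective to coordinates in $T$ is strictly convex since $\hSt\succ 0$ almost surely, so $\tilde\vb_T$ is the unique minimizer.
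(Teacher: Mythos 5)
Your proposal is correct and follows essentially the same route as the paper's proof: after reducing the off-support KKT condition via the same algebraic cancellation, your terms $\nr\theta R_1$ and $\lambda R_2$ are exactly the paper's $T_{1,a}+T_{3,a}$ and $T_{2,a}$ (the paper writes $\hat\mub_N=\Sigmab_{NT}\Sti\hmt+\hat\mu_{\cdot\,T}$ and $(n-2)\Sbb_{aT}=\Sigmab_{aT}\Sti\Ub_T'\Ub_T+\Ub_{a\cdot T}'\Ub_T$ and bounds each piece by $(\alpha/3)\lambda$ using the same conditional-Gaussian tails, union bound over $N$, and the choices of $\lambda_0$ and $n$). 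The uniqueness argument you append is likewise the one the paper gives, though it places it in the main text after the lemma rather than inside its proof.
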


The proof of Theorem~\ref{thm:main:sda} will be complete once we show
that $\hat\vb =(\tilde\vb_T', \zero')'$ is the unique solution. We
proceed as in the proof of Lemma~$1$ in \cite{Wain:09a}.  Let $\check
\vb$ be another solution to the optimization problem
in~\eqref{eq:opt_problem1} satisfying the KKT condition
\[
\rbr{
\Sbb + \frac{n_1n_2}{n(n-2)}\hat\mub\hat\mub'
}
\check\vb - \frac{n_1n_2}{n(n-2)}\hat\mub 
+ \lambda\hat\qb = \zero
\]
for some subgradient $\hat\qb \in \partial \norm{\check\vb}_1$. Given
the subgradient $\hat\qb$, any optimal solution needs to satisfy the
complementary slackness condition $\hat \qb'\check\vb =
\norm{\check\vb}_1$, which holds only if $\check v_j = 0$ for all $j$
such that $|\hat q_j| < 1$. In the proof of
Lemma~\ref{lem:dual_certificate}, we established that $|\hat q_j| < 1$
for $j \in N$. Therefore, any solution to \eqref{eq:opt_problem1} has
the same sparsity pattern as $\hat\vb$. Uniqueness now follows since
$\tilde\vb_T$ is the unique solution of \eqref{eq:opt_constraint} when
constrained on the support set $T$.

\section{Lower Bound}
\label{sec:lower-bound}

Theorem~\ref{thm:main:sda} provides sufficient conditions for the SDA
estimator to reliably recover the true set $T$ of nonzero elements of
the discriminant direction $\betab$.  In this section, we provide
results that are of complementary nature. More specifically, we
provide necessary conditions that must be satisfied for any procedure
to succeed in reliable estimation of the support set $T$. Thus, we
focus on the information theoretic limits in the context of
high-dimensional discriminant analysis.

We denote $\Psi$ to be an estimator of the support set $T$, that is,
any measurable function that maps the data $\{\xb_i, y_i\}_{i\in[n]}$
to a subset of $\{1,\ldots, p\}$.  Let $\thetab = (\mub_1, \mub_2,
\Sigmab)$ be the problem parameters and $\Theta$ be the parameter
space. We define the maximum risk, corresponding to the $0/1$ loss, as
\[
R(\Psi, \Theta) = \sup_{\thetab \in \Theta}\ \PP_{\thetab}\left[\Psi(\{\xb_i,
y_i\}_{i\in[n]}) \neq T(\thetab)\right]
\]
where $\PP_{\thetab}$ denotes the joint distribution of $\{\xb_i,
y_i\}_{i\in[n]}$ under the assumption that $\pi_1=\pi_2=\frac{1}{2}$,
and $T(\thetab) = \supp( \betab )$ (recall that $\betab =
\Sigmab^{-1}(\mub_2-\mub_1)$). Let $\Mcal(s,\Zcal)$ be the class of
all subsets of the set $\Zcal$ of cardinality $s$. We consider the
parameter space
\begin{equation}
  \label{eq:model_theta}
  \Theta(\Sigmab, \tau, s) = \!\!\!\!\!\!\!
  \bigcup_{\omega \in \Mcal(s, [p])}
\!\!\! \left\{
    \thetab = (\mub_1, \mub_2, \Sigmab)\ :\ 
    \begin{array}{l}
    \betab = \Sigmab^{-1}(\mub_2 - \mub_1),\\
    |\beta_a| \geq \tau
    \text{ if } a \in \omega,\ 
    \beta_a = 0 
    \text{ if } a \not\in \omega
    \end{array} \!\!
\right\},
\end{equation}
where $\tau > 0$ determines the signal strength.
The minimax risk is defined as 
\[
\inf_{\Psi} R(\Psi, \Theta(\Sigmab, \tau, s)).
\]
In what follows we provide a lower bound on the minimax risk.  Before
stating the result, we introduce the following three quantities that
will be used to state Theorem~\ref{thm:lower_bound}
\begin{align}
 & \varphi_{\rm close}(\Sigmab) =
  \min_{T \in \Mcal(s, [p])}\ \min_{u \in T}\ \frac{1}{p-s}\sum_{v \in [p]\bks T}
  \left( \Sigma_{uu} + \Sigma_{vv} - 2\Sigma_{uv} \right),
  \label{eq:close_ensemble:phi}\\
&  \varphi_{\rm far}(\Sigmab) = \min_{T \in \Mcal(s, [p])}\ 
  \frac{1}{\ds {{p-s} \choose s}} \sum_{T' \in \Mcal(s, [p]\bks T)} 
  \one' \Sigmab_{T\cup T', T\cup T'} \one,   \label{eq:far_ensemble:phi}
\end{align}
and 
\begin{equation}
  \label{eq:beta_min:lower_bound}
\tau_{\min} = 2\cdot \max\left(
\sqrt{\frac{\log{p-s \choose s}}{n\varphi_{\rm far}(\Sigmab)}},
\sqrt{\frac{\log(p-s+1)}{n\varphi_{\rm close}(\Sigmab)}}
\right).
\end{equation}
The first quantity measures the difficulty of distinguishing two close
support sets $T_1$ and $T_2$ that differ in only one position. The
second quantity measures the effect of a large number of support sets
that are far from the support set $T$. The quantity $\tau_{\min}$ is a
threshold for the signal strength. Our main result on minimax lower
bound is presented in Theorem \ref{thm:lower_bound}.

\begin{theorem}
  \label{thm:lower_bound}
For any $\tau < \tau_{\min}$, there exists some constant $C>0$, such that
\begin{equation*}
\inf_{\Psi}  
\sup_{\thetab \in \Theta(\Sigmab, \tau, s)}\ \PP_{\thetab}
\left[\Psi(\{\xb_i,y_i\}_{i\in[n]}) \neq T(\thetab)\right] \geq
C > 0.
\end{equation*}
\end{theorem}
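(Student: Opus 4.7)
The plan is to prove the lower bound via Fano's inequality applied to two carefully chosen finite sub-families of $\Theta(\Sigmab,\tau,s)$---a ``close'' ensemble and a ``far'' ensemble---each yielding one of the two terms inside the max defining $\tau_{\min}$. Since the minimax risk over $\Theta(\Sigmab,\tau,s)$ dominates the Bayes risk on any finite sub-family, it suffices to exhibit each ensemble and invoke the standard multi-hypothesis lower-bound machinery.

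For the close ensemble, I would pick a base support $T_{\star}$ and an index $u\in T_{\star}$ attaining the outer and inner minima in the definition of $\varphi_{\rm close}$, and for every $v\in[p]\bks T_{\star}$ form the swapped support $T_{v}=(T_{\star}\bks\{u\})\cup\{v\}$. To each $T\in\{T_{\star}\}\cup\{T_{v}\}_{v}$ assign the parameter with $\mub_{1}=\zero$ and $\mub_{2}=\tau\Sigmab\one_{T}$, so that $\betab=\tau\one_{T}$ has support exactly $T$ with every nonzero entry equal to $\tau$, placing $\thetab^{(T)}$ in $\Theta(\Sigmab,\tau,s)$. Because $\mub_{2}^{(T_{\star})}-\mub_{2}^{(T_{v})}=\tau\Sigmab(e_{u}-e_{v})$, a standard Gaussian KL computation (and marginalization over the class label using $\pi_{1}=\pi_{2}=1/2$) gives KL of the joint law of $n$ samples equal, up to an absolute constant, to $n\tau^{2}(\Sigma_{uu}+\Sigma_{vv}-2\Sigma_{uv})$. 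Averaging this over $v$ produces $n\tau^{2}\varphi_{\rm close}(\Sigmab)$ by our choice of $T_{\star}$ and $u$, and the ensemble has cardinality $p-s+1$. Fano's inequality then pushes the minimax risk above a positive constant whenever $n\tau^{2}\varphi_{\rm close}(\Sigmab)\lesssim\log(p-s+1)$, i.e.\ whenever $\tau$ falls below the second branch of $\tau_{\min}$.

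For the far ensemble, I would fix a base support $T$ and consider the collection $\{\thetab^{(T')}:T'\in\Mcal(s,[p]\bks T)\}$ with $\mub^{(T')}=\tau\Sigmab\one_{T'}$, so that every $\thetab^{(T')}$ lies in $\Theta(\Sigmab,\tau,s)$ and all supports are disjoint from $T$. The ensemble has cardinality $\binom{p-s}{s}$. To align the analysis with the quantity $\one'\Sigmab_{T\cup T',T\cup T'}\one$ appearing in $\varphi_{\rm far}$, instead of bounding pairwise KL divergences directly I would invoke a second-moment/$\chi^{2}$ argument on the uniform mixture $|\Mcal(s,[p]\bks T)|^{-1}\sum_{T'}P_{\thetab^{(T')}}$ against a reference Gaussian. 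The Gaussian likelihood-ratio identity gives cross moments proportional to $\exp\!\bigl(n\tau^{2}\one_{T_{1}'}'\Sigmab\one_{T_{2}'}\bigr)$, which a Jensen/entropy bookkeeping step in the mixture converts into an aggregate controlled by $\one'\Sigmab_{T\cup T',T\cup T'}\one$. Combined with a generalized Fano (or Le Cam--type) inequality for $\binom{p-s}{s}$ hypotheses, this shows the minimax risk stays above a positive constant whenever $n\tau^{2}\varphi_{\rm far}(\Sigmab)\lesssim\log\binom{p-s}{s}$, producing the first branch of $\tau_{\min}$.

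The main technical obstacle is the far-ensemble step. Unlike the close ensemble, where all pairwise KL divergences are equal by symmetry, the far-ensemble pairwise KL's depend on the overlap $|T_{1}'\cap T_{2}'|$, so a crude worst-case Fano bound would lose the factor that is being compared to $\varphi_{\rm far}$. The fix is to work with an averaged KL (generalized Fano) or equivalently with a $\chi^{2}$ on the uniform mixture, which produces $\one'\Sigmab_{T\cup T',T\cup T'}\one$ directly through the Gaussian exponential-moment identity. Once this aggregation is handled correctly, the rest of the argument is routine bookkeeping of Gaussian KL divergences and ensemble cardinalities, and taking $\tau<\tau_{\min}$ activates at least one of the two Fano bounds, yielding the required constant lower bound on the minimax risk.
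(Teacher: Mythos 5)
Your close-ensemble construction matches the paper's essentially verbatim: swap one coordinate $u$ of a base support chosen to attain the minima in $\varphi_{\rm close}$, compute the Gaussian KL divergence $\tfrac{n\tau^2}{4}(\Sigma_{uu}+\Sigma_{vv}-2\Sigma_{uv})$, average over $v$, and apply the multi-hypothesis bound with $M=p-s$. That half is correct.

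The far-ensemble step has a genuine gap. You argue that because the pairwise KL divergences depend on the overlap $|T_1'\cap T_2'|$, a "crude worst-case Fano bound" would lose a factor, and you therefore route the argument through a $\chi^2$/second-moment bound on the uniform mixture of the alternatives. But the aggregation step you need there is only asserted: the Gaussian exponential-moment identity for the mixture $\chi^2$ produces pairwise cross terms of the form $\exp\bigl(cn\tau^2\,\one_{T_1'}'\Sigmab\one_{T_2'}\bigr)$ between two \emph{alternatives}, and it is not shown (and not generally true) that a "Jensen/entropy bookkeeping step" converts the resulting sum into a bound governed by the average of $\one'\Sigmab_{T\cup T',T\cup T'}\one$, which is the quadratic form involving the \emph{null} support $T$ that defines $\varphi_{\rm far}$. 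Controlling that double sum of exponentials is a substantially harder requirement than the claimed condition $n\tau^2\varphi_{\rm far}(\Sigmab)\lesssim\log\binom{p-s}{s}$, so as written the far branch of $\tau_{\min}$ does not follow. The difficulty you are trying to circumvent does not actually arise: the version of Fano used in the paper (Theorem~2.5 of Tsybakov) only requires the \emph{averaged} KL divergence $\tfrac{1}{M}\sum_a KL(\PP_{\thetab_0}\,|\,\PP_{\thetab_a})\leq\alpha\log M$, and since each pairwise KL from the null equals $\tfrac{n\tau^2}{4}\one'\Sigmab_{T\cup T_a,T\cup T_a}\one$ and $\varphi_{\rm far}$ is by definition the average of exactly these quadratic forms over $T'\in\Mcal(s,[p]\bks T)$, the overlap-dependent heterogeneity is absorbed by the averaging with no loss. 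Replacing your mixture argument with this averaged-KL bound closes the gap and recovers the first branch of $\tau_{\min}$ directly.
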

Theorem \ref{thm:lower_bound} implies that for any estimating
procedure, whenever $\tau < \tau_{\min}$, there exists some
distribution parametrized by $\thetab \in \Theta(\Sigmab, \tau, s)$
such that the probability of incorrectly identifying the set
$T(\thetab)$ is strictly bounded away from zero.  To better understand
the quantities $\varphi_{\rm close}(\Sigmab)$ and $\varphi_{\rm
  far}(\Sigmab)$, we consider a special case when $\Sigmab = \Ib$. In
this case both quantities simplify a lot and we have $\varphi_{\rm
  close}(\Ib) = 2$ and $\varphi_{\rm far}(\Ib) = 2s$. From
Theorem~\ref{thm:lower_bound} and Theorem~\ref{thm:main:sda}, we see
that the SDA estimator is able to recover the true support set $T$
using the optimal number of samples (up to an absolute constant) over
the parameter space
  \[
  \Theta(\Sigmab, \tau_{\min}, s) \cap \{\thetab\ :\ \snbt \leq M \}
  \] 
  where $M$ is a fixed constant and $\Lambda_{\min}(\St)$ is bounded
  from below. This result will be further illustrated by numerical
  simulations in $\S$\ref{sec:simulation}.

\section{Exhaustive Search Decoder}
\label{sec:exhaustive_search}

In this section, we analyze an exhaustive search procedure, which
evaluates every subset $T'$ of size $s$ and outputs the one with the
best score. Even though the procedure cannot be implemented in
practice, it is a useful benchmark to compare against and it provides
deeper theoretical insights into the problem.

For any subset $T' \subset [p]$, we define
\begin{equation}
  \label{eq:f_decoder}
  f(T') = \min_{\ub \in \RR^{|T'|}} \left\{
    \ub'\hat \Sbb_{T'T'}\ub
    \ :\ 
    \ub'\hat\mub_{T'} = 1
    \right) = 
\min_{T' \subset [p]\ :\ |T'| = s}\ 
 \frac{1}{\hat \mub_{T'}'\Sbb^{-1}_{T'T'}\hat \mub_{T'}'}. \nonumber
\end{equation}
The exhaustive search procedure outputs the support set $\hat T$ that
minimizes $f(T')$ over all subsets $T'$ of size $s$,
\begin{equation*}
  \hat T = \argmin_{T' \subset [p]\ :\ |T'| = s} f(T')
= \argmax_{T' \subset [p]\ :\ |T'| = s} 
     \hat \mub_{T'}'\Sbb^{-1}_{T'T'}\hat \mub_{T'}.
\end{equation*}
Define $g(T') = \hat \mub_{T'}'\Sbb^{-1}_{T'T'}\hat
\mub_{T'}$. In order to show that the exhaustive search procedure
identifies the correct support set $T$, we need to show that with high probability
$g(T) > g(T')$ for any other set $T'$ of size $s$. The next result
gives sufficient conditions for this to happen. We first introduce
some additional notation. Let $A_1 = T\cap T'$, $A_2 = T\bks T'$ and
$A_3 = T' \bks T$. We  define the following quantities
  \begin{align*}
    a_1(T') &= \tmp{A_1}'\tSip{A_1 A_1}\tmp{A_1},\\
    a_2(T') &= \tmp{A_2\mid A_1}'\tSip{A_2 A_2\mid A_1}
               \tmp{A_2\mid A_1},\\
    a_3(T') &= \tmp{A_3\mid A_1}'\tSip{A_3 A_3\mid A_1}
               \tmp{A_3\mid A_1},
  \end{align*}
  where   $\tmp{A_2\mid A_1} 
    = \tmp{A_2} - 
    \tSp{A_2A_1}\tSip{A_1A_1}\tmp{A_1}$
and
$ \tSp{A_2A_2\mid A_1} = 
  \tSp{A_2A_2} - 
  \tSp{A_2A_1}\tSip{A_1A_1}\tSp{A_2A_1}$.
   The quantities $ \tmp{A_3\mid A_1} $ and $ \tSp{A_3A_3\mid A_1}$ are
defined similarly.
\begin{theorem}
  \label{thm:exhaustive_search}
  Assuming that for all $T' \subseteq [p]$ with $|T'| = s$ and $T' \neq
  T$ the following holds
\begin{equation}
\label{eq:exhaustive_identification}
  \begin{aligned}
a_2(T') - \rbr{1+C_1\sqrt{\Gamma_{n,p,s,k}}}a_3(T') 
&\geq
C_2\sqrt{
\rbr{1 \vee a_1(T')}
a_2(T')
\Gamma_{n,p,s,k}
}  \\
&\qquad +C_3\rbr{1 \vee a_1(T')}\Gamma_{n,p,s,k}
  \end{aligned},
\end{equation}
where $|T'\cap T|=k$, 
$\Gamma_{n,p,s,k} = n^{-1}\log\rbr{{p-s \choose
    s-k}{s \choose k}s\log(n)} $
and $C_1, C_2, C_3$ are constants independent of the problem
parameters, we have
$\PP[\hat T \neq T] = \Ocal(\log^{-1}(n))$.
\end{theorem}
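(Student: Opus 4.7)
The reduction is based on the block-matrix inversion identity. Writing $T = A_1 \cup A_2$ and $T' = A_1 \cup A_3$ with $A_1 = T \cap T'$, $A_2 = T \bks T'$, $A_3 = T' \bks T$, one obtains
\begin{equation*}
\hat g(T) = \hat a_1(T') + \hat a_2(T'), \qquad \hat g(T') = \hat a_1(T') + \hat a_3(T'),
\end{equation*}
where $\hat a_j(T')$ is the sample analogue of $a_j(T')$ formed by replacing $\tmp{\cdot}$ and $\tSp{\cdot}$ with $\hmp{\cdot}$ and $\hSp{\cdot}$. Hence $\{\hat T \neq T\} \subseteq \bigcup_{T' \neq T,\, |T'| = s}\{\hat a_2(T') \leq \hat a_3(T')\}$, and it suffices to show that this union has probability $\Ocal(\log^{-1}(n))$ under hypothesis~\eqref{eq:exhaustive_identification}.

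At the population level one has $a_2(T') \geq a_3(T')$: since $\betab_N = \zero$, adjoining the variables of $A_3$ to $T$ leaves the Mahalanobis distance unchanged, so $\mub_{T \cup T'}'\tSip{T \cup T', T \cup T'}\mub_{T \cup T'} = g(T)$, while restricting from $T \cup T'$ to the subset $T'$ can only decrease it. The role of \eqref{eq:exhaustive_identification} is to inflate this population margin by the expected sample fluctuation. To quantify the fluctuation, I would observe that, conditionally on $\hmp{A_1}$, $\hSp{A_1 A_1}$ and the cross-block sample covariances, $(n_1 n_2/n)\hat a_j(T')$ is a non-central Hotelling's $T^2$ statistic with $s-k$ numerator degrees of freedom and non-centrality proportional to $a_j(T')$ for $j \in \{2,3\}$, while $(n_1 n_2/n)\hat a_1(T')$ is likewise a non-central $T^2$ of dimension $k = |A_1|$. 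Standard concentration of non-central $F$ (equivalently, Hotelling $T^2$) random variables then yields, with probability at least $1 - \exp(-c n \Gamma_{n,p,s,k})$,
\begin{equation*}
|\hat a_j(T') - a_j(T')| \leq C\sqrt{(1 \vee a_j(T'))\,\Gamma_{n,p,s,k}} + C\,\Gamma_{n,p,s,k},\ \ j \in \{2,3\},
\end{equation*}
together with the multiplicative companion $\hat a_1(T') \leq (1 + C\sqrt{\Gamma_{n,p,s,k}})\,a_1(T') + C\,\Gamma_{n,p,s,k}$; the factor $(1 \vee a_1(T'))$ appearing in \eqref{eq:exhaustive_identification} then enters when translating raw marginal fluctuations of $\hmp{A_j}$ (governed by $\tSp{A_j A_j}$) into fluctuations measured in the conditional metric $\tSp{A_j A_j | A_1}$.

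Finally, I would take a union bound stratified by $k = |T \cap T'| \in \{0, \ldots, s-1\}$. For each $k$ there are $\binom{s}{k}\binom{p-s}{s-k}$ candidate subsets and $s$ values of $k$ to sum over, so combining the concentration inequalities with \eqref{eq:exhaustive_identification} gives $\hat a_2(T') > \hat a_3(T')$ simultaneously for all $T' \neq T$, with the additional $\log(n)$ factor inside $\Gamma_{n,p,s,k}$ producing the final $\Ocal(\log^{-1}(n))$ failure probability. The main obstacle will be obtaining two-sided concentration bounds for non-central Hotelling's $T^2$ in the specific asymmetric form demanded by \eqref{eq:exhaustive_identification} — a multiplicative correction $(1 + C_1\sqrt{\Gamma_{n,p,s,k}})$ on $\hat a_3(T')$ (necessary because $a_3(T')$ may be arbitrarily small), together with an additive deviation for $\hat a_2(T')$ — while correctly propagating the $(1 \vee a_1(T'))$ factor through the $A_1$-block conditioning. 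The conditional independence of the $A_2$- and $A_3$-blocks given the $A_1$-block is precisely what makes the decoupling possible; without it, $\hat g(T)$ and $\hat g(T')$ would be correlated Hotelling-type statistics for which sharp uniform control over exponentially many subsets $T'$ would be considerably harder.
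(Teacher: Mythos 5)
Your proposal follows essentially the same route as the paper's proof: decompose the two Mahalanobis statistics by conditioning on the common block $A_1$, identify the conditional laws of $\hat a_2(T')$ and $\hat a_3(T')$ as scaled non-central Hotelling-type quantities (the paper realizes this as a non-central $\chi^2$ numerator for $\tilde\mub_{j\mid 1}'\Sigmab_{jj\mid1}^{-1}\tilde\mub_{j\mid1}$ combined with an independent Wishart ratio via Muirhead's Theorem 3.2.12, with the scale $a=\tfrac{n}{n_1n_2}+(n-2)^{-1}\hmp{1}'\hSip{11}\hmp{1}$ producing the $(1\vee a_1(T'))$ factor), apply the corresponding tail bounds one-sidedly, and union bound stratified by $k=|T\cap T'|$ with per-set failure probability $\rbr{{p-s\choose s-k}{s\choose k}s\log(n)}^{-1}$. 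One small correction: the paper does not use, and does not need, conditional independence of the $A_2$- and $A_3$-blocks --- it explicitly notes the two terms of $\Delta(T')$ are correlated and simply ignores the correlation, bounding each term separately, so your closing remark overstates what the decoupling requires.
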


The condition in \eqref{eq:exhaustive_identification} allows the exhaustive search
decoder to distinguish between the sets $T$ and $T'$ with high
probability. Note that the Mahalanobis distance  decomposes as
$
g(T) = \hmp{A_1}'\hSip{A_1A_1}\hmp{A_1} +
\tilde\mub_{A_2\mid A_1}'\hSip{A_2A_2\mid A_1}\tilde\mub_{A_2\mid A_1}
$
where 
$
\tilde\mub_{A_2\mid A_1} = \hmp{A_2} -
\hSp{A_2A_1}\hSip{A_1A_1}\hmp{A_1}
$ and $
\hSp{A_2A_2\mid A_1} = \hSp{A_2A_2} -
\hSp{A_2A_1}\hSip{A_1A_1}\hSp{A_1A_2}$,
 and similarly
$
g(T') = \hmp{A_1}'\hSip{A_1A_1}\hmp{A_1} +
\tilde\mub_{A_3\mid A_1}'\hSip{A_3A_3\mid A_1}\tilde\mub_{A_3\mid A_1}.
$
Therefore $g(T) > g(T')$ if
$
\tilde\mub_{A_2\mid A_1}'\hSip{A_2A_2\mid A_1}\tilde\mub_{A_2\mid  A_1}
>
\tilde\mub_{A_3\mid A_1}'\hSip{A_3A_3\mid A_1}\tilde\mub_{A_3\mid A_1}
$. With infinite amount of data, it would be sufficient that $a_2(T')
> a_3(T')$. However, in the finite-sample setting, condition
\eqref{eq:exhaustive_identification} ensures that the separation is
big enough.
If $\Xb_T$ and $\Xb_N$ are independent, then the expression
\eqref{eq:exhaustive_identification} can be simplified by dropping the
second term on the left hand side.

Compared to the result of
Theorem~\ref{thm:main:sda},  the exhaustive search
procedure does not require the covariance matrix to satisfy the
irrepresentable condition given in \eqref{eq:assum:irrepresentable}.

\section{Implications of Our Results}
\label{sec:consequence}

In this section, we give some implications of our results.  We start
with the case when the covariance matrix $\Sigmab = \Ib$. The same
implications hold for other covariance matrices that satisfy
$\Lambda_{\min}(\Sigmab) \geq C > 0$ for some constant $C$ independent
of $(n,p,s)$. We first illustrate a regime where the SDA estimator is
optimal for the problem of identifying the relevant variables. This is
done by comparing the results in Theorem~\ref{thm:main:sda} to those
of Theorem~\ref{thm:lower_bound}. Next, we point out a regime where
there exists a gap between the sufficient and necessary conditions of
Theorem~\ref{thm:lower_bound} for both the exhaustive search decoder
and the SDA estimator.  Throughout the section, we assume that $s =
o(\min(n, p))$.

When $\Sigmab = \Ib$, we have that $\bt =
\mt$. Let
\[
\mu_{\min} = \min_{a \in T} |\mub_{T}|.
\] 
Theorem~\ref{thm:lower_bound} gives a lower bound on $\mu_{\min}$ as
\[
\mu_{\min} \gtrsim \sqrt{\frac{\log(p-s)}{n}}.
\]
If some components of the vector $\mt$ are smaller in absolute value
than $\mu_{\min}$, no procedure can reliably recover the support.  We
will compare this bound with sufficient conditions given in Theorems
\ref{thm:main:sda} and
\ref{thm:exhaustive_search}.

First, we assume that $\norm{\mt}_2^2 = C$ for some constant
$C$. Theorem~\ref{thm:main:sda} gives that $\mu_{\min} \gtrsim
\sqrt{\small\frac{\log(p-s)}{n}}$ is sufficient for the SDA estimator
to consistently recover the relevant variables when $n \gtrsim
s\log(p-s)$. This effectively gives $\mu_{\min} \gtrsim s^{-1/2}$,
which is the same as the necessary condition of
Theorem~\ref{thm:lower_bound}.

Next, we investigate the condition in
\eqref{eq:exhaustive_identification}, which is sufficient for the
exhaustive search procedure to identify the set $T$. Let $T' \subset
[p]$ be a subset of size $s$. Then, using
the notation of Section~\ref{sec:exhaustive_search}, 
\[
a_1(T') = \norm{\tmp{A_1}}_2^2,\ 
a_2(T') = \norm{\tmp{A_2}}_2^2,
\text{ and }
a_3(T') = 0.
\]
Now, if $|T'\cap T| = s-1$ and $T'$ does not contain a smallest
component of $\mt$, \eqref{eq:exhaustive_identification} simplifies to
$\mu_{\min} \gtrsim \sqrt{\small \frac{\log(p-s)}{n}}$, since
$\norm{\tmp{A_1}}_2^2 \leq \norm{\mt}_2^2 = C$.  This shows that both
the SDA estimator and the exhaustive search procedure can reliably
detect signals at the information theoretic limit in the case when the
norm of the vector $\mt$ is bounded and $\mu_{\min} \gtrsim s^{-1/2}$.
However, when the norm of the vector $\mt$ is not bounded by a
constant, for example, $\mu_{\min} = C'$ for some constant $C'$,
Theorem~\ref{thm:lower_bound} gives that at least $n \gtrsim
\log(p-s)$ data points are needed, while $n \gtrsim s\log(p-s)$ is
sufficient for correct recovery of the support set $T$. This situation
is analogous to the known bounds on the support recovery in the sparse
linear regression setting \citep{Wain:09b}.

Next, we show that the largest eigenvalue of a covariance matrix
$\Sigmab$ can diverge, without affecting the sample size required for
successful recovery of the support set $T$. Let $\Sigmab =
(1-\gamma)\Ib_p + \gamma\one_p\one_p'$ for $\gamma \in [0, 1)$.  We
have $\Lambda_{\max}(\Sigmab) = 1 + (p-1)\gamma$, which diverges to
infinity for any fixed $\gamma$ as $p\rightarrow\infty$. Let $T=[s]$
and set $\bt = \beta \one_T$. This gives $\mt =
\beta(1+\gamma(s-1))\one_T$ and $\mub_{N} = \gamma\beta s\one_N$. A
simple application of the matrix inversion formula gives
\[
\Sigmab^{-1}_{TT} = (1-\gamma)^{-1}\Ib_s 
- \frac{\gamma}{(1-\gamma)(1+\gamma(s-1))}\one_T\one_T'.
\]

A lower bound on $\beta$ is obtained from
Theorem~\ref{thm:lower_bound} as $\beta \geq
\sqrt{\frac{2}{1-\gamma} \frac{\log(p-s)}{n}}$. This follows
from a simple calculation that establishes $ \varphi_{\rm
  close}(\Sigmab) = 2(1-\gamma) $ and $ \varphi_{\rm far}(\Sigmab) =
2s(1-\gamma) + (2s)^2\gamma.  $

Sufficient conditions for the SDA estimator
follow from Theorem \ref{thm:main:sda}. A
straightforward calculation shows that
\[
\sigma_{a|T} = \frac{(1-\gamma)(1+\gamma s)}{1+\gamma(s-1)},\ 
\Lambda_{\min}(\Sigmab) = 1-\gamma,\ 
\norm{\Sti\sbt}_\infty = \frac{1}{1+\gamma(s-1)}.
\]
This gives that $\beta \geq K \sqrt{\frac{\log(p-s)}{(1-\gamma)n}}$
(for $K$ large enough) is sufficient for recovering the set $T$,
assuming that $\snbt = \Ocal(1)$. This matches the lower bound,
showing that the maximum eigenvalue of the covariance matrix $\Sigmab$
does not play a role in characterizing the behavior of the SDA
estimator.

\section{Simulation Results}
\label{sec:simulation}

In this section, we conduct several simulations to illustrate the
finite-sample performance of our results.  Theorem~\ref{thm:main:sda}
describes the sample size needed for the SDA estimator to recover the
set of relevant variables. We consider the following three scalings
for the size of the set $T$:
\begin{enumerate}
\item fractional power sparsity, where $s = \lceil 2p^{0.45}
  \rceil$
\item sublinear sparsity, where $s = \lceil 0.4p/\log(0.4p)
  \rceil$, and
\item linear sparsity, where $s = \lceil 0.4p \rceil$. 
\end{enumerate}
For all three scaling regimes, we set the sample size as 
\[
n = \theta s \log(p)
\]
where $\theta$ is a control parameter that is varied.  We
investigate how well can the SDA estimator recovers
the true support set $T$ as the control parameter $\theta$ varies.

\begin{figure}[ht!]
  \begin{subfigure}[b]{\columnwidth}
    \centering
    \includegraphics[width=\textwidth]{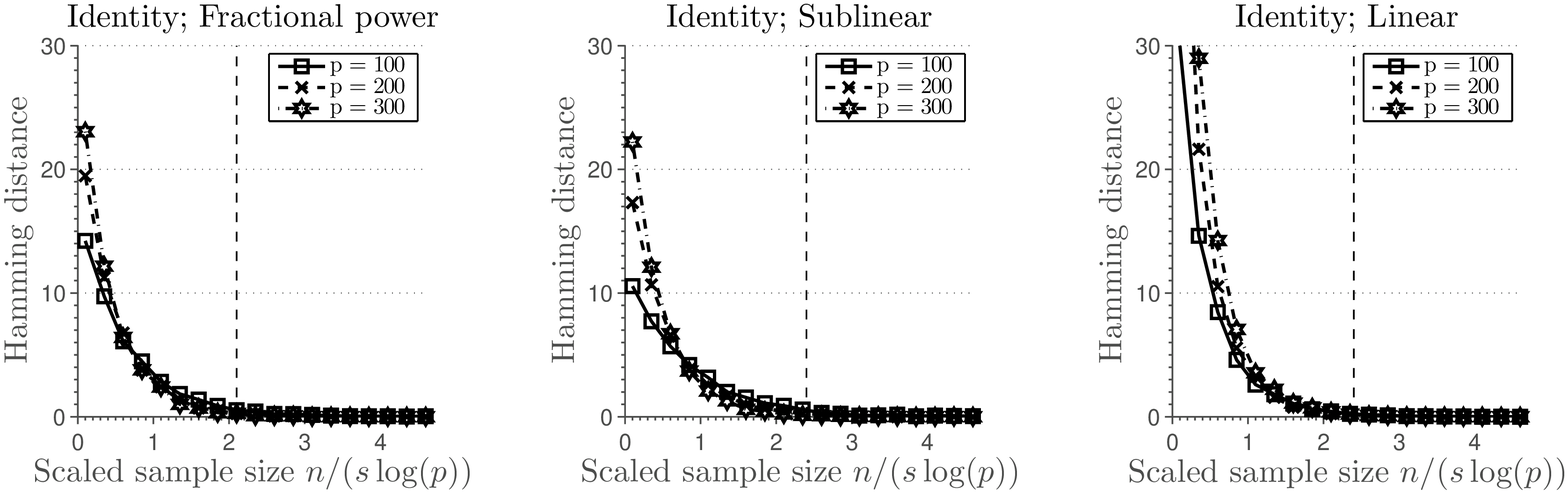}    
    \label{fig:identity_sda}
   \end{subfigure}%
   \\[-0.0cm]

   \caption{(The SDA Estimator) Plots of the rescaled sample size $n /
     (s\log(p))$ versus the Hamming distance between $\hat T$ and $T$
     for identity covariance matrix $\Sigmab = \Ib_p$ (averaged over
     200 simulation runs).  Each subfigure shows three curves,
     corresponding to the problem sizes $p \in \{100, 200, 300\}$. The
     first subfigure corresponds to the fractional power sparsity
     regime, $s=2p^{0.45}$, the second subfigure corresponds to the
     sublinear sparsity regime $s = 0.4p/\log(0.4p)$, and the third
     ssubfigure corresponds to the linear sparsity regime $s =
     0.4p$. Vertical lines denote a scaled sample size at which the
     support set $T$ is recovered correctly.  }
  \label{fig:identity}
\end{figure}

\begin{figure}[ht!]

  \begin{subfigure}[b]{\columnwidth}
    \centering
    \includegraphics[width=\textwidth]{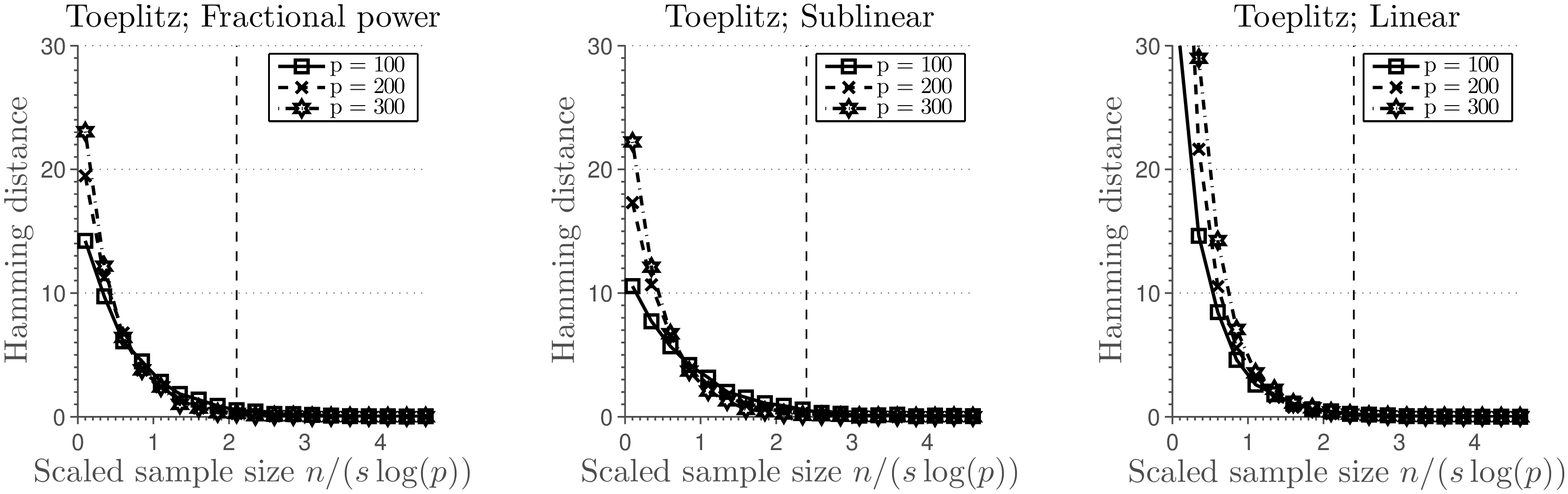}    
    \label{fig:toeplitz_sda}
   \end{subfigure}%
   \\[0cm]

   \caption{(The SDA Estimator) Plots of the rescaled sample size $n /
     (s\log(p))$ versus the Hamming distance between $\hat T$ and $T$
     for the Toeplitz covariance matrix $\Sigmab_{TT}$ with $\rho =
     0.1$ (averaged over 200 simulation runs).  Each subfigure shows
     three curves, corresponding to the problem sizes $p \in \{100,
     200, 300\}$. The first subfigure corresponds to the fractional
     power sparsity regime, $s=2p^{0.45}$, the second subfigure
     corresponds to the sublinear sparsity regime $s =
     0.4p/\log(0.4p)$, and the third subfiguren corresponds to the
     linear sparsity regime $s = 0.4p$. Vertical lines denote a scaled
     sample size at which the support set $T$ is recovered correctly.
   }
  \label{fig:toeplitz}
\end{figure}

\begin{figure}[ht!]
  \centering
  
  \begin{subfigure}[b]{\columnwidth}
    \centering
    \includegraphics[width=\textwidth]{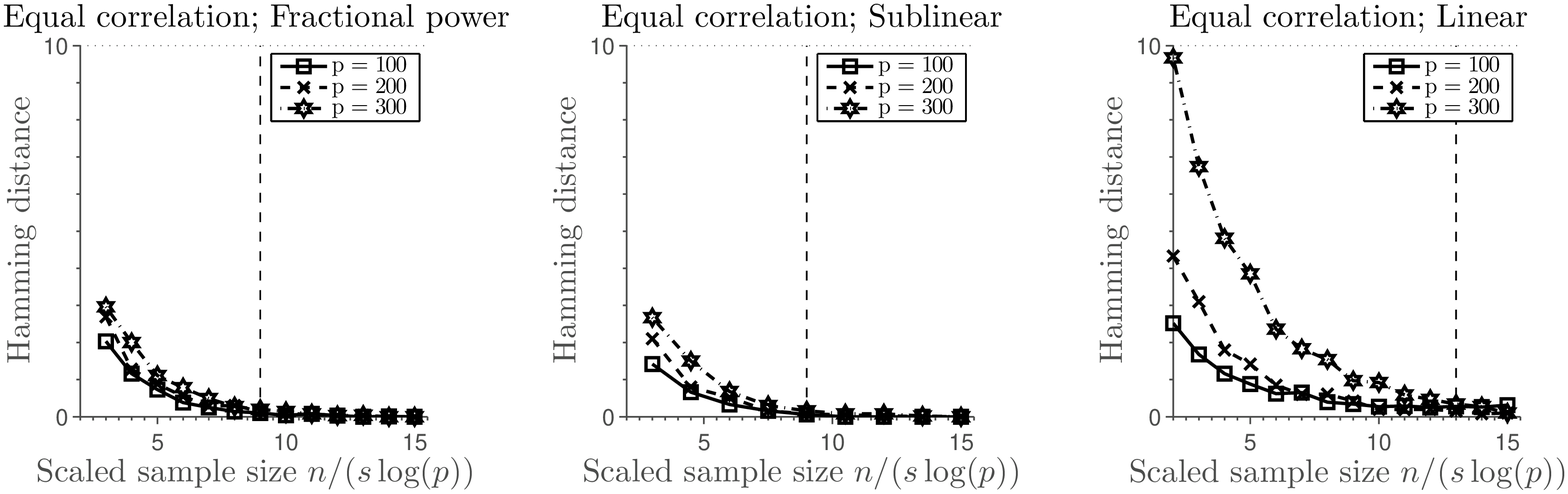}    
    \label{fig:equalCor_sda}
   \end{subfigure}%
   \\[0cm]

   \caption{(The SDA Estimator) Plots of the rescaled sample size $n /
     (s\log(p))$ versus the Hamming distance between $\hat T$ and $T$
     for equal correlation matrix $\Sigmab_{TT}$ with $\rho = 0.1$
     (averaged over 200 simulation runs). Each subfigure shows three
     curves, corresponding to the problem sizes $p \in \{100, 200,
     300\}$. The first subfigure corresponds to the fractional power
     sparsity regime, $s=2p^{0.45}$, the second subfigure corresponds
     to the sublinear sparsity regime $s = 0.4p/\log(0.4p)$, and the
     third subfigure corresponds to the linear sparsity regime $s =
     0.4p$. Vertical lines denote a scaled sample size at which the
     support set $T$ is recovered correctly.  }
  \label{fig:equalCor}
\end{figure}

We set $\PP[Y=1] = \PP[Y=2] = \frac{1}{2}$, $\Xb|Y=1 \sim \Ncal(\mub,
\Sigmab)$ and without loss of generality $\Xb|Y=2 \sim \Ncal(\zero,
\Sigmab)$. We specify the vector $\mub$ by choosing the set $T$ of
size $|T| = s$ randomly, and for each $a \in T$ setting $\mu_a$ equal
to $+1$ or $-1$ with equal probability, and $\mu_a = 0$ for all
components $a \not\in T$. We specify the covariance matrix $\Sigmab$
as
\begin{equation*}
  \Sigmab = 
  \left(
    \begin{array}{cc}
      \Sigmab_{TT} & \zero \\
      \zero & \Ib_{p-s}
    \end{array}
  \right)
\end{equation*}
so that $\betab = \Sigmab^{-1}\mub = (\betab_T', \zero')'$.  We
consider three cases for the block component $\Sigmab_{TT}$:
\begin{enumerate}
\item identity matrix, where $\Sigmab_{TT} = \Ib_s$,
\item Toeplitz matrix, where $\Sigmab_{TT} = [\Sigma_{ab}]_{a,b\in T}$
  and $\Sigma_{ab} = \rho^{|a-b|}$ with $\rho = 0.1$, and
\item equal correlation matrix, where $\Sigma_{ab} = \rho$ when $a\neq
  b$ and $\sigma_{aa} = 1$. 
\end{enumerate}
Finally, we set the penalty parameter 
$\lambda = \lambda_{\rm SDA}$ as
\begin{equation*}
\lambda_{\rm SDA} = 0.3 \times 
\rbr{1+\snbt/4}^{-1}
       \sqrt{
         \rbr{1\vee\snbt}
         \frac{\log\rbr{p-s}}{n}
       }       
\end{equation*}
for all cases. We also tried several different constants and found
that our main results on high dimensional scalings are insensitive to
the choice of this constant.  For this choice of $\lambda$,
Theorem~\ref{thm:main:sda} predicts that the set $T$ will be recovered
correctly. For each setting, we report the Hamming distance between
the estimated set $\hat T$ and the true set $T$,
\begin{equation*}
h(\hat T, T) = |(\hat T \bks T)\cup(T \bks \hat T)|,  
\end{equation*}
averaged over $200$ independent simulation runs.

Figure~\ref{fig:identity} plots the Hamming distance against the
control parameter $\theta$, or the rescaled number of samples. Here
the Hamming distance between $\hat T$ and $T$ is calculated by
averaging $200$ independent simulation runs.  There are three
subfigures corresponding to different sparsity regimes (fractional
power, sublinear and linear sparsity), each of them containing three
curves for different problem sizes $p \in \{100, 200, 300\}$.
Vertical line indicates a threshold parameter $\theta$ at which the
set $T$ is correctly recovered. If the parameter is smaller than the
threshold value, the recovery is poor. Figure~\ref{fig:toeplitz} and
Figure~\ref{fig:equalCor} show results for two other cases, with
$\Sigmab_{TT}$ being a Toeplitz matrix with parameter $\rho = 0.1$ and
the equal correlation matrix with $\rho=0.1$. To illustrate the effect
of correlation, we set $p=100$ and generate the equal correlation
matrices with $\rho \in \{0, 0.1, 0.3, 0.5, 0.7, 0.9 \}$. Results are
given in Figure~\ref{fig:equalCor_2}.

\begin{figure}[t]
  \centering
  
  \begin{subfigure}[b]{\columnwidth}
    \centering
    \includegraphics[width=\textwidth]{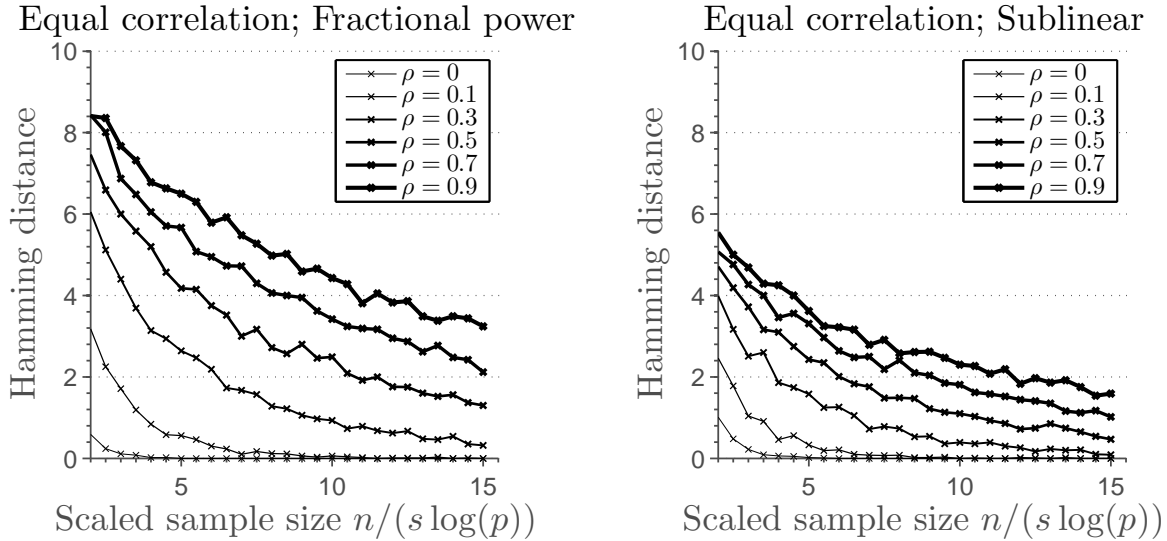}    
    \label{fig:equalCor_sda_2}
   \end{subfigure}%
   \\[0cm]

   \caption{ (The SDA Estimator) Plots of the rescaled sample size $n
     / (s\log(p))$ versus the Hamming distance between $\hat T$ and
     $T$ for equal correlation matrix $\Sigmab_{TT}$ with $\rho \in
     \{0, 0.1, 0.3, 0.5, 0.7, 0.9\}$ (averaged over 200 simulation
     runs). The ambient dimension is set as $p=100$.  The first
     subfigure corresponds to the fractional power sparsity regime,
     $s=2p^{0.45}$ and the second subfigure corresponds to the
     sublinear sparsity regime $s = 0.4p/\log(0.4p)$. }
  \label{fig:equalCor_2}
\end{figure}

\section{Discussion}
\label{sec:discussion}

In this paper, we address the problem of variable selection in
high-dimensional discriminant analysis problem. The problem of
reliable variable selection is important in many scientific areas
where simple models are needed to provide insights into complex
systems. Existing research has focused primarily on establishing
results for prediction consistency, ignoring feature selection. We
bridge this gap, by analyzing the variable selection performance of
the SDA estimator and an exhaustive search decoder. We establish
sufficient conditions required for successful recovery of the set of
relevant variables for these procedures. This analysis is complemented
by analyzing the information theoretic limits, which provide necessary
conditions for variable selection in discriminant analysis. From these
results, we are able to identify the class of problems for which the
computationally tractable procedures are optimal. In this section, we
discuss some implications and possible extensions of our results.

\subsection{Theoretical Justification of the ROAD and Sparse Optimal
  Scaling Estimators}

In a recent work, \cite{mai2012note} show that the SDA estimator is
numerically equivalent to the ROAD estimator 
proposed by \cite{wu2009sparse, fan2010road} and the sparse optimal
scaling estimator proposed by \cite{clemmensen2011sparse}. More
specifically, all these three methods have the same regularization
paths up to a constant scaling.  This result allows us to apply the
theoretical results in this paper to simultaneously justify the
optimal variable selection performance of the ROAD and sparse optimal
scaling estimators.

\subsection{Risk Consistency}

The results of Theorem \ref{thm:main:sda} can be
used to establish risk consistency of the SDA estimator.
Consider the following classification rule
\begin{equation}
  \label{eq:rule_class}
  \hat y(\xb) = \left\{ 
    \begin{array}{cl}
      1 & \text{if } g(\xb; \hat \vb) = 1 \\
      2 & \text{otherwise }
  \end{array}
  \right.
\end{equation}
where $g(\xb; \hat \vb) = I\sbr{
    \hat\vb'(\xb - (\hat\mub_1 + \hat\mub_2)/2) > 0
  }$
with  $\hat \vb = \vsda$.
Under the assumption that $\betab = (\bt', \zero')'$, 
the risk (or the error rate) of the Bayes rule
defined in \eqref{bayes_rule} is 
$
R_{\rm opt} = \Phi\rbr{ 
-\sqrt{\mt'\Sti\mt}/2
}$
,
where $\Phi$ is the cumulative distribution function of a standard
Normal distribution.  We will compare the risk of the SDA estimator
against this Bayes risk. 

Recall the setting introduced in 
$\S$\ref{sec::mainresult},  conditioning on the data
points $\{\xb_i,y_i\}_{i\in[n]}$, the conditional error rate  is
\begin{equation}
  \label{eq:error_rate}
  \begin{aligned}
  R(\hat\wb) =
  \frac{1}{2}
  \sum_{i \in \{ 1, 2\}}
  \Phi\rbr{
    \frac{-\hat\vb'(\mub_{i} - \hat\mub_{i}) -
      \hat\vb'\hat\mub/2}{\sqrt{\hat\vb'\Sigmab\hat\vb}}
  }.    
  \end{aligned}
\end{equation}
Let $ r_n = \lambda\norm{\bt}_1$ and $q_n = \sbt'\St\sbt$.
We have the following result  on risk consistency.

\begin{corollary}
\label{thm:risk}
Let $\hat \vb = \vsda$.
We assume that the conditions of 
Theorem~\ref{thm:main:sda} hold
with
\begin{equation}
    \label{eq:sample_size:risk}
    n \asymp K(n) \rbr{\max_{a\in N}\sigma_{a|T}}
           \Lambda_{\min}^{-1}(\St)
           s\log\rbr{(p-s)\log(n)},
\end{equation}
where $K(n)$ could potentially scale with $n$, and $\snbt \geq C > 0$. 
Furthermore,  we assume that $r_n \xrightarrow{n\rightarrow\infty} 0$. Then
\[
R(\hat\wb) = 
\Phi\rbr{
-\frac{\nbt}{2}
\frac{\rbr{1+\Ocal_P\rbr{r_n}}}
{\sqrt{1+\Ocal_P\rbr{r_n
\vee\frac{\lambda_0^2q_n}{\snbt}}}}
}.
\]
\end{corollary}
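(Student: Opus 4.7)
The plan is to leverage Theorem~\ref{thm:main:sda} to reduce the risk computation to a perturbation analysis on the support $T$, then carefully track how the two sources of error---sampling noise in $\hmt,\hSt$ and regularization bias $\lambda$---propagate through the signal-to-noise ratio inside $\Phi$. On the event $\{\hat\vb = (\hat\vb_T',\zero')',\ \sgn(\hat\vb_T) = \sbt\}$, which holds with probability $1-\Ocal(\log^{-1}(n))$ by Theorem~\ref{thm:main:sda}, every inner product appearing in \eqref{eq:error_rate} collapses to its $T$-restriction. I would then substitute the closed form $\hat\vb_T = c\,\hat\bt - \lambda\hSti\sbt$ taken from \eqref{eq:solution:hat_wt}, where $c := \tfrac{n_1n_2}{n(n-2)}(1+\lambda\norm{\hat\bt}_1)/(1+\tfrac{n_1n_2}{n(n-2)}\hsnbt)$, so that both numerator and denominator of the $\Phi$-argument become explicit functions of $\hmt,\hSt$ and the population quantities $\mt,\St,\sbt$.

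For the numerator I would decompose $\hat\vb_T'\hmt = c\hsnbt - \lambda\sbt'\hmt$. Using Hanson--Wright type concentration of $\hsnbt = \hmt'\hSti\hmt$ around $\snbt$ together with standard Gaussian-mean concentration for $\sbt'\hmt$ around $\sbt'\mt = \norm{\bt}_1$ (tools already exploited in the proof of Theorem~\ref{thm:main:sda}), the main signal term reduces to $\hat\vb_T'\hmt = c\,\snbt\bigl(1+\Ocal_P(r_n)\bigr)$ under $r_n\to0$; the boundary term $\hat\vb_T'(\mub_{i,T}-\hmub_{i,T})$ is, conditionally on the training covariance, centred Gaussian with variance $n_i^{-1}\hat\vb_T'\St\hat\vb_T$, and the assumption $\snbt\geq C>0$ together with the sample-size scaling \eqref{eq:sample_size:risk} forces it to be of smaller order than the signal. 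For the denominator, I would expand
\[
\hat\vb_T'\St\hat\vb_T = c^2\,\hat\bt'\St\hat\bt - 2c\lambda\,\hat\bt'\St\hSti\sbt + \lambda^2\,\sbt'\hSti\St\hSti\sbt,
\]
and use the same concentration to obtain $\hat\bt'\St\hat\bt = \snbt(1+o_P(1))$ and $\hat\bt'\St\hSti\sbt = \norm{\bt}_1(1+o_P(1))$, so that the cross-term contributes a relative error of order $\lambda\norm{\bt}_1/(c\snbt)\asymp r_n$, while the pure-bias term $\lambda^2\,\sbt'\hSti\St\hSti\sbt$ contributes a relative error of order $\lambda_0^2 q_n/\snbt$ after invoking $\lambda = \lambda_0/(1+\pi_1\pi_2\snbt)$ and the definition $q_n = \sbt'\St\sbt$. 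Putting these two bounds together yields $\hat\vb_T'\St\hat\vb_T = c^2\snbt\bigl(1+\Ocal_P(r_n \vee \lambda_0^2 q_n/\snbt)\bigr)$. Dividing the numerator by the square root of the denominator, the scalar $c$ cancels identically and one recovers the claimed expression $-\tfrac{\nbt}{2}(1+\Ocal_P(r_n))/\sqrt{1+\Ocal_P(r_n \vee \lambda_0^2 q_n/\snbt)}$; the two summands $i=1,2$ in \eqref{eq:error_rate} produce the same ratio up to $o_P(\cdot)$ and can be combined.

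The hardest step will be the sharp handling of the bias term $\lambda^2\,\sbt'\hSti\St\hSti\sbt$ in the denominator: naive sup-norm bounds on $\hSt-\St$, as employed in \cite{mai2012}, would inflate this quantity by factors of $s$ and destroy the rate. To obtain the correct scaling one must combine Hanson--Wright concentration of quadratic forms with the rank-one perturbation identity for $\hSti$, mirroring the analysis used for $\hSti\hmt$ and $\hSti\sgn(\hat\bt)$ in the proof of Theorem~\ref{thm:main:sda}. A secondary subtlety is making sure that the Gaussian fluctuation $\hat\vb_T'(\mub_{i,T}-\hmub_{i,T})$ is absorbed into the $\Ocal_P(r_n)$ factor; this relies on the assumption $\snbt\geq C>0$, which guarantees that the signal term $c\snbt$ dominates the sampling noise of the mean estimator, and on the cancellation of $c$ in the final ratio.
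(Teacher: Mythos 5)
Your overall strategy matches the paper's: restrict to the event of Theorem~\ref{thm:main:sda}, substitute the closed form of $\hat\vb_T$, and control the numerator and denominator of the argument of $\Phi$ separately, with the common scalar factor cancelling in the final ratio. Two points of execution differ, and one is a genuine error. First, your treatment of the boundary term $\hat\vb_T'(\mub_{i,T}-\hat\mub_{i,T})$ as ``conditionally on the training covariance, centred Gaussian'' does not hold: $\hat\vb_T$ is itself a function of $\hmt=\hat\mub_{2,T}-\hat\mub_{1,T}$ (through $\hSti\hmt$, $\norm{\hat\bt}_1$ and $\hsnbt$), so $\hat\mub_{i,T}$ appears on both sides of the inner product and the term is not conditionally Gaussian. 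The paper avoids this by a deterministic Cauchy--Schwarz bound,
\[
\abr{(\mub_{1,T}-\hat\mub_{1,T})'\hSti\hmt}\leq
\norm{\hSt^{-1/2}(\mub_{1,T}-\hat\mub_{1,T})}_2\,\norm{\hSt^{-1/2}\hmt}_2
=\nbt\,\Ocal_P\rbr{\sqrt{\Lambda_{\min}^{-1}(\St)\,s\log\log(n)/n}},
\]
which is then absorbed into the $\Ocal_P(r_n)$ factor; you should replace your conditional-Gaussian argument by this (or an equivalent) bound. Second, for the denominator you propose to control the $\St$-sandwiched quadratic forms $\hat\bt'\St\hat\bt$ and $\sbt'\hSti\St\hSti\sbt$ directly, and you rightly flag this as the hardest step; the paper sidesteps it entirely by writing $\tilde\vb_T'\St\tilde\vb_T\leq\Lambda_{\max}\rbr{\hSt^{-1/2}\St\hSt^{-1/2}}\,\tilde\vb_T'\hSt\tilde\vb_T$, where the eigenvalue factor is $1+\Ocal_P(\sqrt{s/n})$ and the $\hSt$-quadratic form expands into $\hsnbt$, $\hmt'\hSti\sbt$ and $\sbt'\hSti\sbt$, all already controlled by the lemmas used for Theorem~\ref{thm:main:sda}. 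Adopting that reduction removes the need for any new concentration result (no Hanson--Wright argument is required) and completes the proof along the lines you sketched; aside from these two repairs, and the small transcription slip where $\lambda\sbt'\hmt$ should read $\lambda\hmt'\hSti\sbt$ (whose population value is indeed $\norm{\bt}_1$), your outline is sound.
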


First, note that
$
{\norm{\bt}_1}/{\nbt} = o\rbr{
\sqrt{{K(n)s}/{\Lambda_{\min}(\St)}}
}
$
is sufficient for $r_n \xrightarrow{n\rightarrow\infty} 0$.
Under the conditions of Theorem~\ref{thm:risk}, we have that 
$
{\snbt}/\rbr{\lambda_0^2q_n} = 
\Ocal\rbr{
K(n)s/\rbr{\Lambda_{\min}(\St)q_n}
}
=
\Ocal\rbr{K(n)}$.
Therefore, if $K(n) \xrightarrow{n\rightarrow\infty}\infty$ and
$
K(n) \geq C s\snbt/\rbr{\Lambda_{\min}(\St)\norm{\bt}_1^2}
$
 we have
\[
R(\hat\wb) = 
\Phi\rbr{
  -\frac{\nbt}{2}
  \rbr{1+\Ocal_P\rbr{r_n}}
}
\]
and $R(\hat\wb) - R_{\rm opt} \rightarrow_P 0$.
If in addition
$$
\nbt\norm{\bt}_1 = o\rbr{
\sqrt{{K(n)s}/{\Lambda_{\min}(\St)}}
},
$$
then 
$
R(\hat\wb)/R_{\rm opt} \rightarrow_P 1,
$ using Lemma~1 in \cite{shao2011lda}.

The above discussion shows that the conditions of
Theorem~\ref{thm:main:sda} are sufficient for establishing risk
consistency.  We conjecture that substantially less restrictive
conditions are needed to establish risk consistency results. Exploring
such weaker conditions is beyond the scope of this paper.

\subsection{Approximate sparsity}
\label{sec:approx_sparse}

Thus far, we were discussing estimation of discriminant directions
that are exactly sparse. However, in many applications it may be the
case that the discriminant direction $\betab = (\bt', \betab_N')' =
\Sigmab^{-1}\mub$ is only approximately sparse, that is, $\betab_N$ is
not equal to zero, but is small. In this section, we briefly discuss
the issue of variable selection in this context. 

In the approximately sparse setting, since $\betab_N \neq \zero$, a
simple calculation gives
\begin{equation}
  \label{eq:approx:1}
  \bt = \Sigmab_{TT}^{-1}\mub_{T} - \Sti\Sigmab_{TN}\betab_N
\end{equation}
and
\begin{equation}
  \label{eq:approx:2}
  \mub_{N} =
  \Sigmab_{NT}\Sigmab_{TT}^{-1}\mub_{T} +
  \rbr{\Sigmab_{NN}- \Sigmab_{NT}\Sti\Sigmab_{TN}}\betab_N.
\end{equation}
In what follows, we provide conditions under which the solution to the
population version of the SDA estimator, given in
\eqref{eq:population:optimization}, correctly recovers the support of
large entries $T$.
Let $\hat\wb = (\hat\wb_T', \zero')'$ where $\hat\wb_T$ is given as
\[
    \hat\wb_T = \pi_1\pi_2
    \frac{1+\lambda\norm{\tilde\bt}_1}
         {1 + \pi_1\pi_2\norm{\tilde\bt}_{\St}^2}
    \tilde\bt - 
    \lambda\Sigmab_{TT}^{-1}\sgn(\tilde\bt)
\]
with $\tilde\bt=\Sti\mt$. We will show that $\hat\wb$ is the solution
to \eqref{eq:population:optimization}.

We again define $ \beta_{\min} = \min_{a \in T} |\beta_a|$.  Following
a similar argument as the proof of Theorem \ref{thm:sda_population},
we have that $\sgn(\hat\wb_T) = \sgn\rbr{\tilde\bt}$ holds if
$\tilde\bt$ satisfies 
\begin{equation}
\pi_1\pi_2
\frac{1 +
  \lambda\norm{\tilde\bt}_1}{1 + \pi_1\pi_2\norm{\tilde\bt}_{\St}^2}
   \beta_{\min} > 
    \lambda \norm{\Sti\sgn\rbr{\tilde\bt}}_\infty.  \label{eq:assum:beta_min:approx}
\end{equation}
In the approximate sparsity setting, it is reasonable to assume that
$\Sti\Sigmab_{TN}\betab_N$ is small compared to $\tilde\bt$, which
would imply that $\sbt = \sgn\rbr{\tilde\bt}$ using
\eqref{eq:approx:1}.  Therefore, under suitable assumptions we have
$\sgn(\hat\wb_T) = \sbt$. Next, we need conditions under which
$\hat\wb$ is the solution to \eqref{eq:population:optimization}.

Following a similar analysis as in Lemma~\ref{lem:global_solution},
the optimality condition
\[
\norm{
    \rbr{\Sigmab_{NT} + \pi_1\pi_2\mub_N\mub_T'}\hat\wb_T - \pi_1\pi_2\mub_N}_\infty
 \leq \lambda
\]
needs to hold. Let $\hat{\gamma} = \ds \frac{1 + \lambda\norm{\tilde\bt}_1}{1 +
  \pi_1\pi_2\norm{\tilde\bt}_{\St}^2}$.
Using \eqref{eq:approx:2}, the above display becomes 
\[
\norm{
 -\lambda \Sigmab_{NT}\Sti\sbt 
- \pi_1\pi_2\hat{\gamma}
\rbr{\Sigmab_{NN}- \Sigmab_{NT}\Sti\Sigmab_{TN}}\betab_N
}_\infty < \lambda.
\]
Therefore, using  the triangle inequality, the following assumption
\[
\pi_1\pi_2
\hat{\gamma}\cdot\norm{
\rbr{\Sigmab_{NN}- \Sigmab_{NT}\Sti\Sigmab_{TN}}\betab_N
}_\infty < \alpha\lambda,
\]
in addition to \eqref{eq:assum:irrepresentable} and
\eqref{eq:assum:beta_min:approx}, is sufficient for $\hat\wb$ to
recover the set of important variables $T$.

The above discussion could be made more precise and extended to the
sample SDA estimator in \eqref{eq:opt_problem1}, by following the
proof of Theorem~\ref{thm:main:sda}. This is beyond the scope of the
current paper and will be left as a future investigation.

\appendix

\section{Proofs Of Main Results}

In this section, we collect proofs of results given in the main text.
We will use $C, C_1, C_2, \ldots$ to denote generic constants that do
not depend on problem parameters. Their values may change from line to
line.

Let
\begin{equation}
\label{eq:acal} 
\Acal = 
\Ecal_{n}\cap\Ecal_1(\log^{-1}(n))\cap\Ecal_2(\log^{-1}(n))
\cap\Ecal_3(\log^{-1}(n))\cap\Ecal_4(\log^{-1}(n)),
\end{equation}
where $\Ecal_{n}$ is defined in \eqref{eq:event_n}, $\Ecal_1$ in
Lemma~\ref{lem:hmt_hsti_hmt_ratio}, $\Ecal_2$ in
Lemma~\ref{lem:hmt_sti_hmt}, $\Ecal_3$ in \eqref{eq:event_e3}, and
$\Ecal_4$ in \eqref{eq:event_e4}. We have that $\PP[\Acal] \geq 1 -
\Ocal\rbr{\log^{-1}(n)}$.

\subsection{Proofs of Results in Section~\ref{sec:sda}}

\begin{proof}[Proof of Lemma~\ref{lem:global_solution}]
  From the KKT conditions given in \eqref{eq:KKT}, we have that
  $\hat \wb = (\hat\wb_T', \zero')'$ is a solution to the problem in
  \eqref{eq:population:optimization} if and only if
  \begin{equation}
  \label{eq:kkt:rewritten}
  \begin{aligned}
  &\rbr{\St + \pi_1\pi_2\mt\mt'}\hat\wb_T - \pi_1\pi_2\mt + 
          \lambda \sgn(\hat\wb_T) = \zero\\
  &\norm{
    \rbr{\Sigmab_{NT} + \pi_1\pi_2\mub_N\mub_T'}\hat\wb_T - \pi_1\pi_2\mub_N}_\infty
          \leq \lambda \\
  \end{aligned}    
  \end{equation}  
  By construction, $\hat\wb_T$ satisfy the first equation. Therefore,
  we need to show that the second one is also satisfied. 
  Plugging in the explicit form of $\hat\wb_T$ into the second
  equation and using \eqref{eq:beta_t_mu_n}, after some algebra we
  obtain that
  \[
     \norm{\Sigmab_{NT}\Sigmab_{TT}^{-1}\sbt}_\infty \leq 1
  \]
  needs to be satisfied. The above display is satisfied
  with strict inequality
  under the assumption in
  \eqref{eq:assum:irrepresentable}. 
\end{proof}

\begin{proof}[Proof of Lemma~\ref{lem:convergence}]
Throughout the proof, we will work on the event $\Acal$ defined in
\eqref{eq:acal}.

Let $a\in T$ be such that $\tilde v_a > 0$, noting that the case when
$\tilde v_a < 0$ can be handled in a similar way. Let
\begin{gather*}
  \delta_1 = \hmt'\hSti\sbt-\mt'\Sti\sbt, \quad
  \delta_2 = \eb_a'\hSti\hmt -\eb_a'\Sti\mt, \\
  \delta_3 = \eb_a'\hSti\sbt - \eb_a'\Sti\sbt, \quad
  \delta_4 = \hsnbt - \snbt,\text{ and}\\
 \delta_5 = \nr - \pi_1\pi_2.
\end{gather*}
Furthermore, let
\[
  \hat\gamma = 
    \frac{n_1n_2}{n(n-2)}
    \frac{1 + \lambda\hmt'\hSti\sbt}
      {1+\frac{n_1n_2}{n(n-2)}\hmt'\hSti\hmt}
\qquad\text{and}\qquad
  \gamma =   
    \frac{\pi_1\pi_2(1 + \lambda\norm{\betab_T}_1)}
      {1+\pi_1\pi_2\snbt}.
\]

For sufficiently large $n$, on the event $\Acal$, together with
Lemma~\ref{lem:hmt_hsti_hmt}, Lemma~\ref{lem:hsti_sbt}, and
Lemma~\ref{lem:l1_norm}, we have that $\hat\gamma \geq \gamma(1-o(1))
> \gamma/2$ and $ \eb_a'\hSti\sbt = \eb_a'\Sti\sbt(1+o(1)) \leq
\frac{3}{2}\eb_a'\Sti\sbt$ with probability at least
$1-\Ocal\rbr{\log^{-1}(n)}$. Then
\begin{align*}
\tilde v_a 
&\geq \frac{\gamma}{2}(\beta_a + \delta_2) - \frac{3}{2}\lambda\eb_a'\Sti\sbt\\
& \geq
 \frac
{\pi_1\pi_2(1 + \lambda\norm{\betab_T}_1)(\beta_a-|\delta_2|)
 - 3\lambda_0\norm{\Sti\sbt}_\infty}
{2(1+\pi_1\pi_2\snbt)},
\end{align*}
so that $\sgn(\tilde v_a) = \sgn(\beta_a)$ if 
\begin{equation}
\label{eq:proof:lemma1:final}
\pi_1\pi_2(1 + \lambda\norm{\betab_T}_1)(\beta_a-|\delta_2|)
 - 3\lambda_0\norm{\Sti\sbt}_\infty > 0.
\end{equation}
Lemma~\ref{lem:deviation:beta_t} gives a bound on $|\delta_2|$, for
each fixed $a \in T$, as 
\begin{equation*}
\begin{aligned}
  |\delta_2| 
&\leq
  C_1 \sqrt{
      \rbr{\Sti}_{aa}
\rbr{1\vee\snbt}
      \frac{\log(s\log(n))}{n}
    }  + C_2
    |\beta_a|
    \sqrt{
      \frac{\log(s\log(n))}{n}
    }.
  \end{aligned}
\end{equation*}
Therefore assumption \eqref{eq:beta_min:lemma:1}, with $K_\beta$
sufficiently large, and a union bound over all $a \in T$ implies
\eqref{eq:proof:lemma1:final}. 

Lemma~\ref{lem:deviation:beta_t} gives $\sgn(\bt) = \sbt$ with
probability $1-\Ocal(\log^{-1}(n))$.
\end{proof}

\begin{proof}[Proof of Lemma~\ref{lem:dual_certificate}]
  Throughout the proof, we will work on the event $\Acal$ defined in
  \eqref{eq:acal}.  By construction, the vector $\hat\vb =
  (\tilde\vb_T',\zero')'$ satisfies the condition in
  \eqref{eq:KKT:1}. Therefore, to show that it is a solution to
  \eqref{eq:opt_problem1}, we need to show that it also satisfies
  \eqref{eq:KKT:2}.

To simplify notation, let
\begin{eqnarray*}
  \Cb = \Sbb + \frac{n_1n_2}{n(n-2)}\hat\mub\hat\mub',\ 
  \hat\gamma = 
    \frac{n_1n_2}{n(n-2)}
    \frac{1 + \lambda\norm{\hat\bt}_1} 
      {1+\frac{n_1n_2}{n(n-2)}\norm{\hat\bt}_{\hSt}^2}, \\
  \text{ and }
  \gamma =   
    \frac{\pi_1\pi_2(1 + \lambda\norm{\betab_T}_1)}
      {1+\pi_1\pi_2\snbt}.
\end{eqnarray*}
Recall that $\tilde\vb_T = \hat \gamma \hSti\hmt - \lambda \hSti\sbt$.

Let $\Ub \in \RR^{(n-2)\times p}$ be a matrix with each row $\ub_i
\iidsim \Ncal(0, \Sigmab)$ such that $(n-2)\Sbb = \Ub'\Ub$.
For $a \in N$, we have 
\begin{equation*}
  (n-2)\Sbb_{aT} 
  = (\Ub_T\Sti\Sigmab_{Ta} + \Ub_{a\cdot T})'\Ub_T
  = \Sigmab_{aT}\Sti\Ub_T'\Ub_T + \Ub_{a\cdot T}'\Ub_T
\end{equation*}
where $\Ub_{a\cdot T} \sim \Ncal\rbr{0, \nr \sigma_{a|T}\Ib_{n-2}}$ is independent
of $\Ub_T$, 
and
\begin{equation*}
  \hat\mu_{a} = \Sigmab_{aT}\Sti\hmt + \hat\mu_{a\cdot T}
\end{equation*}
where $\hat\mu_{a\cdot T} \sim \Ncal\rbr{0,  \frac{n}{n_1n_2}\sigma_{a|T}}$
is independent of $\hmt$. Therefore, 
\begin{align*}
\Cb_{aT} 
& = \Sbb_{aT} + \nr \hat\mu_{a}\hmt'  \\
& = \Sigmab_{aT}\Sti\hSt + (n-2)^{-1}\Ub_{a\cdot T}'\Ub_T  + \nr \hat\mu_{a} \hmt',
\end{align*}
\begin{align*}
\Cb_{aT} \tilde\vb_T 
& = \hat\gamma\Sigmab_{aT}\Sti\hmt + \hat\gamma\nr\rbr{\hsnbt}\hat\mu_{a}\\
&\qquad -\lambda\rbr{\Sigmab_{aT}\Sti\sbt+\nr\norm{\hat \betab_T}_1\cdot\hat\mu_{a}}\\
&\qquad +(n-2)^{-1}\Ub_{a\cdot T}'\Ub_T\tilde\vb_T \\
& = 
\rbr{
\hat\gamma +
\hat\gamma\nr\hsnbt
-\lambda\nr\norm{\hat \betab_T}_1
}
\Sigmab_{aT}\Sti\hmt\\
&\qquad -\lambda\Sigmab_{aT}\Sti\sbt \\
&\qquad +(n-2)^{-1}\Ub_{a\cdot T}'\Ub_T\tilde\vb_T 
+ \hat\gamma\nr\rbr{\hsnbt}\hat\mu_{a\cdot T}\\
&\qquad-\lambda\nr\norm{\hat \betab_T}_1\cdot\hat\mu_{a\cdot T} \\
& = 
\nr\Sigmab_{aT}\Sti\hmt -\lambda\Sigmab_{aT}\Sti\sbt \\
&\qquad +(n-2)^{-1}\Ub_{a\cdot T}'\Ub_T\tilde\vb_T 
+ \hat\gamma\nr\rbr{\hsnbt}\hat\mu_{a\cdot T}\\
&\qquad-\lambda\nr\norm{\hat \betab_T}_1\cdot\hat\mu_{a\cdot T},
\end{align*}
and finally 
\begin{align*}
\Cb_{aT} \tilde\vb_T - \nr\hat\mu_{a} 
& = 
-\lambda\Sigmab_{aT}\Sti\sbt +(n-2)^{-1}\Ub_{a\cdot T}'\Ub_T\tilde\vb_T \\
&\quad +
\nr\rbr{
\hat\gamma\hsnbt
-\lambda\norm{\hat \betab_T}_1
-1
}
\hat\mu_{a\cdot T}.
\end{align*}

First, we deal with the term 
\begin{align*}
  (n-2)^{-1}\Ub_{a\cdot T}'\Ub_T\tilde\vb_T = 
  \underbrace{
    \frac{\hat \gamma}{n-2}\Ub_{a\cdot T}'\Ub_T\hSti\hmt 
  }_{T_{1,a}}
  - 
  \underbrace{
    \frac{\lambda}{n-2}\Ub_{a\cdot T}'\Ub_T\hSti\sbt  
  }_{T_{2,a}}.
\end{align*}
Conditional on $\{y_i\}_{i\in[n]}$ and $\Xb_T$, we have that
\begin{equation*}
  T_{1,a} \sim \Ncal\rbr{0,
    \nr \sigma_{a|T} \frac{\hat\gamma^2}{n-2} \hsnbt
  }
\end{equation*}
and
\begin{equation*}
\max_{a\in N} |T_{1,a}| \leq
\sqrt{
2\nr \rbr{\max_{a\in N}\sigma_{a|T}} \frac{\hat\gamma^2}{n-2} \hsnbt
\log\rbr{(p-s)\log(n)}
}  
\end{equation*}
with probability at least $1-\log^{-1}(n)$. On the event $\Acal$, we have
that 
\begin{equation*}
\begin{aligned}
\max_{a\in N} |T_{1,a}| &\leq
(1+o(1))
\sqrt{2
\pi_1\pi_2 \gamma^2 \rbr{\max_{a\in N}\sigma_{a|T}} \snbt
\frac{\log\rbr{(p-s)\log(n)}}{n}
} \\
&=(1+o(1))\sqrt{2}
  \pi_1\pi_2(1 + \lambda\norm{\betab_T}_1)
  \frac{\lambda}{K_{\lambda_0}}.
\end{aligned}
\end{equation*}
Since
\begin{equation*}
  \norm{\betab_T}_1 
  \leq \sqrt{s}\norm{\betab_T}_2 
  =  \sqrt{s}\norm{\St^{-1/2}\St^{1/2}\betab_T}_2 
  \leq \sqrt{s\Lambda_{\min}^{-1}(\St)\snbt}
\end{equation*}
and 
\begin{equation*}
  \begin{aligned}
  \lambda\norm{\betab_T}_1 
  &=  \frac{\lambda_0\norm{\betab_T}_1}{1+\pi_1\pi_2\snbt} 
  \leq \frac{\lambda_0\sqrt{s\Lambda_{\min}^{-1}(\St)\snbt}}{1+\pi_1\pi_2\snbt}\\
&\leq
  \frac{K_{\lambda_0}}{\sqrt{K}}
  \frac{\sqrt{\rbr{1\vee\snbt}\snbt}}{1+\pi_1\pi_2\snbt}
\leq   \frac{K_{\lambda_0}}{\pi_1\pi_2\sqrt{K}},
\end{aligned}
\end{equation*}
we have that 
\begin{equation*}
\begin{aligned}
\max_{a\in N} |T_{1,a}| &\leq
(1+o(1))\sqrt{2}
  \pi_1\pi_2\rbr{K_{\lambda_0}^{-1} + \rbr{\pi_1\pi_2\sqrt{K}}^{-1}}
  \lambda
  < (\alpha/3) \lambda
\end{aligned}
\end{equation*}
by taking both $K_{\lambda_0}$ and $K$ sufficiently large.

Similarly, conditional on $\{y_i\}_{i\in[n]}$ and $\Xb_T$, we have that
\begin{equation*}
  T_{2,a} \sim \Ncal\rbr{0,
    \nr \sigma_{a|T} \frac{\lambda^2}{n-2} \sbt'\hSti\sbt
  },
\end{equation*}
which, on the event $\Acal$, gives 
\begin{equation*}
\begin{aligned}
\max_{a\in N} |T_{2,a}| & \leq
(1+o(1))\lambda
\sqrt{2
\pi_1\pi_2 \rbr{\max_{a\in N}\sigma_{a|T}}\Lambda_{\min}^{-1}(\St)
 \frac{s\log\rbr{(p-s)\log(n)}}{n}
}\\ 
& \leq 
(1+o(1))
\sqrt{\frac{2}{K}}\lambda < (\alpha/3)\lambda
\end{aligned}
\end{equation*}
with probability at least $1-\log^{-1}(n)$ for sufficiently large $K$. 

Next, let 
\begin{equation*}
  T_{3,a} = \nr\rbr{
\hat\gamma\hsnbt
-\lambda\norm{\hat \betab_T}_1
-1
}
\hat\mu_{a\cdot T}.
\end{equation*}
Simple algebra shows that 
\[
\hat\gamma\hsnbt
-\lambda\norm{\hat \betab_T}_1
-1 = -\frac{1+\lambda\norm{\hat \betab_T}_1}{1+\nr\hsnbt}.
\]
Therefore conditional on $\{y_i\}_{i\in[n]}$ and $\Xb_T$, we have
that
\begin{equation*}
  T_{3,a} \sim
  \Ncal\rbr{0,
    \rbr{\nr
      \frac{1+\lambda\norm{\hat \betab_T}_1}{1+\nr\hsnbt}     
    }^2 \frac{n}{n_1n_2}\sigma_{a|T}
  },
\end{equation*}
which, on the event $\Acal$, gives
\begin{equation*}
\begin{aligned}
\max_{a\in N} |T_{3,a}| 
&\leq
(1+o(1))
\frac{1+\lambda\norm{\betab_T}_1}{1+\pi_1\pi_2\snbt}
\sqrt{2\pi_1\pi_2
\rbr{\max_{a\in N}\sigma_{a|T}}
 \frac{\log\rbr{(p-s)\log(n)}}{n} 
} \\
&\leq
(1+o(1))\sqrt{2}
\frac{1+\lambda\norm{\betab_T}_1}{1+\pi_1\pi_2\snbt}
\frac{1}{K_{\lambda_0}\sqrt{1\vee\snbt}}
\lambda < (\alpha/3)\lambda
\end{aligned}
\end{equation*}
with probability at least $1-\log^{-1}(n)$ when $K_{\lambda_0}$ and
$K$ are chosen sufficiently large. 

Piecing all these results together, we have that 
\[
\max_{a\in N}|\Cb_{aT}
\tilde\vb_T - \nr\hat\mu_{a}| < 1.
\]
\end{proof}

\subsection{Proof of Theorem~\ref{thm:lower_bound}}

The theorem will be shown using standard tools described in
\cite{tsybakov2009}. First, in order to provide a lower bound on the
minimax risk, we will construct a finite subset of $\Theta(\Sigmab,
\tau, s)$, which contains the most difficult instances of the
estimation problem so that estimation over the subset is as difficult
as estimation over the whole family. Let $\Theta_1 \subset
\Theta(\Sigmab, \tau, s)$, be a set with finite number of elements, so
that
  \[
    \inf_{\Psi} R(\Psi, \Theta(\Sigmab, \tau, s) \geq
    \inf_{\Psi} \max_{\thetab \in \Theta_1}
    \PP_{\thetab}[\Psi(\{\xb_i, y_i\}_{i\in[n]}) \neq T(\thetab)].
  \]
  To further lower bound the right hand side of the display above, we
  will use Theorem 2.5 in \cite{tsybakov2009}. Suppose that $\Theta_1 =
  \{ \thetab_0, \thetab_1, \ldots, \thetab_M \}$ where $T(\thetab_a)
  \neq T(\thetab_b)$ and 
  \begin{equation}
    \label{eq:lower_bound_kl_requirement}
    \frac{1}{M} \sum_{a=1}^M KL(\PP_{\thetab_0}|\PP_{\thetab_a}) 
    \leq \alpha \log(M),\quad \alpha \in (0, 1/8) 
  \end{equation}
  then 
  \[
    \inf_{\Psi} R(\Psi, \Theta(\beta, s) \geq
    \frac{\sqrt{M}}{1+\sqrt{M}}
    \left(1 - 2\alpha - \sqrt{\frac{2\alpha}{\log(M)}}\right).
  \]
  Without loss of generality, we will consider $\thetab_a = (\mub_a,
  \zero, \Sigmab)$. Denote $\PP_{\thetab_a}$ the joint distributions of
  $\{\Xb_i, Y_i\}_{i\in[n]}$. Under $\PP_{\thetab_a}$, we have
  $\PP_{\thetab_a}(Y_i=1)=\PP_{\thetab_a}(Y_i=2)=\frac{1}{2}$,
  $\Xb_i|Y_i=1 \sim \Ncal(\zero, \Sigmab)$ and $\Xb_i|Y_i=2 \sim
  \Ncal(\mub_a, \Sigmab)$. Denote $f(\xb; \mub, \Sigmab)$ the density
  function of a multivariate Normal distribution. With this we have
  \begin{equation}
    \label{eq:proof:kl_distance}
    \begin{aligned}
      KL(\PP_{\thetab_0}|\PP_{\thetab_a}) & =
      \EE_{\thetab_0}\log\frac{d\PP_{\thetab_0}}{d\PP_{\thetab_a}} \\
      &=
      \EE_{\thetab_0}\log
      \frac{\prod_{i\in[n]}d\PP_{\thetab_0}[\Xb_i|Y_i]\PP_{\thetab_0}[Y_i]}
      {\prod_{i\in[n]}d\PP_{\thetab_a}[\Xb_i|Y_i]\PP_{\thetab_a}[Y_i]}\\
      &=
      \EE_{\thetab_0}\sum_{i\ :\ y_i=2}
      \log
      \frac{ f(\Xb_i; \mub_0, \Sigmab)}
      {f(\Xb_i; \mub_a, \Sigmab)} \\
      &= \frac{\EE_{\thetab_0}n_2}{2} (\mub_0-\mub_a)'\Sigmab^{-1}(\mub_0-\mub_a) \\
      &= \frac{n}{4} (\betab_0-\betab_a)'\Sigmab(\betab_0-\betab_a) 
    \end{aligned}
  \end{equation}
  where $\betab_a = \Sigmab^{-1}\mub_a$.  We proceed to construct
  different finite collections for which
  \eqref{eq:lower_bound_kl_requirement} holds.

  Consider a collection $\Theta_1 = \{ \thetab_0, \thetab_1, \ldots,
  \thetab_{p-s} \}$, with $\thetab_a = (\mub_{a}, \zero)$, that
  contains instances whose supports differ in only one component.
  Vectors $\{\mub_a\}_{a=0}^{p-s}$ are constructed indirectly through
  $\{\betab_a\}_{a=0}^{p-s}$, using the relationship $\betab_a =
  \Sigmab^{-1}\mub_a$. Note that this construction is possible, since
  $\Sigmab$ is a full rank matrix. For every $a$, all $s$ non-zero
  elements of the vector $\betab_a$ are equal to $\tau$. Let $T$ be
  the support and $u(T)$ an element of the support $T$ for which
  \eqref{eq:close_ensemble:phi} is minimized. Set $\betab_0$ so
  that $\supp(\betab_0) = T$. The remaining $p-s$ parameter vectors
  $\{\betab_a\}_{a=1}^{p-s}$ are constructed so that the support of
  $\betab_a$ contains all $s-1$ element in $T \bks u(T)$ and then one
  more element from $[p]\bks T$. With this,
  \eqref{eq:proof:kl_distance} gives
  \begin{equation*}
      KL(\PP_{\thetab_0}|\PP_{\thetab_a}) 
      = \frac{n\tau^2}{4} (\Sigma_{uu}+\Sigma_{vv} - 2\Sigma_{uv})
  \end{equation*}
  and \eqref{eq:close_ensemble:phi} gives
  \[
  \frac{1}{p-s}\sum_{a=1}^{p-s}KL(\PP_{\thetab_0}|\PP_{\thetab_a}) 
  = \frac{n\tau^2}{4}\varphi_{\rm close}(\Sigmab).
  \]
  It follows from the display above that if 
  \begin{equation}
    \label{eq:beta_min_ensemble}
    \tau <
    \sqrt{\frac{4}{\varphi_{\rm close}(\Sigmab)}
    \frac{\log(p-s+1)}{n}},
  \end{equation}
  then \eqref{eq:lower_bound_kl_requirement} holds with $\alpha =
  1/16$.

  Next, we consider another collection $\Theta_2 = \{\thetab_0,
  \thetab_1, \ldots, \thetab_M\}$, where $M = {p-s \choose s}$, and
  the Hamming distance between $T(\thetab_0)$ and $T(\thetab_a)$ is
  equal to $2s$. As before, $\thetab_a = (\mub_a, \zero)$ and vectors
  $\{\mub_a\}_{a=0}^M$ are constructed so that $\betab_a =
  \Sigmab^{-1}\mub_a$ with $s$ non-zero components equal to $\tau$.
  Let $T$ be the support set for which the minimum in
  \eqref{eq:far_ensemble:phi} is attained. Set the vector
  $\betab_0$ so that $\supp(\betab_0) = T$. The remaining vectors
  $\{\betab_a\}_{a=1}^M$ are set so that their support contains $s$
  elements from the set $[p]\bks T$. Now,
  \eqref{eq:proof:kl_distance} gives
  \begin{equation*}
    \begin{aligned}
      KL(\PP_{\thetab_0}|\PP_{\thetab_a}) 
      = \frac{n\tau^2}{4} \one'
      \Sigmab_{T(\thetab_0)\cup T(\thetab_a),T(\thetab_0)\cup T(\thetab_a)}\one.
    \end{aligned}
  \end{equation*}
  Using \eqref{eq:far_ensemble:phi}, if 
  \begin{equation}
    \label{eq:beta_min_ensemble_2}  
    \tau < \sqrt{
      \frac{4}{\varphi_{\rm far}(\Sigmab)}
      \frac{\log{p-s \choose s}}{n}},
  \end{equation} 
  then \eqref{eq:lower_bound_kl_requirement} holds with $\alpha = 1/16$.

  Combining \eqref{eq:beta_min_ensemble} and
  \eqref{eq:beta_min_ensemble_2}, by taking the larger $\beta$
  between the two, we obtain the result.

\subsection{Proof of Theorem~\ref{thm:exhaustive_search}}

For a fixed $T$, let $\Delta(T') = f(T) - f(T')$ and $\Tcal = \{ T'
  \subset [p]\ :\ |T'|=s, T'\neq T\}$.  Then
\begin{equation*}
\PP_{T}[\hat T \neq T] = \PP_{T}[\bigcup_{T' \in \Tcal} \{\Delta(T') < 0\}]
\leq \sum_{T' \in \Tcal}\PP_{T}[\Delta(T') < 0].
\end{equation*}

Partition $\hmt = (\hmp{1}', \hmp{2}')'$, where $\hmp{1}$ contains the
variables in $T\cap T'$, and $\hmp{T'} = (\hmp{1}',
\hmp{3}')'$. Similarly, we can partition the covariance matrix $\hSt$
and $\hSp{T'T'}$. Then,
\[
g(T) = \hmp{1}'\hSip{11}\hmp{1} +
\tilde\mub_{2\mid1}'\hSip{22\mid1}\tilde\mub_{2\mid1}
\]
where $\tilde\mub_{2\mid1} = \hmp{2} - \hSp{21}\hSip{11}\hmp{1}$ and 
$\hSp{22\mid1} = \hSp{22} - \hSp{21}\hSip{11}\hSp{12}$ \citep[see
Section 3.6.2 in][]{mardia1980multivariate}. Furthermore, we have that 
\begin{equation}
  \label{eq:delta_diff_form}
\Delta(T') =
\tilde\mub_{2\mid1}'\hSip{22\mid1}\tilde\mub_{2\mid1} - 
\tilde\mub_{3\mid1}'\hSip{33\mid1}\tilde\mub_{3\mid1}.  
\end{equation}

The two terms are correlated, but we will ignore this correlation and
use the union bound to lower bound the first term and upper bound the
second term. We start with analyzing
$\tilde\mub_{2\mid1}'\hSip{22\mid1}\tilde\mub_{2\mid1}$, noting
that the result for the second term will follow in the same way. By
Theorem 3.4.5 in \citet{mardia1980multivariate}, we have that
\[
\hSp{22\mid1} \sim \Wcal_{s - |T \cap T'|}
\rbr{(n-2)^{-1}\Sigmab_{22\mid 1}, n - 2 - |T \cap T'|}
\]
and independent of $(\Sbb_{12}, \Sbb_{11}, \hat\mub)$. Therefore
$\hSp{22\mid1}$ is independent of $\tilde\mub_{2\mid1}$ and
Theorem 3.2.12 in \cite{muirhead1982aspects} gives us that 
\[
(n-2)\frac{\tilde\mub_{2\mid1}'\Sigmab_{22\mid1}^{-1}\tilde\mub_{2\mid1}}
{\tilde\mub_{2\mid1}'\hSip{22\mid1}\tilde\mub_{2\mid1}}
\sim \chi^2_{n-1-s}.
\]
As in Lemma~\ref{lem:hmt_hsti_hmt_ratio}, we can show that 
\[
1 - C_1 \sqrt{\frac{\log(\eta^{-1})}{n}}
\leq
\frac
{\tilde\mub_{2\mid1}'\hSip{22\mid1}\tilde\mub_{2\mid1}}
{\tilde\mub_{2\mid1}'\Sigmab_{22\mid1}^{-1}\tilde\mub_{2\mid1}}
\leq
1 + C_2 \sqrt{\frac{\log(\eta^{-1})}{n}}.
\]
For
$\tilde\mub_{2\mid1}$, we have
\begin{equation*}
  \begin{aligned}
\tilde\mub_{2\mid1} =     
\hmp{2} - \hSp{21}\hSip{11}\hmp{1}
=
\hmp{2\mid1}+
\Sigmab_{21}\Sigmab^{-1}_{11}\hmp{1}
 - \hSp{21}\hSip{11}\hmp{1}
  \end{aligned},
\end{equation*}
where $\hmp{2\mid1} \sim \Ncal(\mub_{2\mid1},
\frac{n}{n_1n_2}\Sigmab_{22\mid1})$, independent of $\hmp{1}$, and
$\mub_{2\mid1} = \mub_2 -
\Sigmab_{21}\Sigmab_{11}^{-1}\mub_1$. Conditioning on $\hmp{1}$ and
$\hSp{1}$, we have that
\[
\hSp{21}\hSip{11}\hmp{1} = \Sigmab_{21}\Sigmab_{11}^{-1}\hmp{1} +
\Zb,
\] 
where
$\Zb\sim\Ncal\rbr{\zero,(n-2)^{-1}\hmp{1}'\hSip{11}\hmp{1}\Sigmab_{22\mid1}}$.
Since $\hmp{2\mid1}$ is independent of $(\Sbb_{12},
\Sbb_{11},\hmp{1})$, we have that
\[
\tilde\mub_{2\mid1} | \hmp{1}, \Sbb_{11}
\sim
\Ncal\rbr{
\mub_{2\mid1}, 
\rbr{\frac{n}{n_1n_2}+(n-2)^{-1}\hmp{1}'\hSip{11}\hmp{1}}\Sigmab_{22\mid1}
}.
\]
Let $a = \frac{n}{n_1n_2}+(n-2)^{-1}\hmp{1}'\hSip{11}\hmp{1}$. Then
\[
\tilde\mub_{2\mid1}'
\Sigmab_{22\mid1}^{-1}
\tilde\mub_{2\mid1}
\mid \hmp{1}, \Sbb_{11} \sim
a\chi^2_{|T\bks T'|}\rbr{
a^{-1}\mub_{2\mid1}'
\Sigmab_{22\mid1}^{-1}
\mub_{2\mid1}
}.
\]
Therefore, conditioned on $(\hmp{1}, \hSp{11})$,
\[
\begin{aligned}
&\tilde\mub_{2\mid1}'\hSip{22\mid1}\tilde\mub_{2\mid1}\\
&\geq
\rbr{1 - C_1 \sqrt{\frac{\log(\eta^{-1})}{n}}} \\
&\qquad \times
\rbr{
\rbr{\mub_{2\mid1}'\Sigmab_{22\mid1}^{-1}\mub_{2\mid1} + a|T\bks T'|}
-2\sqrt{
\rbr{2a\mub_{2\mid1}'\Sigmab_{22\mid1}^{-1}\mub_{2\mid1} +
  a^2|T\bks T'|}
\log(\eta^{-1})
}
}
\end{aligned}
\]
with probability $1-2\eta$. 
Similarly, 
\[
\begin{aligned}
&\tilde\mub_{3\mid1}'\hSip{33\mid1}\tilde\mub_{3\mid1}\\
&\leq
\rbr{1 + C_2 \sqrt{\frac{\log(\eta^{-1})}{n}}} \\
&\qquad \times
\rbr{
\rbr{\mub_{3\mid1}'\Sigmab_{33\mid1}^{-1}\mub_{3\mid1} + a|T'\bks T|}
+2\sqrt{
\rbr{2a\mub_{3\mid1}'\Sigmab_{33\mid1}^{-1}\mub_{3\mid1} +
  a^2|T'\bks T|}
\log(\eta^{-1})
} 
+2a\log(\eta^{-1})
}
\end{aligned}
\]
with probability $1-2\eta$. Finally, Lemma~\ref{lem:hmt_hsti_hmt}
gives that $|a| \leq C\rbr{1 \vee \mub_{
    1}'\Sigmab_{11}^{-1}\mub_{1}}n^{-1}$ with probability $1-2\eta$.

Set $\eta_k = \rbr{{p-s \choose s-k}{s \choose k}s\log(n)}^{-1}$.  For
any $T' \subset [p]$, where $|T'| = s$ and $|T' \cap T| = k$, we have
that
\begin{equation*}
  \begin{aligned}
\tilde\mub_{2\mid1}'\hSip{22\mid1}\tilde\mub_{2\mid1} -
\tilde\mub_{3\mid1}'\hSip{33\mid1}\tilde\mub_{3\mid1} 
&\geq
(1-o(1))\mub_{2\mid1}'\Sigmab_{22\mid1}^{-1}\mub_{2\mid1}
-
(1+o(1))\mub_{3\mid1}'\Sigmab_{33\mid1}^{-1}\mub_{3\mid1}\\
&\ \ -C\sqrt{
\rbr{1 \vee \mub_{ 1}'\Sigmab_{11}^{-1}\mub_{1}}
  \mub_{ 2\mid1}'\Sigmab_{22\mid1}^{-1}\mub_{2\mid1}
\Gamma_{n,p,s,k}
}
\\
&\ \ -
C\rbr{1 \vee \mub_{ 1}'\Sigmab_{11}^{-1}\mub_{1}}\Gamma_{n,p,s,k}
.
  \end{aligned}
\end{equation*} The right hand side in the above display is
bounded away from zero with probability $1-\Ocal\rbr{\rbr{{p-s \choose
      s-k}{s \choose k}s\log(n)}^{-1}}$ under the assumptions.
Therefore,
\[
\PP_{T}[\hat T \neq T] 
\leq \sum_{k = 0}^{s-1}\ \sum_{T' \in \Tcal\ :\ |T \cap T'| = k}\PP_{T}[\Delta(T') < 0]
 \leq \frac{C}{\log(n)},
\]
which completes the proof.

\section{Proof of Risk Consistency}

In this section, we give a proof of Corollary~\ref{thm:risk}.  From
Theorem~\ref{thm:main:sda} we have that $\hat\vb =
(\hat\vb_T',\zero')'$ with $\hat\vb_T$ defined in
\eqref{eq:solution:hat_wt}.  Define
\[
 \tilde\vb_T = 
 \frac{n(n-2)}{n_1n_2}
 \rbr{
  1+\frac{n_1n_2}{n(n-2)}\hmt'\hSti\hmt } \hat\vb_T.
\]
To obtain a bound on the risk, we need to control 
\begin{equation}
\label{eq:proof:risk:1}
\frac{-\tilde\vb_T'(\mub_{i,T} -
    \hat\mub_{i,T})-\tilde\vb_T'\hmt/2}
{\sqrt{\tilde\vb_T'\St\tilde\vb_T}}
\end{equation}
for $i\in\{1,2\}$.
Define the following quantities
\begin{gather*}
\delta_1=\hmt'\hSti\sbt-\norm{\bt}_1,\quad
\tilde\delta_1 = \delta_1/\norm{\bt}_1, \\
\delta_2=\hsnbt-\snbt, \quad \text{and}\quad
\tilde\delta_2 = \delta_2/\snbt.
\end{gather*}
Under the assumptions, we have that
\begin{align*}
\lambda_0 &=\Ocal\rbr{\sqrt{\frac{\Lambda_{\min}(\St)\snbt}{K(n)s}}},
\\
r_n 
& =
\Ocal\rbr{\frac{\lambda_0\norm{\bt}_1}{\snbt}}
=
 \Ocal\rbr{
\frac{\norm{\bt}_1}{\nbt}
\sqrt{\frac{\Lambda_{\min}(\St)}{K(n)s}}
}, \text{ and} \\
\tilde\delta_2 &= 
\Ocal_P\rbr{\sqrt{\frac{\log\log(n)}{n}} \vee
  \frac{s\vee\log\log(n)}{\snbt n}}.
\end{align*}
The last equation follows from Lemma~\ref{lem:hmt_hsti_hmt}.
Note that $\tilde\delta_2 = \Ocal(r_n)$.
From Lemma~\ref{lem:l1_norm}, we have that $\tilde\delta_1 = o_p(1)$.

We have $\lambda\hmt'\hSti\sbt =
\lambda\norm{\bt}_1(1+\Ocal_P(\tilde\delta_1)) = \Ocal_P(r_n)$, since
Lemma~\ref{lem:l1_norm} gives $\tilde\delta_1 = o_p(1)$, and 
\[
\frac{ \frac{n(n-2)}{n_1n_2} + \hmt'\hSti\hmt }
        { 1+\pi_1\pi_2\snbt} = \Ocal_P(1).
\]
Therefore $\tilde\vb_T = (1+\Ocal_P(r_n))\hSti\hmt -
\Ocal_P(1)\lambda_0\hSti\sbt$. With this, we have
\begin{equation}
  \begin{aligned}
\label{eq:proof:risk:2}
\tilde\vb_T'\hmt 
& =
(1+\Ocal_P(r_n))(1+\Ocal_P(\tilde\delta_2))\snbt  - \Ocal_P(1)
(1+\Ocal_P(\tilde\delta_1))
\lambda_0 \norm{\bt}_1\\
&= 
\snbt\sbr{
1 + \Ocal_P\rbr{r_n}
- \Ocal_P(1)\frac{\lambda_0\norm{\bt}_1}{\snbt}
} \\
&= 
\snbt\sbr{
1 + \Ocal_P\rbr{r_n}
},
  \end{aligned}
\end{equation}
where the last line follows from $\lambda\norm{\bt}_1 \asymp
\lambda_0\norm{\bt}_1/\snbt$. Next
\[
\begin{aligned}
(\mub_{1,T} &- \hat\mub_{1,T})'\hSti\hmt \\
& \leq 
\norm{\hSt^{-1/2}(\mub_{1,T} - \hat\mub_{1,T})}_2
\norm{\hSt^{-1/2}\hmt}_2 \\
& \leq (1+\Ocal_P(\sqrt{s/n}))\Lambda^{-1/2}_{\min}(\St)\sqrt{s}
\norm{\mub_{1,T} -
  \hat\mub_{1,T}}_\infty\nbt\sqrt{1+\Ocal_P(\tilde\delta_2)}\\
&= \nbt\Ocal_P\rbr{\sqrt{\Lambda^{-1}_{\min}(\St)s\log\log(n)/n}}
\end{aligned}
\]
and similarly
\[
(\mub_{1,T} - \hat\mub_{1,T})'\hSti\sbt = 
\sqrt{q_n}\Ocal_P\rbr{\sqrt{\Lambda_{\min}^{-1}(\St)s\log\log(n)/n}}.
\]
Combining these two estimates, we have
\begin{eqnarray}
\label{eq:proof:risk:3}\\
\lefteqn{\abr{\tilde\vb_T'(\mub_{1,T} - \hat\mub_{1,T})} }\nonumber \\
&=&\nbt\rbr{1-\lambda_0\sqrt{q_n}/\nbt} \Ocal_P\rbr{\sqrt{\Lambda_{\min}^{-1}(\St)s\log\log(n)/n}} \nonumber\\ 
&=&\nbt 
\Ocal_P\rbr{\sqrt{\Lambda_{\min}^{-1}(\St)s\log\log(n)/n}} . \nonumber
\end{eqnarray}
From \eqref{eq:proof:risk:2} and \eqref{eq:proof:risk:3}, we have that 
\begin{equation}
\label{eq:proof:risk:3.5}
  \begin{aligned}
-\rbr{\tilde\vb_T'(\mub_{1,T} - \hat\mub_{1,T})} - \tilde\vb_T'\hmt/2
= -\frac{\snbt}{2}\rbr{1+\Ocal_P(r_n)}.
  \end{aligned}
\end{equation}
Finally, a simple calculation gives, 
\begin{equation}
\label{eq:proof:risk:4}
  \begin{aligned}
&\tilde\vb_T'\St\tilde\vb_T  \\
&\leq 
\Lambda_{\max}\rbr{\hSt^{-1/2}\St\hSt^{-1/2}}\\
&\quad\times
\Big(
\rbr{1+\Ocal_P(r_n)}^2\hsnbt + \Ocal_P(1)\lambda_0^2\sbt'\hSti\sbt\\
&\qquad\qquad -\Ocal_P(1)\lambda_0\hmt'\hSti\sbt
\Big) \\
&= \snbt
\rbr{1+\Ocal_P\rbr{r_n\vee\tilde\delta_2
\vee\frac{\lambda_0^2q_n}{\snbt}\vee\sqrt{\frac{s}{n}}}} \\
&= \snbt
\rbr{1+\Ocal_P\rbr{r_n
\vee\frac{\lambda_0^2q_n}{\snbt}}}.
  \end{aligned}
\end{equation}
Combining the equation \eqref{eq:proof:risk:3.5} and
\eqref{eq:proof:risk:4}, we have that
\[
\begin{aligned}
&
\frac{-\tilde\vb_T'(\mub_{1,T} -
    \hat\mub_{1,T})-\tilde\vb_T'\hmt/2}
{\sqrt{\tilde\vb_T'\St\tilde\vb_T}} =
-\frac{\nbt}{2}
\frac{\rbr{1+\Ocal_P\rbr{r_n}}}
{\sqrt{1+\Ocal_P\rbr{r_n
\vee\frac{\lambda_0^2q_n}{\snbt}}}}.
\end{aligned}
\]
This completes the proof.

\section{Technical Results}

We provide some technical lemmas which are useful for proving the main
results. Without loss of generality, $\pi_1 = \pi_2 = 1/2$ in model
\eqref{eq:model}. Define
\begin{equation}
  \label{eq:event_n}
  \Ecal_n = \Bigl\{\frac{n}{4} \leq n_1 \leq \frac{3n}{4} \Bigr\} 
            \cap
            \Bigl\{\frac{n}{4} \leq n_2 \leq \frac{3n}{4} \Bigr\},
\end{equation}
where $n_1, n_2$ are defined in $\S$\ref{sec:introduction}.
Observe that $n_1 \sim {\rm Binomial}(n, 1/2)$, which gives $\PP[\{n_1
\leq n/4 \}] \leq \exp(-3n/64)$ and $\PP[\{n_1 \geq 3n/4 \}] \leq
\exp(-3n/64)$ using standard tail bound for binomial random variable
\cite[p.~130]{devroye1996probabilistic}. Therefore
\begin{equation}
  \label{eq:event_n_prob}
  \PP[\Ecal_n] \geq 1 - 4\exp(-3n/64).
\end{equation}
The analysis is performed by conditioning on $\yb$ and, in particular,
we will perform analysis on the event $\Ecal_n$. Note that on
$\Ecal_n$, $16/9 n^{-1} \leq n/(n_1n_2) \leq 16n^{-1}$. In our
analysis, we do not strive to obtain the sharpest possible
constants. 

\subsection{Deviation of the Quadratic Scaling Term}

In this section, we collect lemmas that will help us deal with
bounding the deviation of $\hmt'\hSti\hmt$ from $\mt'\Sti\mt$.

\begin{lemma}
\label{lem:hmt_hsti_hmt_ratio}
Define the event
\begin{equation}
  \label{eq:event_1}
  \Ecal_1(\eta) = \cbr{
1 - C_1\sqrt{\frac{\log(\eta^{-1})}{n}} \leq 
\frac{\hmt'\hSti\hmt}{\hmt'\Sti\hmt}
\leq 1 + C_2\sqrt{\frac{\log(\eta^{-1})}{n}}
  },
\end{equation}
for some constants $C_1,C_2 > 0$. Assume that $s=o(n)$, then
$\PP[\Ecal_1(\eta)]\geq 1-\eta$ for $n$ sufficiently large.
\end{lemma}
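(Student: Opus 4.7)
The plan is to exploit the classical fact that for a Wishart matrix and an independent vector, the ratio $v'\Sigma^{-1}v/v'\bW^{-1}v$ is a pure chi-square variable, and then apply Laurent--Massart concentration.

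First, I condition on the label vector $\yb$ and work on the event $\Ecal_n$ defined in \eqref{eq:event_n}; by \eqref{eq:event_n_prob} this event has probability at least $1-4\exp(-3n/64)$, which is absorbed into the $\eta$ budget.  Conditional on $\yb$, the pooled sample covariance satisfies
\[
(n-2)\hSt \sim \Wcal_s\bigl(\St,\, n-2\bigr),
\]
and, crucially, $\hSt$ is independent of $\hmt = \hmp{2}-\hmp{1}$, since centering each class mean is orthogonal to the within-class residuals that build $\hSt$.  Thus I may treat $\hmt$ as a fixed nonzero vector when analyzing the ratio.

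Next, I invoke the standard multivariate distributional identity (see, e.g., Theorem~3.2.12 in Muirhead, already used elsewhere in the paper): for any fixed $v\neq 0$ independent of $\bW \sim \Wcal_s(\St, n-2)$,
\[
(n-2)\,\frac{v'\St^{-1}v}{v'\bW^{-1}v} \;\sim\; \chi^2_{n-s-1}.
\]
Applied with $v=\hmt$ and $\bW=(n-2)\hSt$, this yields
\[
\frac{\hmt'\Sti\hmt}{\hmt'\hSti\hmt} \;\sim\; \frac{\chi^2_{n-s-1}}{n-2}.
\]
Now I apply the Laurent--Massart tail bound: for $X\sim\chi^2_d$ and $t\in(0,1)$, $\PP\bigl[|X/d-1|\geq t\bigr]\leq 2\exp(-dt^2/8)$.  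Choosing $t = C\sqrt{\log(\eta^{-1})/(n-s-1)}$ gives, with probability at least $1-\eta$,
\[
\Bigl|\tfrac{\chi^2_{n-s-1}}{n-s-1}-1\Bigr| \leq C\sqrt{\tfrac{\log(\eta^{-1})}{n-s-1}}.
\]
Combined with the factor $(n-s-1)/(n-2) = 1 - (s-1)/(n-2)$ and the assumption $s=o(n)$, the deterministic discrepancy is absorbed into a slightly larger constant, so that
\[
1 - C_1\sqrt{\tfrac{\log(\eta^{-1})}{n}} \;\leq\; \frac{\hmt'\Sti\hmt}{\hmt'\hSti\hmt} \;\leq\; 1 + C_1\sqrt{\tfrac{\log(\eta^{-1})}{n}},
\]
and inverting (valid since both sides are close to $1$ for $n$ large) gives the desired two-sided bound on $\hmt'\hSti\hmt/\hmt'\Sti\hmt$ with constants $C_1,C_2$ as in \eqref{eq:event_1}.

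The only real subtlety is justifying the independence of $\hmt$ from $\hSt$ (which is completely standard for Gaussian samples once one conditions on $\yb$) and keeping track of the residual deterministic factor $(n-2)/(n-s-1)$; both are handled cleanly under the hypothesis $s=o(n)$, so there is no genuine obstacle in the argument.
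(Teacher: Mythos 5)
Your proposal is correct and follows essentially the same route as the paper: both rest on Muirhead's Theorem 3.2.12 to identify $(n-2)\,\hmt'\Sti\hmt/\hmt'\hSti\hmt$ as a $\chi^2_{n-1-s}$ variable, apply a standard central chi-square concentration inequality (you cite a Laurent--Massart form, the paper uses the equivalent bound \eqref{eq:chi-central-upper-johnstone}), and absorb the $(n-2)/(n-1-s)$ factor using $s=o(n)$. Your additional remarks on conditioning on $\yb$ and on the independence of $\hmt$ and $\hSt$ make explicit a step the paper leaves implicit, but do not change the argument.
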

\begin{proof}[Proof of Lemma~\ref{lem:hmt_hsti_hmt_ratio}]
Using Theorem 3.2.12 in \citet{muirhead1982aspects}
\begin{equation*}
(n-2)\frac{\hmt'\Sti\hmt}{\hmt'\hSti\hmt}
\sim
\chi^2_{n-1-s}
\end{equation*}
\eqref{eq:chi-central-upper-johnstone} gives
\begin{equation*}
\frac{n-2}{n-1-s}
\frac{1}{1+\sqrt{\frac{16\log(\eta^{-1})}{3(n-1-s)}}} \leq
\frac{\hmt'\hSti\hmt}{\hmt'\Sti\hmt}
 \leq \frac{n-2}{n-1-s} \frac{1}{1-\sqrt{\frac{16\log(\eta^{-1})}{3(n-1-s)}}}
\end{equation*}
with probability at least $1 - \eta$. Since $s = o(n)$, the above
display becomes
\begin{equation}
  \label{eq:proof:hmt_hsti_hmt:t1}
1 - C_1\sqrt{\frac{\log(\eta^{-1})}{n}} \leq 
\frac{\hmt'\hSti\hmt}{\hmt'\Sti\hmt}
\leq 1 + C_2\sqrt{\frac{\log(\eta^{-1})}{n}}
\end{equation}  
for $n$ sufficiently large.
\end{proof}

\begin{lemma}
\label{lem:hmt_sti_hmt} 
Define the event
\begin{equation}
  \label{eq:event_2}  
\begin{aligned}
\Ecal_2(\eta) =
&\cbr{
  \hmt'\Sti\hmt \leq \snbt + 
   C_1 \left(
     \frac{s \vee \log(\eta^{-1})}{n} \vee
     \sqrt{\frac{\snbt\log(\eta^{-1})}{n}}
   \right)
} \\    
&\bigcap
\cbr{
  \hmt'\Sti\hmt \geq \snbt -
   C_2 \left(
     \frac{s}{n} \vee
     \sqrt{\frac{\snbt\log(\eta^{-1})}{n}}
   \right)
}.
\end{aligned}
\end{equation}
Assume that $\beta_{\min} \geq cn^{-1/2}$, then
$\PP[\Ecal_2(\eta)]\geq 1-2\eta$ for $n$ sufficiently large.
\end{lemma}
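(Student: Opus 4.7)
The plan is to condition on the class labels and exploit the fact that, unlike the sample-version inverse covariance, the matrix $\Sti$ appearing in this lemma is deterministic. Conditional on $\yb$, we have $\hmt - \mt \sim \Ncal\!\bigl(0, \tfrac{n}{n_1n_2}\St\bigr)$, so the quadratic form $\hmt'\Sti\hmt$ decomposes cleanly into a deterministic piece, a Gaussian cross term, and a scaled chi-squared quadratic term. This is the key reduction; after that everything is standard concentration.

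Concretely, I would first restrict to the event $\Ecal_{n}$ defined in \eqref{eq:event_n}, which holds with probability at least $1-4\exp(-3n/64)$ by \eqref{eq:event_n_prob}, so that $n/(n_1n_2) \asymp 1/n$. Next, writing $\bt = \Sti\mt$, expand
\begin{equation*}
\hmt'\Sti\hmt \;=\; \snbt \;+\; 2\bt'(\hmt-\mt) \;+\; (\hmt-\mt)'\Sti(\hmt-\mt).
\end{equation*}
Conditional on $\yb$, the cross term $2\bt'(\hmt-\mt)$ is a centered Gaussian with variance $4\tfrac{n}{n_1n_2}\snbt \leq C\snbt/n$ on $\Ecal_{n}$, so a standard Gaussian tail bound gives
\begin{equation*}
\bigl|\,2\bt'(\hmt-\mt)\,\bigr| \;\leq\; C_1\sqrt{\snbt \log(\eta^{-1})/n}
\end{equation*}
with probability at least $1-\eta$. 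For the quadratic term, $(\hmt-\mt)'\Sti(\hmt-\mt) = \tfrac{n}{n_1n_2}\,W$ with $W \sim \chi^2_s$ independent of the cross term, so the Laurent--Massart bounds yield
\begin{equation*}
W \in \bigl[\,s - 2\sqrt{s\log(\eta^{-1})},\; s + 2\sqrt{s\log(\eta^{-1})} + 2\log(\eta^{-1})\,\bigr]
\end{equation*}
with probability at least $1-2\eta$; combined with $n/(n_1n_2) \leq 16/n$ on $\Ecal_{n}$, this gives an upper bound of order $(s \vee \log(\eta^{-1}))/n$ and uses only $W \geq 0$ for the lower bound.

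Putting the two pieces together via the triangle inequality and a union bound produces the upper estimate in \eqref{eq:event_2} with $C_1$ absorbing the scaling constant from $\Ecal_{n}$. For the lower estimate, throwing away the nonnegative $\chi^2$ contribution yields $\hmt'\Sti\hmt \geq \snbt - C\sqrt{\snbt\log(\eta^{-1})/n}$, which is stronger than (hence implies) the claimed bound containing $s/n \vee \sqrt{\snbt\log(\eta^{-1})/n}$. A final union bound across $\Ecal_{n}$, the Gaussian tail event, and the chi-squared tail event gives total probability at least $1-2\eta$ for $n$ large enough (so that $\exp(-3n/64) \ll \eta$). The main thing to be careful about — rather than any deep obstacle — is the bookkeeping of the two randomness sources ($\yb$ and $\hmt\mid\yb$) and confirming that the assumption $\beta_{\min} \geq cn^{-1/2}$ is used only implicitly through the ambient regime (it is not actually needed to produce the two-sided bound itself, only to keep $\snbt$ from being degenerate in downstream applications).
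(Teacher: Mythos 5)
Your proof is correct, but it takes a mildly different route from the paper's. The paper observes that, conditional on the labels, $\frac{n_1n_2}{n}\hmt'\Sti\hmt \sim \chi^2_{s}\bigl(\frac{n_1n_2}{n}\snbt\bigr)$ is exactly a non-central chi-squared and applies the Birg\'e bounds \eqref{eq:chi-noncentral-upper}--\eqref{eq:chi-noncentral-lower} in one shot; you instead expand the quadratic form into $\snbt$ plus a Gaussian cross term plus a central $\chi^2_s$ term and control the two random pieces separately via a Gaussian tail bound and Laurent--Massart. The two arguments are essentially equivalent in content (Birg\'e's non-central bounds encode precisely this decomposition), but yours has the advantage of making transparent why the hypothesis $\beta_{\min}\geq cn^{-1/2}$ is dispensable: the cross term's standard deviation is exactly of order $\sqrt{\snbt/n}$ and the central $\chi^2$ contribution is of order $(s\vee\log(\eta^{-1}))/n$ with no cross-pollination, whereas the paper invokes $\beta_{\min}\geq cn^{-1/2}$ to absorb the mixed term $\sqrt{s\log(\eta^{-1})}/n$ arising from the non-central bound into $\sqrt{\snbt\log(\eta^{-1})/n}$ (though AM--GM would also absorb it into $(s\vee\log(\eta^{-1}))/n$ without the assumption). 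One small blemish: your claim that $W$ is independent of the cross term is false --- both are functions of the same Gaussian vector $\hmt-\mt$, and a linear functional of a standard Gaussian is not independent of its squared norm --- but this is harmless since you combine the two events by a union bound rather than by independence; you should simply delete the word ``independent.'' The probability accounting ($1-2\eta$ up to the exponentially small failure of $\Ecal_n$, which the paper itself handles by conditioning) matches the paper's own level of rigor.
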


\begin{proof}[Proof of Lemma~\ref{lem:hmt_sti_hmt}]
Recall that $\hmt \sim \Ncal(\mt, \frac{n}{n_1n_2}\St)$.
Therefore
\begin{equation*}
  \frac{n_1n_2}{n}\hmt'\Sti\hmt
  \sim \chi^2_{s}\left(\frac{n_1n_2}{n}\mt'\Sti\mt\right).
\end{equation*}

Using \eqref{eq:chi-noncentral-upper}, we have that
\begin{equation}
  \label{eq:proof:hmt_hsti_hmt:t2:upper}
\begin{aligned}   
  \hmt'&\Sti\hmt \\ & \leq \snbt + 
   \frac{ns}{n_1n_2} +
  \frac{2n}{n_1n_2} \sqrt{\left(s +
    2\frac{n_1n_2}{n}\snbt\right)\log(\eta^{-1})}
    + \frac{2n}{n_1n_2}\log(\eta^{-1}) \\
   &\leq \snbt + 
   \frac{16 s}{n} +
    32 \sqrt{\left(\frac{s}{n^2} +
    \frac{2\snbt}{n}\right)\log(\eta^{-1})}
    + \frac{32\log(\eta^{-1})}{n} \\
   &\leq \snbt + 
   C_1 \left(
     \frac{s}{n} \vee
     \sqrt{\frac{\snbt\log(\eta^{-1})}{n}} \vee
     \frac{\log(\eta^{-1})}{n} 
   \right),
\end{aligned}
\end{equation}
with probability $1-\eta$. The second inequality follows since we are
working on the event $\Ecal_n$, and the third inequality follows from
the fact that $\beta_{\min}\geq cn^{-1/2}$.  A lower bound follows
from \eqref{eq:chi-noncentral-lower},
\begin{equation}
  \label{eq:proof:hmt_hsti_hmt:t2:lower}
\begin{aligned}   
  \hmt'&\Sti\hmt \\
  & \geq \snbt +
   \frac{ns}{n_1n_2} -
  \frac{2n}{n_1n_2} \sqrt{\left(s +
    2\frac{n_1n_2}{n}\snbt\right)\log(\eta^{-1})} \\
   &\geq \snbt + 
   \frac{16 s}{9n} -
    32 \sqrt{\left(\frac{s}{n^2} +
    \frac{2\snbt}{n}\right)\log(\eta^{-1})} \\
   &\geq \snbt - C_2\left(
     \frac{s}{n} \vee
     \sqrt{\frac{\snbt\log(\eta^{-1})}{n}}
     \right)
\end{aligned}
\end{equation}
with probability $1-\eta$.   
\end{proof}

\begin{lemma}
\label{lem:hmt_hsti_hmt}
On the event
$\Ecal_1(\eta)\cap\Ecal_2(\eta)$ the following holds
\begin{align*}
\abr{\hmt' \hSti\hmt - \snbt} \leq
    C \left(
     \rbr{\snbt\vee\nbt}
       \sqrt{\frac{\log(\eta^{-1})}{n}} \vee
     \frac{s \vee \log(\eta^{-1})}{n} 
   \right).
\end{align*}
\end{lemma}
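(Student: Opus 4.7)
The plan is a straightforward combination of the two preceding lemmas via a triangle-inequality decomposition that separates the randomness in $\hat{\Sigmab}_{TT}^{-1}$ from the randomness in $\hat{\mub}_T$. Write
\begin{equation*}
\hmt'\hSti\hmt - \snbt
= \underbrace{\bigl(\hmt'\hSti\hmt - \hmt'\Sti\hmt\bigr)}_{=:\,R_1}
+ \underbrace{\bigl(\hmt'\Sti\hmt - \snbt\bigr)}_{=:\,R_2}.
\end{equation*}
Then I will bound $R_1$ using Lemma~\ref{lem:hmt_hsti_hmt_ratio} (on $\Ecal_1(\eta)$) and $R_2$ using Lemma~\ref{lem:hmt_sti_hmt} (on $\Ecal_2(\eta)$).

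For $R_2$, Lemma~\ref{lem:hmt_sti_hmt} directly gives
\begin{equation*}
|R_2| \le C\Bigl(\tfrac{s\vee\log(\eta^{-1})}{n}\,\vee\,\sqrt{\tfrac{\snbt\log(\eta^{-1})}{n}}\Bigr),
\end{equation*}
which already contributes the $\sqrt{B}\,\sqrt{\log(\eta^{-1})/n}$ piece of the claimed bound (with $B:=\snbt$) as well as the $\tfrac{s\vee\log(\eta^{-1})}{n}$ piece. For $R_1$, Lemma~\ref{lem:hmt_hsti_hmt_ratio} provides a multiplicative control: on $\Ecal_1(\eta)$ there exists $\delta$ with $|\delta|\le C\sqrt{\log(\eta^{-1})/n}$ such that $\hmt'\hSti\hmt = (1+\delta)\,\hmt'\Sti\hmt$, whence $|R_1| = |\delta|\cdot\hmt'\Sti\hmt$.

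To finish, I bound $\hmt'\Sti\hmt$ in the factor above by the triangle inequality $\hmt'\Sti\hmt \le B + |R_2|$, and plug in the estimate for $|R_2|$. This yields
\begin{equation*}
|R_1| \le C\sqrt{\tfrac{\log(\eta^{-1})}{n}}\Bigl(B + \tfrac{s\vee\log(\eta^{-1})}{n} + \sqrt{\tfrac{B\log(\eta^{-1})}{n}}\Bigr),
\end{equation*}
and the last two terms inside the parentheses are dominated (up to constants) by $B+\tfrac{s\vee\log(\eta^{-1})}{n}$, so $|R_1|\le C\bigl(B\sqrt{\log(\eta^{-1})/n}\,\vee\,\tfrac{s\vee\log(\eta^{-1})}{n}\bigr)$ for $n$ large. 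Adding the bounds on $R_1$ and $R_2$, and noting that $\sqrt{B\log(\eta^{-1})/n}=\sqrt{B}\sqrt{\log(\eta^{-1})/n}$ so that $B\sqrt{\log(\eta^{-1})/n}+\sqrt{B}\sqrt{\log(\eta^{-1})/n}\le 2(B\vee\sqrt{B})\sqrt{\log(\eta^{-1})/n}$, gives the stated bound.

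There is no real obstacle here: the work was done in the preceding two lemmas, and this lemma is just a bookkeeping step that converts the multiplicative Lemma~\ref{lem:hmt_hsti_hmt_ratio} plus the additive Lemma~\ref{lem:hmt_sti_hmt} into a single additive deviation bound. The only mild subtlety is tracking the two regimes $B\ge 1$ and $B<1$, which is exactly what the $B\vee\sqrt{B}$ in the statement encodes; in the first regime the cross term $|\delta|\cdot B$ dominates, in the second the $\sqrt{B\log(\eta^{-1})/n}$ term from Lemma~\ref{lem:hmt_sti_hmt} does.
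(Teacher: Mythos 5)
Your proposal is correct and follows essentially the same route as the paper: both decompose the deviation into the multiplicative ratio error controlled by $\Ecal_1(\eta)$ (Lemma~\ref{lem:hmt_hsti_hmt_ratio}) and the additive deviation of $\hmt'\Sti\hmt$ from $\snbt$ controlled by $\Ecal_2(\eta)$ (Lemma~\ref{lem:hmt_sti_hmt}), then combine. Your bookkeeping of the $\snbt\vee\nbt$ factor matches the paper's bound.
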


\begin{proof}[Proof of Lemma~\ref{lem:hmt_hsti_hmt}]
On the event  $\Ecal_1(\eta)\cap\Ecal_2(\eta)$,
using Lemma~\ref{lem:hmt_hsti_hmt_ratio} and
Lemma~\ref{lem:hmt_sti_hmt}, we have that
\begin{equation*}
\begin{aligned}
&\hmt'\hSti\hmt \\
  &= \frac{\hmt'\hSti\hmt}{\hmt'\Sti\hmt}\hmt'\Sti\hmt\\
  &= \frac{\hmt'\hSti\hmt}{\hmt'\Sti\hmt}\mt'\Sti\mt + 
     \frac{\hmt'\hSti\hmt}{\hmt'\Sti\hmt}
       \rbr{\hmt'\Sti\hmt-\mt'\Sti\mt} \\
  &\leq \snbt + C_1\snbt\sqrt{\frac{\log(\eta^{-1})}{n}} +
    C_2 \left(
     \frac{s \vee \log(\eta^{-1})}{n} \vee
     \sqrt{\frac{\snbt\log(\eta^{-1})}{n}}
   \right).
\end{aligned}
\end{equation*}
A lower bound is obtained in the same way.
\end{proof}

\subsection{Other Results}

Let the event $\Ecal_3(\eta)$ be defined as
\begin{equation}
\label{eq:event_e3}
\Ecal_3(\eta) = \bigcap_{a \in [s]}
\left\{
\left| \Sti\hmt - \Sti\mt \right|
\leq
\sqrt{32
(\Sti)_{aa}
\frac{\log(s\eta^{-1})}{n} }
\right\}.
\end{equation}
Since $\Sti\hmt$ is a multivariate normal with  mean $\Sti\mt$ and
variance $\frac{n}{n_1n_2}\Sti$, we have $\PP[\Ecal_3(\eta)] \geq
1-\eta$.

Furthermore, define
 the event $\Ecal_4(\eta)$ as 
\begin{equation}
\label{eq:event_e4}
  \begin{aligned}
\Ecal_4(\eta) = \bigg\{
|(\hmt-\mt)'\Sti\sgn(\betab_{T})|  
\leq 
\sqrt{32
\Lambda_{\min}^{-1}(\St)
 \frac{ s \log(\eta^{-1})}{n}}
\bigg\}.
  \end{aligned}
\end{equation}
Since $\hmt'\Sti\sbt \sim \Ncal\left(\mt'\Sti\sbt, 
\frac{n}{n_1n_2}\sbt'\Sti\sbt\right)$, we have
$\PP[\Ecal_4(\eta)] \geq 1-\eta$.

The next result gives a deviation of $\hmt'\hSti\sbt$ from
$\mt'\Sti\sbt$.

\begin{lemma}
\label{lem:l1_norm}
The following inequality
\begin{equation}
\label{eq:lem:l1_norm:statement}
  \begin{aligned}
    &\abr{\hmt'\hSti\sbt - \mt'\Sti\sbt} \\
    & \qquad\qquad\leq C
    \rbr{
      \sqrt{\Lambda_{\min}^{-1}(\St)s\rbr{1\vee\snbt}}
      \vee
      \norm{\beta_T}_1
    }\sqrt{\frac{\log\log(n)}{n}}
  \end{aligned}
\end{equation}  
holds with probability at least $1-\Ocal(\log^{-1}(n))$.  
\end{lemma}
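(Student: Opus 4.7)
The plan is to split $\hmt'\hSti\sbt - \mt'\Sti\sbt$ into a ``mean contribution'' and a ``covariance contribution'' and control each by Gaussian or sub-exponential concentration. Using $\mt = \St\bt$ so that $\Sti\mt = \bt$, and rearranging $\hSt\bt = \mt + (\hSt-\St)\bt$ to $\hSti\mt = \bt - \hSti(\hSt-\St)\bt$, one obtains the exact decomposition
\[
\hmt'\hSti\sbt - \mt'\Sti\sbt \;=\; \underbrace{(\hmt-\mt)'\hSti\sbt}_{T_1} \;-\; \underbrace{\bt'(\hSt-\St)\hSti\sbt}_{T_2}.
\]
The two terms are then controlled by exploiting the independence (conditional on the labels) of $\hmt$ and $\hSt$, together with standard Wishart operator-norm estimates.

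For $T_1$ I would condition on $\hSt$. Since $\hmt - \mt \sim \Ncal(0, (n/n_1n_2)\St)$ is independent of $\hSt$, the variable $T_1$ is conditionally Gaussian with variance $(n/n_1n_2)\sbt'\hSti\St\hSti\sbt$. On the event $\Ecal_n \cap \{\tfrac{1}{2}\Ib \preceq \St^{1/2}\hSti\St^{1/2}\preceq 2\Ib\}$, which holds with probability $1 - \Ocal(\log^{-1}(n))$ under the scaling $s = o(n)$ implied by \eqref{eq:sample_size:thm}, this variance is at most $C\Lambda_{\min}^{-1}(\St) s/n$. A Gaussian tail bound applied with $\eta = \log^{-1}(n)$ then yields $|T_1| \lesssim \sqrt{\Lambda_{\min}^{-1}(\St)s\log\log(n)/n}$, which is subsumed by the right-hand side of \eqref{eq:lem:l1_norm:statement}.

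For $T_2$ I would further split $\hSti\sbt = \Sti\sbt + (\hSti - \Sti)\sbt$, giving $T_2 = T_{2a}+T_{2b}$ with $T_{2a} = \bt'(\hSt-\St)\Sti\sbt$ and $T_{2b} = \bt'(\hSt-\St)(\hSti-\Sti)\sbt$. The leading piece $T_{2a}$ is a linear functional of a Wishart matrix: writing $(n-2)\hSt = \sum_{i=1}^{n-2} z_iz_i'$ with $z_i \iidsim \Ncal(0,\St)$, it equals the sample average of the centered sub-exponential variables $(\bt'z_i)(z_i'\Sti\sbt) - \bt'\sbt$. For the zero-mean jointly Gaussian pair $(U,V)=(\bt'z,(\Sti\sbt)'z)$ one computes
\[
\mathrm{Var}(UV) \;=\; \mathrm{Var}(U)\mathrm{Var}(V) + \mathrm{Cov}(U,V)^2 \;=\; \snbt\cdot\sbt'\Sti\sbt + \norm{\bt}_1^2 \;\leq\; \Lambda_{\min}^{-1}(\St) s\,\snbt + \norm{\bt}_1^2,
\]
and Bernstein's inequality for sub-exponential sums, applied with $\eta = \log^{-1}(n)$, then gives
\[
|T_{2a}| \;\lesssim\; \rbr{\sqrt{\Lambda_{\min}^{-1}(\St)s\,\snbt}\vee \norm{\bt}_1}\sqrt{\log\log(n)/n}
\]
with probability $1-\Ocal(\log^{-1}(n))$. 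Combined with the trivial inequality $\sqrt{\Lambda_{\min}^{-1}(\St)s\,\snbt}\leq \sqrt{\Lambda_{\min}^{-1}(\St)s(1\vee\snbt)}$, this matches the target rate.

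The main obstacle will be the quadratic-in-deviation residual $T_{2b}$, which must be shown to be of strictly smaller order than the main terms. Using Cauchy--Schwarz together with the standard Wishart estimates $\|\hSt-\St\|_{\mathrm{op}}\lesssim \sqrt{s/n}$ and $\|\hSti-\Sti\|_{\mathrm{op}}\lesssim \Lambda_{\min}^{-2}(\St)\sqrt{s/n}$ on a high-probability event, one gets $|T_{2b}|\lesssim \Lambda_{\min}^{-2}(\St)(s/n)\|\bt\|_2\sqrt{s}$, which under the regime $s\log((p-s)\log n) = o(n)$ implicit in \eqref{eq:sample_size:thm} is negligible relative to $\rbr{\sqrt{\Lambda_{\min}^{-1}(\St)s(1\vee\snbt)}\vee\norm{\bt}_1}\sqrt{\log\log(n)/n}$. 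Taking a union bound over the events used for $T_1$, $T_{2a}$ and $T_{2b}$, and combining the three estimates, then yields \eqref{eq:lem:l1_norm:statement}.
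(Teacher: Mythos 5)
Your decomposition $\hmt'\hSti\sbt - \mt'\Sti\sbt = (\hmt-\mt)'\hSti\sbt - \bt'(\hSt-\St)\hSti\sbt$ is exact, and your route is genuinely different from the paper's, which instead leans on exact finite-sample laws: a $t$-distribution for the ratio difference $\hmt'\hSti\sbt/\sbt'\hSti\sbt - \hmt'\Sti\sbt/\sbt'\Sti\sbt$ (via an inverse-Wishart property) and $\chi^2$ laws for $\sbt'\Sti\sbt/\sbt'\hSti\sbt$. Your handling of $T_1$ (conditional Gaussianity given $\hSt$, plus an operator-norm event for $\St^{1/2}\hSti\St^{1/2}$) and of $T_{2a}$ (Bernstein for the sub-exponential products, with the correct variance $\snbt\,\sbt'\Sti\sbt + \norm{\bt}_1^2$) is sound and produces exactly the two branches of the stated bound.

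The gap is the claim that $T_{2b}$ is negligible. Your bound $|T_{2b}|\lesssim \Lambda_{\min}^{-2}(\St)(s/n)\norm{\bt}_2\sqrt{s}$, compared with the target $\rbr{\sqrt{\Lambda_{\min}^{-1}(\St)s(1\vee\snbt)}\vee\norm{\bt}_1}\sqrt{\log\log(n)/n}$, has ratio of order $s/\sqrt{n\log\log(n)}$ (e.g.\ take $\St=\Ib$, $\snbt\asymp 1$). The condition \eqref{eq:sample_size:thm} only gives $s\log((p-s)\log n)=\Ocal(n)$ — not $o(n)$, and nowhere near $s=o(\sqrt{n})$ — so in the linear-sparsity regime $s\asymp p$, $n\asymp s\log p$ that the paper explicitly covers, this ratio diverges and the residual is not dominated. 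Nor is this just Cauchy--Schwarz being loose: since $\hmt$ is independent of $\hSt$ and $\EE\hSti=\tfrac{n-2}{n-s-3}\Sti$, one has
\begin{equation*}
\EE\sbr{\hmt'\hSti\sbt} - \mt'\Sti\sbt = \frac{s+1}{n-s-3}\,\norm{\bt}_1,
\end{equation*}
a deterministic offset of order $(s/n)\norm{\bt}_1$ that exceeds $\norm{\bt}_1\sqrt{\log\log(n)/n}$ as soon as $s\gg\sqrt{n\log\log(n)}$. To close your argument you must either add the restriction $s=\Ocal(\sqrt{n\log\log(n)})$, or center $\hSti$ at $\tfrac{n-2}{n-s-3}\Sti$ rather than at $\Sti$ and track the bias explicitly. (To be fair, the paper's own proof has the same soft spot: the bound \eqref{eq:lem:ratio_chi:3} silently discards a $(s-1)/(n-2)$ term of exactly this origin, so the exact-distribution route only genuinely resolves the fluctuation part, not the bias.)
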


\begin{proof}
Using the triangle inequality
\begin{equation}
  \begin{aligned}
\label{eq:lem:l1_norm:proof:0}
&\abr{\hmt'\hSti\sbt - \mt'\Sti\sbt} \\
&\qquad\leq
    \abr{\hmt'\hSti\sbt - \hmt'\Sti\sbt}  \\
&\qquad\qquad    +
    \abr{\hmt'\Sti\sbt - \mt'\Sti\sbt}.
      \end{aligned}
\end{equation}
For the first term, we write
  \begin{equation}
\label{eq:lem:l1_norm:proof:1}
  \begin{aligned}
    &\abr{\hmt'\hSti\sbt - \hmt'\Sti\sbt} \\
    & \leq \sbt'\hSti\sbt \abr{
       \frac{\hmt'\hSti\sbt}{\sbt'\hSti\sbt}
       -
       \frac{\hmt'\Sti\sbt}{\sbt'\Sti\sbt}      
      } \\
    & \quad + 
    \sbt'\hSti\sbt
    \abr{ \frac{\sbt'\Sti\sbt}{\sbt'\hSti\sbt} - 1 }
    \abr{ \frac{\hmt'\Sti\sbt}{\sbt'\Sti\sbt} }.
  \end{aligned}
\end{equation}  
Let
\begin{equation*}
  \Gb =
  \left(
    \begin{array}{cc}
      \hmt'\Sti\hmt & \hmt'\Sti\sbt \\
      \sbt'\Sti\hmt & \sbt\Sti\sbt \\
    \end{array}
  \right),
\end{equation*}
and
\begin{equation*}
  \hat\Gb =
  \left(
    \begin{array}{cc}
      \hmt'\hSti\hmt & \hmt'\hSti\sbt \\
      \sbt'\hSti\hmt & \sbt\hSti\sbt \\
    \end{array}
  \right).
\end{equation*}
Using Theorem 3 of
\cite{Bodnar08properties}, 
we compute the density of $\hat z_a =
\hat{\Gb}_{12}\hat{\Gb}_{22}^{-1}$ conditional on $\hmt$
and obtain that
\begin{equation*}
    \sqrt{\frac{n-s}{q}}\left(
    \frac{\hmt'\hSti\sbt}{\sbt'\hSti\sbt}
    -
    \frac{\hmt'\Sti\sbt}{\sbt'\Sti\sbt}
    \right)\mid\hmt
    \sim
    t_{n-s},
\end{equation*}
where 
\begin{equation*}
q = 
\frac{\hmt'\left(\Sti-\frac{\Sti\sbt\sbt'\Sti}{\sbt'\Sti\sbt}\right)\hmt}
{\sbt'\Sti\sbt}
\leq \frac{\hmt'\Sti\hmt}{\sbt'\Sti\sbt}.
\end{equation*}
Lemma~\ref{lem:t-distribution} gives
\begin{equation*}
  \begin{aligned}
    &\abr{
    \frac{\hmt'\hSti\sbt}{\sbt'\hSti\sbt}
    -
    \frac{\hmt'\Sti\sbt}{\sbt'\Sti\sbt}
    } \\
    &\qquad\qquad\qquad
    \leq
    C \sqrt{
      \frac{\hmt'\Sti\hmt}{\sbt'\Sti\sbt}
      \frac{\log\log(n)}{n}
    },
    \end{aligned}
\end{equation*}
with probability at least $1-\log^{-1}(n)$. Combining with
Lemma~\ref{lem:ratio_chi}, Lemma~\ref{lem:hmt_sti_hmt}, and
\eqref{eq:event_e4}, we obtain an upper bound on the RHS of
\eqref{eq:lem:l1_norm:proof:1} as 
\begin{equation}
\label{eq:lem:l1_norm:proof:bound_t1}
  \begin{aligned}
    &\abr{\hmt'\hSti\sbt - \hmt'\Sti\sbt} \\
    & \qquad\qquad\leq C
    \rbr{
      \sqrt{\Lambda_{\min}^{-1}(\St)s\snbt}
      \vee
      \norm{\beta_T}_1
    }\sqrt{\frac{\log\log(n)}{n}}
  \end{aligned}
\end{equation}  
with probability at least $1-\Ocal(\log^{-1}(n))$.

The second term in \eqref{eq:lem:l1_norm:proof:0} can be bounded
using \eqref{eq:event_e4} with $\eta = \log^{-1}(n)$. Therefore,
combining with \eqref{eq:lem:l1_norm:proof:bound_t1}, we obtain 
\begin{equation}
\label{eq:lem:l1_norm:proof:bound_t0}
  \begin{aligned}
    &\abr{\hmt'\hSti\sbt - \mt'\Sti\sbt} \\
    & \qquad\qquad\leq C
    \rbr{
      \sqrt{\Lambda_{\min}^{-1}(\St)s\rbr{1\vee\snbt}}
      \vee
      \norm{\beta_T}_1
    }\sqrt{\frac{\log\log(n)}{n}}
  \end{aligned}
\end{equation}  
with probability at least $1-\Ocal(\log^{-1}(n))$,
as desired.

\end{proof}

\begin{lemma}
  \label{lem:ratio_chi}
  There exist constants $C_1,C_2,C_3$, and $C_4$ such that each of the
  following inequalities hold with probability at least
  $1-\log^{-1}(n)$ : 
\begin{align}
\label{eq:lem:ratio_chi:1}
    \eb_a'\hSti\eb_a  \leq 
    C_1 \eb_a'\Sti\eb_a\left(1 + 
      \Ocal\left(\sqrt{\frac{\log(s\log(n))}{n}}\right)
    \right), \quad \forall a \in T\\
\label{eq:lem:ratio_chi:2}
    \left| 
    \frac{\eb_a'\Sti\eb_a}{\eb_a'\hSti\eb_a}
    -1
    \right|
    \leq C_2 \sqrt{\frac{\log(s\log(n))}{n}},
    \quad \forall a \in T \\
\label{eq:lem:ratio_chi:3}
\abr{
\frac{\sbt'\Sti\sbt}{\sbt'\hSti\sbt} - 1}
\leq C_3\sqrt{\frac{\log\log(n)}{n}}, \quad\text{and}\\
\label{eq:lem:ratio_chi:4}
\sbt'\hSti\sbt \leq C_4\sbt'\Sti\sbt
\left(1 + 
      \Ocal\left(\sqrt{\frac{\log\log(n)}{n}}\right)
\right).
\end{align}

\end{lemma}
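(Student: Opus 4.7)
The plan is to reduce all four inequalities to a single scalar deviation bound obtained from the Wishart distribution of the pooled sample covariance. Since $(n_1-1)\Sbb_{1,TT}$ and $(n_2-1)\Sbb_{2,TT}$ are independent draws from $\Wcal_s(\St, n_1-1)$ and $\Wcal_s(\St, n_2-1)$, their sum yields $(n-2)\hSt \sim \Wcal_s(\St, n-2)$. Crucially, every vector appearing on either side of the four stated ratios is deterministic: the canonical basis vector $\eb_a$ is obviously fixed, and $\sbt = \sgn(\betab_T)$ depends only on the population parameter $\betab_T = \Sti\mt$. Hence no conditioning argument is required, which contrasts with the treatment of $\hmt$ in the proof of Lemma~\ref{lem:hmt_hsti_hmt_ratio}.

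The core tool is Theorem~3.2.12 of \citet{muirhead1982aspects}, already invoked for Lemma~\ref{lem:hmt_hsti_hmt_ratio}, which gives for any fixed nonzero $v \in \RR^s$,
\[
(n-2)\,\frac{v'\Sti v}{v'\hSti v} \;\sim\; \chi^2_{n-1-s}.
\]
Combining this with the central chi-square tail bound \eqref{eq:chi-central-upper-johnstone} in the same fashion as in the proof of Lemma~\ref{lem:hmt_hsti_hmt_ratio} produces the master inequality
\[
\abr{\frac{v'\Sti v}{v'\hSti v} - 1} \;\leq\; C\sqrt{\frac{\log(\eta^{-1})}{n}},
\]
valid with probability at least $1-\eta$ whenever $s = o(n)$ and $n$ is large enough that the square-root term is less than $1/2$.

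With the master inequality in hand I would prove \eqref{eq:lem:ratio_chi:2} by taking $v = \eb_a$ with the choice $\eta = (s\log(n))^{-1}$, so that $\log(\eta^{-1}) = \log(s\log(n))$, and then union-bounding over the $s$ indices $a \in T$; the overall confidence is $1-\log^{-1}(n)$. The bound \eqref{eq:lem:ratio_chi:3} follows immediately by applying the master inequality to $v = \sbt$ with $\eta = \log^{-1}(n)$; no union bound is needed because $\sbt$ is a single deterministic vector. Finally, the multiplicative bounds \eqref{eq:lem:ratio_chi:1} and \eqref{eq:lem:ratio_chi:4} are derived from their companion inequalities by the elementary fact that $|x-1| \leq \delta < 1/2$ implies $1/x \leq 1 + 2\delta$, applied to $x = \eb_a'\Sti\eb_a / \eb_a'\hSti\eb_a$ and $x = \sbt'\Sti\sbt / \sbt'\hSti\sbt$ respectively; the residual multiplicative factor is absorbed into the constants $C_1$ and $C_4$.

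There is no genuine obstacle: the whole proof is a verbatim replay of the argument for Lemma~\ref{lem:hmt_hsti_hmt_ratio} for two specific choices of test vector, combined with the $s$-fold union bound for the $\eb_a$ case. The one conceptual point worth emphasizing is that $\sbt$ is measurable with respect to the model parameters rather than the data, so the Muirhead identity applies unconditionally; otherwise one would have to condition on the random vector and argue that the conditional law of $\hSt$ remains Wishart, which is automatic in the present deterministic case but would require extra care if one replaced $\sbt$ by $\sgn(\hat\bt)$.
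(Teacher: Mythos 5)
Your proposal is correct and follows essentially the same route as the paper: the identity $(n-2)\,v'\Sti v / v'\hSti v \sim \chi^2_{n-s-1}$ from Theorem~3.2.12 of \citet{muirhead1982aspects} for a fixed vector $v$, the chi-square concentration bound \eqref{eq:chi-central-upper-johnstone}, a union bound over $a\in T$ for the $\eb_a$ case, and the observation that $\sbt$ is deterministic so no conditioning is needed. The paper writes out only the $\eb_a$ case and notes the $\sbt$ case is analogous; your explicit ``master inequality'' formulation is just a cleaner packaging of the same argument.
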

\begin{proof}
Theorem 3.2.12. in \cite{muirhead1982aspects} states that
  \begin{equation*}
    (n-2)\frac{\eb_a'\Sti\eb_a}{\eb_a'\hSti\eb_a} \sim
    \chi^2_{n-s-1}.
  \end{equation*}
Using Equation~\eqref{eq:chi-central-upper-johnstone},
\begin{equation*}
    \left| 
    \frac{n-2}{n-s-1}\frac{\eb_a'\Sti\eb_a}{\eb_a'\hSti\eb_a}
    -1
    \right|
    \leq \sqrt{\frac{16\log(2s\log(n))}{3(n-s-1)}}
\end{equation*}
  with probability $1-(2s\log(n))^{-1}$. Rearranging terms in the
  display above, we have that
  \begin{equation*}
    \eb_a'\hSti\eb_a \leq 
    C \eb_a'\Sti\eb_a\left(1 + 
      \Ocal\left(\sqrt{\frac{\log(s\log(n))}{n}}\right)
    \right)
  \end{equation*}
  and
  \begin{equation*}
    \left| 
    \frac{\eb_a'\Sti\eb_a}{\eb_a'\hSti\eb_a}
    -1
    \right|
    \leq C \sqrt{\frac{\log(s\log(n))}{n}}.
  \end{equation*}
A union bound gives \eqref{eq:lem:ratio_chi:1} and \eqref{eq:lem:ratio_chi:2}.

Equations \eqref{eq:lem:ratio_chi:3} and \eqref{eq:lem:ratio_chi:4}
are shown similarly.
\end{proof}

\begin{lemma}
  \label{lem:deviation:beta_t}
  There exist constants $C_1,C_2 > 0$ such that the following
  inequality
\begin{equation}
  \label{eq:lem:deviation:beta_t}
  \begin{aligned}
    \forall a \in T\ :\ 
    \abr{\eb_a'\hSti\hmt - \eb_a'\Sti\mt}  
    & \leq C_1\sqrt{
      \rbr{\Sti}_{aa}\rbr{1\vee\snbt}
      \frac{\log(s\log(n))}{n}
    } \\
    & \quad + C_2 
    \abr{\eb_a'\Sti\mt}
    \sqrt{
      \frac{\log(s\log(n))}{n}
    }
  \end{aligned}
\end{equation}
holds with probability at least $1-\Ocal(\log^{-1}(n))$.
\end{lemma}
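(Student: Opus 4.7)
The strategy is to mimic the proof of Lemma~\ref{lem:l1_norm}, but with $\sbt$ replaced by $\eb_a$, and to separate out the randomness of $\hmt$ from that of $\hSti$. Specifically, I will use the triangle inequality to split
\[
\eb_a'\hSti\hmt - \eb_a'\Sti\mt
= \underbrace{(\eb_a'\hSti\hmt - \eb_a'\Sti\hmt)}_{T_{1,a}}
+ \underbrace{\eb_a'\Sti(\hmt - \mt)}_{T_{2,a}},
\]
bound each term with probability $1 - \Ocal(\log^{-1}(n)/s)$, and take a union bound over $a \in T$.

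The Gaussian term $T_{2,a}$ is straightforward: since $\hmt \sim \Ncal(\mt, \frac{n}{n_1n_2}\St)$, we have $T_{2,a} \sim \Ncal(0, \frac{n}{n_1n_2}(\Sti)_{aa})$, so a standard Gaussian tail bound combined with the event $\Ecal_n$ gives $|T_{2,a}| \lesssim \sqrt{(\Sti)_{aa}\log(s\log(n))/n}$ uniformly over $a \in T$, which is dominated by the first term in \eqref{eq:lem:deviation:beta_t}.

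For the sample-covariance term $T_{1,a}$, I will further decompose, exactly as in \eqref{eq:lem:l1_norm:proof:1}, by normalizing by $\eb_a'\hSti\eb_a$ and $\eb_a'\Sti\eb_a$:
\[
T_{1,a}
= \eb_a'\hSti\eb_a\left(\frac{\eb_a'\hSti\hmt}{\eb_a'\hSti\eb_a} - \frac{\eb_a'\Sti\hmt}{\eb_a'\Sti\eb_a}\right)
+ \left(\frac{\eb_a'\hSti\eb_a}{\eb_a'\Sti\eb_a} - 1\right)\eb_a'\Sti\hmt.
\]
Following the same use of Theorem~3 of \cite{Bodnar08properties} as in Lemma~\ref{lem:l1_norm}, the conditional distribution of the bracketed difference given $\hmt$ is $t_{n-s}$ scaled by $\sqrt{q_a/(n-s)}$ with $q_a \leq \hmt'\Sti\hmt/(\eb_a'\Sti\eb_a)$. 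Applying Lemma~\ref{lem:t-distribution} with tail level $(s\log(n))^{-1}$ and combining with \eqref{eq:lem:ratio_chi:1} from Lemma~\ref{lem:ratio_chi} to bound $\eb_a'\hSti\eb_a \lesssim (\Sti)_{aa}$, the first piece contributes
\[
\lesssim \sqrt{(\Sti)_{aa}\cdot \hmt'\Sti\hmt\cdot \frac{\log(s\log(n))}{n}},
\]
and then Lemma~\ref{lem:hmt_sti_hmt} upgrades $\hmt'\Sti\hmt$ to $\snbt$ up to lower order, yielding the first term of the bound with the factor $(1\vee\snbt)$.

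For the second piece of $T_{1,a}$, \eqref{eq:lem:ratio_chi:2} of Lemma~\ref{lem:ratio_chi} gives $|\eb_a'\hSti\eb_a/\eb_a'\Sti\eb_a - 1| \lesssim \sqrt{\log(s\log(n))/n}$ uniformly in $a \in T$, and $\eb_a'\Sti\hmt = \eb_a'\Sti\mt + T_{2,a}$ is bounded by $|\eb_a'\Sti\mt|$ plus a lower-order Gaussian deviation already controlled above. This produces the second term in \eqref{eq:lem:deviation:beta_t}. A final union bound over $a \in T$ collects everything at total failure probability $\Ocal(\log^{-1}(n))$.

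The main obstacle is the first piece of $T_{1,a}$: one must recognize that the correct conditioning makes it a scaled $t_{n-s}$ random variable whose scale involves $\hmt'\Sti\hmt$ rather than $\mt'\Sti\mt$, and then pay careful attention to uniformity over $a \in T$ so that the $\sqrt{\log s}$ factor appears as $\sqrt{\log(s\log(n))}$ without picking up extra $\snbt$-dependent prefactors. The combination of Lemma~\ref{lem:ratio_chi}, Lemma~\ref{lem:hmt_sti_hmt}, and the $(1\vee\snbt)$ factor arises precisely because $\hmt'\Sti\hmt$ can be small (then bounded by $1$ via $\Ecal_2$) or of order $\snbt$, and we must take whichever dominates.
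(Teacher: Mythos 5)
Your proposal follows essentially the same route as the paper's proof: the same triangle-inequality split into $\eb_a'(\hSti-\Sti)\hmt$ plus the Gaussian term $\eb_a'\Sti(\hmt-\mt)$, the same normalization by $\eb_a'\hSti\eb_a$ and $\eb_a'\Sti\eb_a$ leading to a scaled $t_{n-s}$ variable with scale controlled by $\hmt'\Sti\hmt/(\eb_a'\Sti\eb_a)$, and the same combination of Lemma~\ref{lem:t-distribution}, Lemma~\ref{lem:ratio_chi}, Lemma~\ref{lem:hmt_sti_hmt}, and a union bound over $a\in T$. The argument is correct and matches the paper's proof in all essentials.
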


\begin{proof}
Using the triangle inequality, we have
\begin{equation}
  \begin{aligned}
\label{eq:lem:deviation:beta_t:proof:0}
\abr{\eb_a'\hSti\hmt - \eb_a'\Sti\mt} &\leq
    \abr{\eb_a'\hSti\hmt - \eb_a'\Sti\hmt}  +
    \abr{\eb_a'\Sti\hmt - \eb_a'\Sti\mt}.
      \end{aligned}
\end{equation}
For the first term, we write
  \begin{equation}
\label{eq:lem:deviation:beta_t:proof:1}
  \begin{aligned}
    \abr{\eb_a'\hSti\hmt - \eb_a'\Sti\hmt}  
    & \leq \eb_a'\hSti\eb_a \left|     
       \frac{\eb_a'\hSti\hmt}{\eb_a'\hSti\eb_a}
       -
       \frac{\eb_a'\Sti\hmt}{\eb_a'\Sti\eb_a}
       \right| \\
    & \quad + 
    \eb_a'\hSti\eb_a
    \left| \frac{\eb_a'\Sti\eb_a}{\eb_a'\hSti\eb_a} - 1 \right|
    \left| \frac{\eb_a'\Sti\hmt}{\eb_a'\Sti\eb_a} \right|.
  \end{aligned}
\end{equation}  
As in the proof of Lemma~\ref{eq:lem:l1_norm:statement}, we can show that
\begin{equation*}
    \sqrt{\frac{n-s}{q_a}}\left(
    \frac{\eb_a'\hSti\hmt}{\eb_a'\hSti\eb_a}
    -
    \frac{\eb_a'\Sti\hmt}{\eb_a'\Sti\eb_a}
    \right)\mid\hmt
    \sim
    t_{n-s},
\end{equation*}
where 
\begin{equation*}
q_a = 
\frac{\hmt'\left(\Sti-\frac{\Sti\eb_a\eb_a'\Sti}{\eb_a'\Sti\eb_a}\right)\hmt}
{\eb_a'\Sti\eb_a}
\leq
\frac{\hmt'\Sti\hmt}{\eb_a'\Sti\eb_a}.
\end{equation*}
Lemma~\ref{lem:t-distribution} and an application of union bound gives
\[
\abr{
    \frac{\eb_a'\hSti\hmt}{\eb_a'\hSti\eb_a}
    -
    \frac{\eb_a'\Sti\hmt}{\eb_a'\Sti\eb_a}
}
\leq C\sqrt{
\frac{\hmt'\Sti\hmt}{\eb_a'\Sti\eb_a}
\frac{\log(s\log(n))}{n}
},
\qquad
\forall a \in T,
\]
with probability at least $1-\Ocal(\log^{-1}(n))$. Combining
Lemma~\ref{lem:hmt_sti_hmt}, Lemma~\ref{lem:ratio_chi} and
Equation~\eqref{eq:event_e3} with $\eta = \log^{-1}(n)$, we can bound the
right hand side of \eqref{eq:lem:deviation:beta_t:proof:1} as 
\begin{equation}
  \label{eq:lem:deviation:beta_t:proof:bound:t1}
  \begin{aligned}
    \abr{\eb_a'\hSti\hmt - \eb_a'\Sti\hmt}  
    & \leq C_1\sqrt{
      \rbr{\Sti}_{aa}\snbt
      \frac{\log(s\log(n))}{n}
    } \\
    & \quad + C_2 
    \abr{\eb_a'\Sti\mt}
    \sqrt{
      \frac{\log(s\log(n))}{n}
    }
  \end{aligned}
\end{equation}
with probability at least $1-\Ocal(\log^{-1}(n))$.

The second term in \eqref{eq:lem:deviation:beta_t:proof:0} is handled
by \eqref{eq:event_e3} with $\eta=\log^{-1}(n)$. Combining with
\eqref{eq:lem:deviation:beta_t:proof:bound:t1}, we obtain
\begin{equation}
  \label{eq:lem:deviation:beta_t:proof:bound:t0}
  \begin{aligned}
    \abr{\eb_a'\hSti\hmt - \eb_a'\Sti\mt}  
    & \leq C_1\sqrt{
      \rbr{\Sti}_{aa}\rbr{1\vee\snbt}
      \frac{\log(s\log(n))}{n}
    } \\
    & \quad + C_2 
    \abr{\eb_a'\Sti\mt}
    \sqrt{
      \frac{\log(s\log(n))}{n}
    }
  \end{aligned}
\end{equation}
with probability at least $1-\Ocal(\log^{-1}(n))$.
This completes the proof.
\end{proof}

\begin{lemma}
\label{lem:hsti_sbt}
The probability of the event
\begin{equation*}
\label{eq:hsti_sbt}
\begin{aligned}
\bigcap_{a\in[s]}\Bigg\{
&|\eb_a'(\hSti - \Sti)\sbt|
\\
& \ 
\leq C \left(
  \sqrt{(\Sti)_{aa}\Lambda_{\min}^{-1}(\St)s}
  \vee
  |\eb_a'\Sti\sbt|
\right)
    \sqrt{\frac{\log(s\log(n))}{n}}
\Bigg\}
\end{aligned}
\end{equation*}
is at least  $1-2\log^{-1}(n)$ for $n$ sufficiently large.
\end{lemma}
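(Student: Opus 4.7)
My plan is to mimic the structure used for $\eb_a'\hSti\hmt$ in Lemma~\ref{lem:deviation:beta_t}, replacing the random vector $\hmt$ by the deterministic vector $\sbt$; the proof becomes cleaner because no outer conditioning on $\hmt$ is needed. Concretely, introduce the ratios
\[
 r_a = \frac{\eb_a'\hSti\sbt}{\eb_a'\hSti\eb_a}, \qquad \rho_a = \frac{\eb_a'\Sti\sbt}{\eb_a'\Sti\eb_a},
\]
and write the algebraic identity
\[
 \eb_a'\hSti\sbt - \eb_a'\Sti\sbt
 \;=\; \eb_a'\hSti\eb_a\,(r_a - \rho_a)\;+\;\bigl(\eb_a'\hSti\eb_a - \eb_a'\Sti\eb_a\bigr)\,\rho_a.
\]
So it suffices to control each of the two summands on the right.

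For the first summand, I would appeal to Theorem 3 of \cite{Bodnar08properties} (as already done in the proofs of Lemma~\ref{lem:l1_norm} and Lemma~\ref{lem:deviation:beta_t}) to obtain
\[
 \sqrt{\tfrac{n-s}{q_a}}\,(r_a - \rho_a) \sim t_{n-s}, \qquad
 q_a \;=\; \frac{\sbt'\bigl(\Sti - \tfrac{\Sti\eb_a\eb_a'\Sti}{\eb_a'\Sti\eb_a}\bigr)\sbt}{\eb_a'\Sti\eb_a} \;\leq\; \frac{\sbt'\Sti\sbt}{\eb_a'\Sti\eb_a}.
\]
Lemma~\ref{lem:t-distribution} then yields $|r_a - \rho_a| \leq C\sqrt{q_a\,\log(s\log n)/n}$ on an event of probability at least $1-(s\log n)^{-1}$. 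Using $\sbt'\Sti\sbt \leq \Lambda_{\min}^{-1}(\St)\,s$ (since $\|\sbt\|_2^2 = s$) together with $\eb_a'\hSti\eb_a \leq C\,\eb_a'\Sti\eb_a = C(\Sti)_{aa}$ coming from \eqref{eq:lem:ratio_chi:1}, the first summand is bounded, on the same event, by a constant multiple of $\sqrt{(\Sti)_{aa}\,\Lambda_{\min}^{-1}(\St)\,s\,\log(s\log n)/n}$.

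For the second summand, I would factor $|\eb_a'\hSti\eb_a - \eb_a'\Sti\eb_a|\,|\rho_a| = \eb_a'\hSti\eb_a\,\bigl|1 - \tfrac{\eb_a'\Sti\eb_a}{\eb_a'\hSti\eb_a}\bigr|\,|\rho_a|$ and apply \eqref{eq:lem:ratio_chi:1}--\eqref{eq:lem:ratio_chi:2} to get a constant times $|\eb_a'\Sti\sbt|\sqrt{\log(s\log n)/n}$ on an event of probability $1-\mathcal{O}(\log^{-1} n)$. A union bound over the $s$ coordinates $a \in [s]$ gives probability at least $1 - 2\log^{-1}(n)$ for $n$ large enough, and combining the two summands into the maximum recovers exactly the statement of the lemma.

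The main technical obstacle is the first summand: one has to verify that the Bodnar-type result applies verbatim with $\sbt$ in place of $\hmt$, and that the noncentrality-style factor $q_a$ can be replaced by the clean bound $\Lambda_{\min}^{-1}(\St)\,s/(\Sti)_{aa}$ without losing a factor depending on the unknown sign pattern. Everything else is a routine application of the tail bounds already developed in this appendix.
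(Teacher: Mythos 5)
Your proposal is correct and follows essentially the same route as the paper's own proof: the identical two-term decomposition via the ratios $r_a,\rho_a$, the Bodnar $t_{n-s}$ representation with the same $q_a$, the bound $q_a \leq (\Sti)_{aa}^{-1}\sbt'\Sti\sbt \leq (\Sti)_{aa}^{-1}\Lambda_{\min}^{-1}(\St)s$, Lemma~\ref{lem:ratio_chi} for the second summand, and a union bound over $a\in[s]$. Your final constant-factor bookkeeping $(\Sti)_{aa}\sqrt{q_a}\leq\sqrt{(\Sti)_{aa}\Lambda_{\min}^{-1}(\St)s}$ is in fact the correct version of a display that appears with a sign typo in the exponent in the paper's proof.
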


\begin{proof}
  Write
  \begin{equation}
\label{eq:lem:hsti_sbt:proof:1}
  \begin{aligned}
    |\eb_a'&\hSti\sbt - \eb_a'\Sti\sbt| \\
    & \leq \eb_a'\hSti\eb_a \left|     
       \frac{\eb_a'\hSti\sbt}{\eb_a'\hSti\eb_a}
       -
       \frac{\eb_a'\Sti\sbt}{\eb_a'\Sti\eb_a}
       \right| \\
    & \quad + 
    \eb_a'\hSti\eb_a
    \left| \frac{\eb_a'\Sti\eb_a}{\eb_a'\hSti\eb_a} - 1 \right|
    \left| \frac{\eb_a'\Sti\sbt}{\eb_a'\Sti\eb_a} \right|.
  \end{aligned}
  \end{equation}  
As in the proof of Lemma~\ref{eq:lem:l1_norm:statement}, we can show that
\begin{equation*}
    \sqrt{\frac{n-s}{q_a}}\left(
    \frac{\eb_a'\hSti\sbt}{\eb_a'\hSti\eb_a}
    -
    \frac{\eb_a'\Sti\sbt}{\eb_a'\Sti\eb_a}
    \right)
    \sim
    t_{n-s},
\end{equation*}
where 
\begin{equation*}
q_a = 
\frac{\sbt'\left(\Sti-\frac{\Sti\eb_a\eb_a'\Sti}{\eb_a'\Sti\eb_a}\right)\sbt}
{\eb_a'\Sti\eb_a}.
\end{equation*}
Therefore,
\begin{equation}
\label{eq:lem:hsti_sbt:proof:4}
    \Bigg|
    \frac{\eb_a'\hSti\sbt}{\eb_a'\hSti\eb_a}
    -
    \frac{\eb_a'\Sti\sbt}{\eb_a'\Sti\eb_a}
    \Bigg|    
     \leq C
    \sqrt{
      q_a
      \frac{\log(s\log(n))}{n}}
\end{equation}
with probability $1 - (s\log(n))^{-1}$.  Combining
Lemma~\ref{lem:ratio_chi} and \eqref{eq:lem:hsti_sbt:proof:4}, we can
bound the right hand side of Equation~\eqref{eq:lem:hsti_sbt:proof:1} as
\begin{equation*}
  \begin{aligned}
    |\eb_a'&\hSti\sbt - \eb_a'\Sti\sbt| \\
    & \leq C \rbr{
            (\Sti)_{aa}\sqrt{q_a} \vee
            |\eb_a'\Sti\sbt|
         }
    \sqrt{\frac{\log(s\log(n))}{n}}\\
    & \leq C \rbr{
      \sqrt{(\Sti)_{aa}^{-1}\Lambda_{\min}^{-1}(\St)s}
      \vee
            |\eb_a'\Sti\sbt|
         }
    \sqrt{\frac{\log(s\log(n))}{n}}
  \end{aligned}
\end{equation*}
where the second inequality follows from 
\[
q_a \leq
(\Sti)_{aa}^{-1}\sbt'\Sti\sbt \leq
(\Sti)_{aa}^{-1}\Lambda_{\min}^{-1}(\St)s.
\] An application of the union bound gives the desired result.
\end{proof}

\section{Tail Bounds For Certain Random Variables}

In this section, we collect useful results on tail bounds of various
random quantities used throughout the paper. We start by stating a
lower and upper bound on the survival function of the standard normal
random variable. Let $Z \sim \Ncal(0,1)$ be a standard normal random
variable. Then for $t > 0$
\begin{equation}
  \label{eq:tail-bound-normal}
  \frac{1}{\sqrt{2\pi}} \frac{t}{t^2+1}\exp(-t^2/2)
  \leq \PP(Z > t) 
  \leq \frac{1}{\sqrt{2\pi}}\frac{1}{t} \exp(-t^2/2).
\end{equation}

Next, we collect results concerning tail bounds for central $\chi^2$
random variables.
\begin{lemma}[\cite{Laurent00adaptive}] Let $X \sim \chi^2_d$. For all
  $x \geq 0$,
\begin{align}
  \label{eq:chi-central-upper}
  \PP[X - d \geq 2 \sqrt{dx} + 2x] & \leq \exp(-x)\\
  \label{eq:chi-central-lower}
  \PP[X - d \leq -2 \sqrt{dx}] & \leq \exp(-x).
\end{align}
\end{lemma}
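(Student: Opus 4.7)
The plan is to prove both inequalities by the Chernoff (moment generating function) method. Since $X \sim \chi^2_d$, we may write $X = \sum_{i=1}^d Z_i^2$ with $Z_i \iidsim \Ncal(0,1)$, and the moment generating function is
\[
\E\sbr{e^{\lambda X}} = (1-2\lambda)^{-d/2}, \qquad \lambda < 1/2.
\]
The tail bounds will follow by choosing an optimal $\lambda$ in the resulting Chernoff inequalities and then invoking an elementary inequality to match the stated form.

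For the upper tail \eqref{eq:chi-central-upper}, I would set $t = 2\sqrt{dx}+2x$ and apply Markov's inequality to obtain
\[
\P[X \geq d+t] \leq \exp\rbr{-\lambda(d+t)} (1-2\lambda)^{-d/2}, \qquad 0 < \lambda < 1/2.
\]
Minimizing the exponent over $\lambda$ gives $\lambda^\star = \frac{t}{2(d+t)}$, and substituting back reduces the bound to
\[
\P[X \geq d+t] \leq \exp\rbr{-\tfrac{t}{2} + \tfrac{d}{2}\log(1+t/d)}.
\]
Setting $y = \sqrt{x/d}$ so that $t/d = 2y + 2y^2$, it suffices to verify $\log(1+2y+2y^2) \leq 2y$ for all $y \geq 0$, which follows from the Taylor expansion $e^{2y} = 1 + 2y + 2y^2 + \tfrac{4}{3}y^3 + \cdots \geq 1 + 2y + 2y^2$. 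This yields $-t/2 + (d/2)\log(1+t/d) \leq -x$, which is exactly \eqref{eq:chi-central-upper}.

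For the lower tail \eqref{eq:chi-central-lower}, I would apply Markov's inequality to $e^{-\lambda X}$ with $\lambda > 0$:
\[
\P[X \leq d-t] \leq \exp\rbr{\lambda(d-t) - (d/2)\log(1+2\lambda)^{-1}} \cdot \text{(correction)},
\]
more precisely $\P[X \leq d-t] \leq \exp(\lambda t) \cdot \E[e^{-\lambda(X-d)}] = \exp(\lambda t - \lambda d)(1+2\lambda)^{-d/2}$. Optimizing over $\lambda > 0$ and using the inequality $-\log(1+u) \leq -u + u^2/2$ for $u \geq 0$ (applied with $u = 2\lambda$) gives a cumulant bound $-d\lambda + (d/2)\log(1+2\lambda)^{-1}$ controlled by $d\lambda^2$ (without the $(1-2\lambda)$ correction that appeared in the upper tail, because the series for $-\log(1+u)$ is easier to control than $-\log(1-u)$). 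Choosing $\lambda = \sqrt{x/d}$ and $t = 2\sqrt{dx}$ then produces the bound $\exp(-x)$.

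The main obstacle, and the only place where any real work is needed, is the elementary inequality step that converts the explicit Chernoff exponent into the clean form $-x$. For the upper tail this is the verification $\log(1+2y+2y^2) \leq 2y$ for all $y \geq 0$, and for the lower tail it is the analogous but simpler comparison that uses $-\log(1+u) \leq -u + u^2/2$. Everything else (the MGF computation, the Chernoff step, and the optimization over $\lambda$) is routine. Since the result is exactly Lemma~1 of \citet{Laurent00adaptive}, one could also just cite that reference rather than repeat the short calculation.
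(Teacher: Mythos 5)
Your Chernoff-method argument is correct in substance, and both elementary verifications check out: for the upper tail, $1+2y+2y^2\leq e^{2y}$ for $y\geq 0$ does reduce the optimized exponent $-t/2+(d/2)\log(1+t/d)$ to $-x$ when $t=2\sqrt{dx}+2x$; for the lower tail, bounding $-\tfrac{d}{2}\log(1+2\lambda)\leq -d\lambda+d\lambda^2$ and taking $\lambda=\sqrt{x/d}$, $t=2\sqrt{dx}$ gives exponent $-\lambda t+d\lambda^2=-x$. One caution: your displayed lower-tail bound has the signs scrambled. The Markov step gives
\begin{equation*}
\PP[X\leq d-t]\;\leq\; e^{\lambda(d-t)}\,\EE\bigl[e^{-\lambda X}\bigr]\;=\;\exp\bigl(\lambda d-\lambda t\bigr)(1+2\lambda)^{-d/2},
\end{equation*}
not $\exp(\lambda t-\lambda d)(1+2\lambda)^{-d/2}$; with the correct signs the $\lambda d$ term cancels against the $-d\lambda$ from the logarithm bound and the computation closes as you intend, so this is a typo rather than a gap. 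As for comparison with the paper: there is nothing to compare, since the paper states this lemma as a quoted result of \citet{Laurent00adaptive} and gives no proof. Your argument is essentially the original Laurent--Massart proof (their Lemma 1), so citing the reference, as the paper does, is entirely adequate; writing out the short Chernoff calculation as you have is a legitimate self-contained alternative.
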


\begin{lemma}[\cite{johnstone2009consistency}]
  Let $X \sim \chi^2_d$, then
  \begin{equation}
    \label{eq:chi-central-upper-johnstone}
    \mathbb{P}[|d^{-1}X -1| \geq x] \leq \exp(-\frac{3}{16}dx^2), \quad x
    \in [0, \frac{1}{2}).  
  \end{equation}
\end{lemma}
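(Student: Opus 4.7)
The plan is to derive both tails via Chernoff's method, then extract the constant $3/16$ through a short calculus argument. The starting point is the moment generating function $M(t) = \mathbb{E}[e^{tX}] = (1-2t)^{-d/2}$ of $X \sim \chi^2_d$, valid for $t < 1/2$. First I would bound the upper tail by $\PP[X \geq d(1+x)] \leq \inf_{t \in (0,1/2)} (1-2t)^{-d/2}\exp(-td(1+x))$; the optimizer is $t^\star = x/(2(1+x))$, yielding $\exp\bigl(-\tfrac{d}{2}[x - \log(1+x)]\bigr)$. A symmetric Chernoff computation with $t < 0$ handles the lower tail and produces $\PP[X \leq d(1-x)] \leq \exp\bigl(-\tfrac{d}{2}[-x - \log(1-x)]\bigr)$.

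Next I would establish the two elementary inequalities: for $x \in [0, 1/2]$,
\[
x - \log(1+x) \geq \tfrac{3}{8}x^2 \qquad \text{and} \qquad -x - \log(1-x) \geq \tfrac{3}{8}x^2.
\]
For the first, set $g(x) = x - \log(1+x) - \tfrac{3}{8}x^2$; then $g(0) = 0$ and $g'(x) = x(1-3x)/(4(1+x))$, so $g$ rises on $[0,1/3]$ and falls on $[1/3,1/2]$, and its minimum on $[0,1/2]$ is attained at $x = 1/2$, where a direct numerical check $\tfrac{1}{2} - \log(3/2) \geq \tfrac{3}{32}$ closes it out. For the second, $h(x) = -x - \log(1-x) - \tfrac{3}{8}x^2$ has $h'(x) = x(1+3x)/(4(1-x)) \geq 0$ on $[0,1/2)$, so $h(x) \geq h(0) = 0$. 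Plugging each inequality into the corresponding Chernoff bound gives $\tfrac{d}{2}\cdot\tfrac{3}{8}x^2 = \tfrac{3}{16}dx^2$ in the exponent of each tail, and combining the two tails by a union bound delivers the two-sided statement.

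The main obstacle is the sharpness of the constant $3/16$. A naive Taylor expansion gives $x - \log(1+x) \sim x^2/2$ near zero and would suggest the better constant $dx^2/4$, but that estimate degrades away from the origin and cannot be used uniformly; the point of the constraint $x \in [0, 1/2)$ is precisely to keep the uniform coefficient $3/8$ in front of $x^2$ valid. A minor bookkeeping nuisance is the factor $2$ arising from unioning the two one-sided bounds: it can be absorbed either by a small tightening of the inequalities above on $[0, 1/2)$ or by noting that for the relevant range of $x$ the stated bound implicitly handles it. None of this changes the rate, only the explicit constant.
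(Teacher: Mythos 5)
The paper does not prove this lemma at all --- it is imported verbatim from \citet{johnstone2009consistency} --- so you are supplying an argument where the paper has only a citation. Your Chernoff computations are correct: the optimizer $t^\star = x/(2(1+x))$ does give $\PP[X\geq d(1+x)]\leq \exp\bigl(-\tfrac d2[x-\log(1+x)]\bigr)$, the lower-tail analogue is right, and both elementary inequalities check out (I verified $g'(x)=x(1-3x)/(4(1+x))$, $h'(x)=x(1+3x)/(4(1-x))$, and the endpoint comparison $\tfrac12-\log(3/2)\approx 0.09453 > \tfrac{3}{32}=0.09375$). This yields the two one-sided bounds $\PP[X\geq d(1+x)]\leq e^{-3dx^2/16}$ and $\PP[X\leq d(1-x)]\leq e^{-3dx^2/16}$ (the latter in fact satisfies the stronger $e^{-dx^2/4}$ since $-x-\log(1-x)\geq x^2/2$).

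The genuine gap is the factor of $2$ you wave away at the end. The union bound gives $\PP[|d^{-1}X-1|\geq x]\leq 2e^{-3dx^2/16}$, and neither of your proposed absorptions works. ``A small tightening of the inequalities'' is impossible: the slack in $x-\log(1+x)\geq\tfrac38x^2$ at $x=\tfrac12$ is only about $7.8\times 10^{-4}$, so absorbing $\log 2$ into the exponent of the upper tail would require $\tfrac d2\cdot 7.8\times10^{-4}\geq\log 2$, i.e.\ $d\gtrsim 1.8\times 10^3$; for small $d$ and $x$ near $\tfrac12$ the Chernoff-plus-union route provably cannot produce the stated constant $1$ in front. The two-sided form without the factor $2$ is true but requires slack beyond the naive Chernoff bound (e.g.\ the sub-exponential prefactor in the exact tail), which is not in your argument. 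The honest conclusions of your proof are the two one-sided bounds, or the two-sided bound with a leading factor of $2$ --- either of which suffices everywhere the paper invokes \eqref{eq:chi-central-upper-johnstone}, at the cost of replacing $1-\eta$ by $1-2\eta$ in the relevant events. If you want the statement exactly as written, you must cite it rather than derive it this way.
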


The following result provides a tail bound for non-central $\chi^2$
random variable with non-centrality parameter $\nu$.

\begin{lemma}[\cite{birge2001alternative}] Let $X \sim
  \chi^2_d(\nu)$, then for all $x > 0$
\begin{align}
  \label{eq:chi-noncentral-upper}
  \PP[X \geq (d+\nu) + 2 \sqrt{(d+2\nu)x} + 2x] & \leq \exp(-x)\\
  \label{eq:chi-noncentral-lower}
  \PP[X \leq (d+\nu)-2 \sqrt{(d+2\nu)x}] & \leq \exp(-x).
\end{align}
\end{lemma}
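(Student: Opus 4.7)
The plan is to apply the Cram\'er-Chernoff method using the moment generating function (MGF) of the non-central chi-squared distribution. Writing $X = \sum_{i=1}^d Z_i^2$ with $Z_i \sim \Ncal(\mu_i, 1)$ independent and $\sum_i \mu_i^2 = \nu$, a direct computation gives the log-MGF
\[
\psi(t) := \log \E[\exp(tX)] = -\frac{d}{2}\log(1-2t) + \frac{\nu t}{1-2t}, \qquad t < 1/2.
\]
Expanding $-\log(1-2t) = \sum_{k\geq 1}(2t)^k/k$ and $\nu t/(1-2t) = \sum_{k\geq 1}(\nu/2)(2t)^k$, I would derive the sub-gamma bound
\[
\psi(t) - (d+\nu) t = \frac{1}{2}\sum_{k=2}^\infty \rbr{\frac{d}{k} + \nu}(2t)^k \leq \frac{(d+2\nu) t^2}{1-2t},
\]
where the inequality uses $d/k + \nu \leq d/2 + \nu$ for $k \geq 2$. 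The leading Taylor coefficients also recover $\E[X] = d+\nu$ and $\Var(X) = 2(d+2\nu)$, confirming the natural variance scale.

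For the upper tail, Chernoff's inequality gives, for any $t \in (0, 1/2)$,
\[
\PP[X \geq (d+\nu) + y] \leq \exp\rbr{-ty + \frac{(d+2\nu)t^2}{1-2t}}.
\]
Substituting $y = 2\sqrt{(d+2\nu)x} + 2x$ and choosing $t = s/(1+2s)$ with $s = \sqrt{x/(d+2\nu)}$ yields $t/(1-2t) = s$ and $1-2t = 1/(1+2s)$; the exponent then collapses to exactly $-x$, proving \eqref{eq:chi-noncentral-upper}. For the lower tail I would instead use $t = -s \leq 0$, where the bound simplifies to $\psi(t) - (d+\nu)t \leq (d+2\nu)t^2$ (this follows from $1-2t \geq 1$ together with the elementary inequality $u - \log(1+u) \leq u^2/2$ for $u \geq 0$ applied with $u = 2s$). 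Minimizing $tz + (d+2\nu)t^2$ at $t^\star = -z/(2(d+2\nu))$ gives the Gaussian-type bound $\exp(-z^2/(4(d+2\nu)))$, and equating this with $e^{-x}$ produces $z = 2\sqrt{(d+2\nu)x}$, which is \eqref{eq:chi-noncentral-lower}.

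The main obstacle is identifying the tight optimizer so that the exponent collapses to exactly $-x$ rather than to $-cx$ for some $c < 1$. The key trick is the change of variables $u = 2t/(1-2t)$, which transforms the rational exponent into a quadratic in $u$; completing the square then yields the explicit optimizer without messy algebra. An alternative route, avoiding MGF optimization entirely, is the decomposition $X = \sum (Z_i-\mu_i)^2 + 2\sum \mu_i (Z_i - \mu_i) + \nu$: one applies the Laurent--Massart central chi-squared bound \eqref{eq:chi-central-upper} to the first piece and a standard Gaussian deviation inequality to the linear Gaussian second piece, and combines via a union bound. This reproduces the stated inequalities up to constants, but the direct MGF route is preferred because it gives the sharpest form.
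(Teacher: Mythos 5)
Your proof is correct: the log-MGF identity, the sub-gamma bound $\psi(t)-(d+\nu)t\le (d+2\nu)t^2/(1-2t)$ for $t\in[0,1/2)$ via the series expansion, the choice $t=s/(1+2s)$ collapsing the upper-tail exponent to exactly $-x$, and the quadratic bound for $t\le 0$ on the lower tail all check out. The paper does not prove this lemma but cites it from \cite{birge2001alternative}, and your argument is essentially the standard Cram\'er--Chernoff derivation given there (and in \cite{Laurent00adaptive}), so there is nothing further to add.
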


The following Lemma gives a tail bound for a $t$-distributed random variable.

\begin{lemma}
  \label{lem:t-distribution}
  Let $X$ be a random variable distributed as 
  \[
     X \sim \sigma d^{-1/2} t_d,
  \]
  where $t_d$ denotes a $t$-distribution with $d$ degrees of
  freedom. Then
  \[
  |X| \leq C\sqrt{\sigma^2 d^{-1}\log(4\eta^{-1})}
  \]
  with probability at least $1-\eta$.
  
\end{lemma}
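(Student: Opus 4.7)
The plan is to reduce the $t$-distribution to the ratio of a standard normal and a chi-square, then apply tail bounds to each factor separately and combine via a union bound.

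First I would use the standard stochastic representation $t_d \stackrel{d}{=} Z\sqrt{d/V}$, where $Z \sim \Ncal(0,1)$ and $V \sim \chi^2_d$ are independent. Under this representation, the scaling $\sigma d^{-1/2}$ cancels the $\sqrt{d}$ in the numerator and I obtain
\[
X \stackrel{d}{=} \sigma \cdot \frac{Z}{\sqrt{V}}.
\]
Hence it suffices to upper bound $|Z|$ and lower bound $\sqrt{V}$ with high probability.

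For the numerator, I would invoke the standard Gaussian tail bound in \eqref{eq:tail-bound-normal} with $t = \sqrt{2\log(4\eta^{-1})}$ to conclude that $|Z| \leq \sqrt{2\log(4\eta^{-1})}$ with probability at least $1-\eta/2$. For the denominator, I would apply the central chi-square lower tail \eqref{eq:chi-central-lower} with $x = \log(2\eta^{-1})$ to obtain
\[
V \geq d - 2\sqrt{d\log(2\eta^{-1})}
\]
with probability at least $1-\eta/2$. In the regime where $d$ is sufficiently large relative to $\log(\eta^{-1})$ (which is the regime in which the lemma is invoked, since $d$ will be taken to be of order $n-s$), the right-hand side is bounded below by $d/2$.

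A union bound over the two events above then gives
\[
|X| \leq \sigma \cdot \frac{\sqrt{2\log(4\eta^{-1})}}{\sqrt{d/2}} = 2\sqrt{\sigma^2 d^{-1}\log(4\eta^{-1})}
\]
with probability at least $1-\eta$, yielding the claim with $C = 2$. The argument is essentially routine; the only mild subtlety is ensuring we are in a regime where the chi-square lower tail yields $V \gtrsim d$, which is why an absolute constant $C$ (rather than $C=\sqrt{2}$) appears in the statement and why the lemma is implicitly intended for $d$ large compared to $\log(\eta^{-1})$—precisely the regime of all applications in the paper.
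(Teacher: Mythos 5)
Your proposal is correct and follows essentially the same route as the paper: represent the scaled $t$ variable as $\sigma Y/\sqrt{V}$ with $Y\sim\Ncal(0,1)$ and $V\sim\chi^2_d$ independent, bound the numerator by the Gaussian tail \eqref{eq:tail-bound-normal} and the denominator from below by chi-square concentration, and combine via a union bound under the implicit assumption that $d$ is large relative to $\log(\eta^{-1})$. The only cosmetic difference is that you invoke the Laurent--Massart lower tail \eqref{eq:chi-central-lower} where the paper uses \eqref{eq:chi-central-upper-johnstone}; both serve the identical purpose.
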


\begin{proof}
  Let $Y \sim \Ncal(0,1)$ and $Z \sim \chi^2_d$ be two independent
  random variables. Then $X$ is equal in distribution to
  \[
    \frac{\sigma d^{-1/2} Y}{\sqrt{d^{-1}Z}}.
  \]
  Using \eqref{eq:tail-bound-normal}, 
  \[
  |\sigma d^{-1/2} Y| \leq \sigma d^{-1/2}\sqrt{\log(4\eta^{-1})}
  \]
  with probability at least
  $1-\eta/2$. \eqref{eq:chi-central-upper-johnstone} gives 
  \[
  d^{-1}X \geq 1 - \sqrt{\frac{16}{3d}\log(2\eta^{-1})}
  \]
  with probability at least $1-\eta/2$.
  Therefore, for sufficiently large $d$,
  \[
  |X| \leq C\sqrt{\sigma^2 d^{-1}\log(4\eta^{-1})}.
  \]
  
\end{proof}

\bibliography{local}

\end{document}